\tikzset{wavy/.style={decorate, decoration=snake}}
\definecolor{mygreen}{HTML}{3C8031}
\definecolor{apricot}{HTML}{FBB982}
\definecolor{brickred}{HTML}{B6321C}
\definecolor{myblue}{HTML}{1F51FF}
\declaretheorem[name=Theorem, numberwithin=section]{theorem}
\declaretheorem[name=Lemma, sibling=theorem]{lemma}
\declaretheorem[name=Proposition, sibling=theorem]{proposition}
\declaretheorem[name=Definition, sibling=theorem]{definition}
\declaretheorem[name=Corollary, sibling=theorem]{corollary}
\newcommand{\R}{\mathbb{R}}
\newcommand{\N}{\mathbb{N}}
\newcommand{\calc}{\mathcal{C}}
\newcommand{\cald}{\mathcal{D}}
\newcommand{\calm}{\mathcal{M}}
\newcommand{\calh}{\mathcal{H}}
\newcommand{\caln}{\mathcal{N}}
\newcommand{\calf}{\mathcal{F}}
\newcommand{\calb}{\mathcal{B}}
\newcommand{\cala}{\mathcal{A}}
\newcommand{\Ka}{k_{x_1}}
\newcommand{\Kb}{k_{x_2}}
\newcommand{\phiya}{\phi(y_1)}
\newcommand{\phiyb}{\phi(y_2)}
\newcommand{\Cov}{\mathrm{Cov}}
\newcommand{\pg}[1]{\textbf{#1.}}
\DeclareMathOperator*{\argmax}{arg\,max}
\title{Noise-Aware Differentially Private Regression via Meta-Learning}
\author{%
  Ossi Räisä\thanks{Equal contribution.} \\
  University of Helsinki\\
  \texttt{ossi.raisa@helsinki.fi} \\
  \And
  Stratis Markou\footnotemark[1] \\
  University of Cambridge \\
  \texttt{em626@cam.ac.uk} \\
  \And
  Matthew Ashman \\
  University of Cambridge \\
  \texttt{mca39@cam.ac.uk} \\
  \And
  Wessel P. Bruinsma \\
  Microsoft Research AI for Science \\
  \texttt{wessel.p.bruinsma@gmail.com} \\
  \And
  Marlon Tobaben \\
  University of Helsinki \\
  \texttt{marlon.tobaben@helsinki.fi} \\
  \And
  Antti Honkela \\
  University of Helsinki \\
  \texttt{antti.honkela@helsinki.fi} \\
  \And
  Richard E. Turner \\
  University of Cambridge \\
  \texttt{ret26@cam.ac.uk} \\
}
\begin{document}

\maketitle

\begin{abstract}
    Many high-stakes applications require machine learning models that protect user privacy and provide well-calibrated, accurate predictions.
    While Differential Privacy (DP) is the gold standard for protecting user privacy, standard DP mechanisms typically significantly impair performance.
    One approach to mitigating this issue is pre-training models on simulated data before DP learning on the private data.
    In this work we go a step further, using simulated data to train a meta-learning model
    that combines the Convolutional Conditional Neural Process (ConvCNP)
    with an improved functional DP mechanism of \citet{hallDifferentialPrivacyFunctions2013}
    yielding the DPConvCNP.
    DPConvCNP learns from simulated data how to map private data to a DP predictive model in one forward pass,
    and then provides accurate, well-calibrated predictions. 
    We compare DPConvCNP with a DP Gaussian Process (GP) baseline with carefully tuned hyperparameters.
    The DPConvCNP outperforms the GP baseline, especially on non-Gaussian data, yet is much faster at test time and requires less tuning.
\end{abstract}

\section{Introduction}
\label{introduction}
Deep learning has achieved tremendous success across a range of domains, especially in settings where large datasets are publicly available.
However, in many impactful applications such as healthcare, the data may contain sensitive information about users, whose privacy we want to protect.
Differential Privacy \citep[DP;][]{dworkCalibratingNoiseSensitivity2006} is the gold standard framework for protecting user privacy, as it provides strong guarantees on the privacy loss incurred on users participating in a dataset.
However, enforcing DP often significantly impairs performance.
A recently proposed method to mitigate this issue is to pre-train a model on non-private data, e.g.~from a simulator~\citep{tangDifferentiallyPrivateImage2023}, and then fine-tune it under DP on real private 
data~\citep{yuDifferentiallyPrivateFinetuning2021,li2022large,deUnlockingHighAccuracyDifferentially2022}.

We go a step further and train a \emph{meta-learning} model with a DP mechanism inside it (\Cref{fig:schematic}).
While supervised learning is about learning a mapping from inputs to outputs using a learning algorithm,
in meta-learning we learn a learning algorithm directly from the data, by \emph{meta-training}, enabling generalisation to new datasets during \emph{meta-testing}.
Our model is meta-trained on simulated datasets, each of which is split into a \emph{context} and \emph{target} set, learning to make predictions at the target inputs given the context set.
At meta-test-time, the model takes a context set of real data, which is protected by the DP mechanism, and produces noise-aware predictions and accurate uncertainty estimates.

\begin{figure}[t!]
    \vspace{-12mm}
    \begin{subfigure}{0.5\textwidth}
    \centering
        \begin{tikzpicture}[
        noiseedge/.style={->, thick, decorate, decoration={snake, post length=0.9mm, amplitude=0.7mm, segment length=1.5mm}},scale=0.95, every node/.style={transform shape}
        ]
            \newcommand \boxspace {2.3};
            \newcommand \h {1.4};
            \newcommand \sA {0.1};
            
            \node [fill=white,circle,opacity=0] (displacement) at (-2.45*\boxspace, 0) {node};
            
            \node [draw, thick, shape=rectangle, minimum width=68mm, minimum height=35mm, anchor=center, rounded corners, fill=yellow!10, xshift=2mm,yshift=-2mm] at (-\boxspace, 0) {};
            \node [draw, thick, shape=rectangle, minimum width=68mm, minimum height=35mm, anchor=center, rounded corners, fill=yellow!10, xshift=1mm,yshift=-1mm] at (-\boxspace, 0) {};
            \node [draw, thick, shape=rectangle, minimum width=68mm, minimum height=35mm, anchor=center, rounded corners, fill=yellow!10] (train) at (-\boxspace, 0) {};
            
            \node [above=0mm of train] (trainlabel) {\scshape{Train: Simulated or Proxy Data}};
            
            \node [draw, thick, shape=rectangle, anchor=center, rounded corners, fill=white, minimum height=7.5mm, minimum width=17mm] (dpenc1) at ({-1.95*\boxspace}, {\h/2}) {\tiny \begin{tabular}{c} \scshape{DP} \\ \scshape{Encoder} \end{tabular}};
            
            \node [draw, thick, shape=rectangle, anchor=center, rounded corners, fill=white, minimum height=7.5mm, minimum width=16.5mm] (dec1) at ({-0.90*\boxspace}, {\h/2}) {\tiny \scshape{Decoder}};
            
            \node [draw, thick, shape=rectangle, anchor=center, rounded corners, fill=white, minimum height=7.5mm, minimum width=15mm] (loss) at ({-0.03*\boxspace}, {\h/2}) {\tiny \scshape{Loss}};
            
            \draw[-stealth, thick, wavy] (dpenc1) -- (dec1);
            \draw[-stealth, thick] (dec1) -- (loss);
    
            {
            \node [draw, thick, shape=rectangle, minimum width=8.2mm, minimum height=15.5mm, anchor=center, rounded corners, fill=white] (tcx) at ({-1.97*\boxspace-0.4}, {-\h/2}) {};
            
            \node [draw, shape=rectangle, minimum width=0mm, anchor=center, rounded corners, fill=mygreen!40] (tcx1) at ({-1.97*\boxspace-0.4}, {-\h/2 + 0.42}) {\tiny $x^{(c)}_1$}; 
            
            \node [anchor=center, below=0mm of tcx1] at ({-1.97*\boxspace-0.4}, {-\h/2 + 0.42}) {$\vdots$};
            
            \node [draw, shape=rectangle, minimum width=0mm, anchor=center, rounded corners, fill=mygreen!40] (tcxn) at ({-1.97*\boxspace-0.4}, {-\h/2 - 0.42}) {\tiny $x^{(c)}_n$}; 
            
            \node [draw, thick, shape=rectangle, minimum width=8.2mm, minimum height=15.5mm, anchor=center, rounded corners, fill=white] (tcy) at ({-1.97*\boxspace+0.5}, {-\h/2}) {};
            
            \node [draw, shape=rectangle, minimum width=0mm, anchor=center, rounded corners, fill=teal!30] (tcy1) at ({-1.97*\boxspace+0.5}, {-\h/2 + 0.42}) {\tiny $y^{(c)}_1$}; 
            
            \node [anchor=center, below=0mm of tcy1] at ({-1.97*\boxspace+0.5}, {-\h/2 + 0.42}) {$\vdots$};
            
            \node [draw, shape=rectangle, minimum width=0mm, anchor=center, rounded corners, fill=teal!30] (tcyn) at ({-1.97*\boxspace+0.5}, {-\h/2 - 0.42}) {\tiny $y^{(c)}_n$}; 
            }

            {
            \node [draw, thick, shape=rectangle, minimum width=8.2mm, minimum height=15.5mm, anchor=center, rounded corners, fill=white] (ttx) at ({-0.90*\boxspace}, {-\h/2}) {};
            
            \node [draw, shape=rectangle, minimum width=0mm, anchor=center, rounded corners, fill=mygreen!40] (ttx1) at ({-0.90*\boxspace}, {-\h/2 + 0.42}) {\tiny $x^{(t)}_1$}; 
            
            \node [anchor=center, below=0mm of ttx1] at ({-0.90*\boxspace}, {-\h/2 + 0.42}) {$\vdots$};
            
            \node [draw, shape=rectangle, minimum width=0mm, anchor=center, rounded corners, fill=mygreen!40] (ttxn) at ({-0.90*\boxspace}, {-\h/2 - 0.42}) {\tiny $x^{(t)}_m$}; 
            
            \node [draw, thick, shape=rectangle, minimum width=8.2mm, minimum height=15.5mm, anchor=center, rounded corners, fill=white] (tty) at ({-0.03*\boxspace}, {-\h/2}) {};
            
            \node [draw, shape=rectangle, minimum width=0mm, anchor=center, rounded corners, fill=teal!30] (tty1) at ({-0.03*\boxspace}, {-\h/2 + 0.42}) {\tiny $y^{(t)}_1$}; 
            
            \node [anchor=center, below=0mm of tty1] at ({-0.03*\boxspace}, {-\h/2 + 0.42}) {$\vdots$};
            
            \node [draw, shape=rectangle, minimum width=0mm, anchor=center, rounded corners, fill=teal!30] (ttyn) at ({-0.03*\boxspace}, {-\h/2 - 0.42}) {\tiny $y^{(t)}_m$}; 
            }
    
            \coordinate[above=4mm of loss] (lossa);
            \coordinate[above=4mm of dec1] (dec1a);
            \coordinate[above=4mm of dpenc1] (dpenc1a);
            \draw [-stealth, dashed, thick, rounded corners=5pt] (loss) -- (lossa) -- (dec1a) -- (dpenc1a) -- (dpenc1);
            \draw [-stealth, dashed, thick, rounded corners=5pt] (loss) -- (lossa) -- (dec1a) -- (dec1);
            
            \draw [-stealth, thick] (tcx) -- (-4.93, 0.33);
            \draw [-stealth, thick] (tcy) -- (-4.03, 0.33);
            \draw [-stealth, thick] (ttx) -- (dec1);
            \draw [-stealth, thick] (tty) -- (loss);
        \end{tikzpicture}
    \end{subfigure}
    \begin{subfigure}{0.5\textwidth}
        \centering
        \begin{tikzpicture}[
        noiseedge/.style={->, thick, decorate, decoration={snake, post length=0.9mm, amplitude=0.7mm, segment length=1.5mm}},scale=0.95, every node/.style={transform shape}
        ]
            \newcommand \boxspace {2.3};
            \newcommand \h {1.4};
            \newcommand \sA {0.1};
            
            \node [] (train) at (0.45*\boxspace, -0.2) {};

            \node [draw, thick, shape=rectangle, minimum width=68mm, minimum height=35mm, anchor=center, rounded corners, fill=pink!30, right=3mm of train] (test) {};
            
            \node [above=0mm of test] (testlabel) {\scshape{Test: Real Sensitive Data}};
            
            \node [draw, thick, shape=rectangle, anchor=center, rounded corners, fill=white, minimum height=7.5mm, minimum width=17mm] (dpenc2) at ({1.13*\boxspace}, {\h/2}) {\tiny \begin{tabular}{c} \scshape{DP} \\ \scshape{Encoder} \end{tabular}};
            
            \node [draw, thick, shape=rectangle, anchor=center, rounded corners, fill=white, minimum height=7.5mm, minimum width=16.5mm] (dec2) at ({2.18*\boxspace}, {\h/2}) {\tiny \scshape{Decoder}};
            
            \node [draw, thick, shape=rectangle, anchor=center, rounded corners, fill=white, minimum height=7.5mm, minimum width=15mm] (pred) at ({3.08*\boxspace}, {\h/2}) {\tiny \scshape{Predictions}};

            \draw[-stealth, thick, wavy] (dpenc2) -- (dec2);
            \draw[-stealth, thick] (dec2) -- (pred);
            
            {
            \node [draw, thick, magenta, shape=rectangle, minimum width=8.2mm, minimum height=15.5mm, anchor=center, rounded corners, fill=white] (ecx) at ({1.13*\boxspace-0.4}, {-\h/2}) {};
            
            \node [draw, shape=rectangle, minimum width=0mm, anchor=center, rounded corners, fill=mygreen!40] (ecx1) at ({1.13*\boxspace-0.4}, {-\h/2 + 0.42}) {\tiny $x^{(c)}_1$}; 
            
            \node [anchor=center, below=0mm of ecx1] at ({1.13*\boxspace-0.4}, {-\h/2 + 0.42}) {$\vdots$};
            
            \node [draw, shape=rectangle, minimum width=0mm, anchor=center, rounded corners, fill=mygreen!40] (ecxn) at ({1.13*\boxspace-0.4}, {-\h/2 - 0.42}) {\tiny $x^{(c)}_n$}; 
            
            \node [draw, thick, shape=rectangle, magenta, minimum width=8.2mm, minimum height=15.5mm, anchor=center, rounded corners, fill=white] (ecy) at ({1.13*\boxspace+0.5}, {-\h/2}) {};
            
            \node [draw, shape=rectangle, minimum width=0mm, anchor=center, rounded corners, fill=teal!30] (ecy1) at ({1.13*\boxspace+0.5}, {-\h/2 + 0.42}) {\tiny $y^{(c)}_1$}; 
            
            \node [anchor=center, below=0mm of ecy1] at ({1.13*\boxspace+0.5}, {-\h/2 + 0.42}) {$\vdots$};
            
            \node [draw, shape=rectangle, minimum width=0mm, anchor=center, rounded corners, fill=teal!30] (ecyn) at ({1.13*\boxspace+0.5}, {-\h/2 - 0.42}) {\tiny $y^{(c)}_n$}; 
            }

            {
            \node [draw, thick, shape=rectangle, minimum width=8.2mm, minimum height=15.5mm, anchor=center, rounded corners, fill=white] (etx) at ({2.18*\boxspace}, {-\h/2}) {};
            
            \node [draw, shape=rectangle, minimum width=0mm, anchor=center, rounded corners, fill=mygreen!40] (etx1) at ({2.18*\boxspace}, {-\h/2 + 0.42}) {\tiny $x^{(t)}_1$}; 
            
            \node [anchor=center] at ({2.18*\boxspace}, {-\h/2 + 0.02}) {$\vdots$};
            
            \node [draw, shape=rectangle, minimum width=0mm, anchor=center, rounded corners, fill=mygreen!40] (etxn) at ({2.18*\boxspace}, {-\h/2 - 0.42}) {\tiny $x^{(t)}_m$}; ({1.13*\boxspace}, {\h/2})
            }
            
            \draw [-stealth, thick] (ecx) -- ({1.13*\boxspace - 0.40}, 0.32);
            \draw [-stealth, thick] (ecy) -- ({1.13*\boxspace + 0.49}, 0.32);
            \draw [-stealth, thick] (etx) -- (dec2);

            \node[draw, thick, black, shape=rectangle, minimum width=24mm, minimum height=18mm, rounded corners, fill=white, anchor=north west] (legend) at ({2.65*\boxspace-0.4}, 0.2) {};
            
            \node[draw, thick, black, shape=rectangle, minimum width=3.5mm, minimum height=3.5mm, rounded corners, fill=mygreen!40] (inputs) at ({2.80*\boxspace-0.4}, {-\h/10}) {};
            \node[right=0mm of inputs] {\tiny \scshape{Inputs}};
            
            \node[below=0.5mm of inputs, draw, thick, black, shape=rectangle, minimum width=3.5mm, minimum height=3.5mm, rounded corners, fill=teal!40, anchor=north] (outputs) {};
            \node[right=0mm of outputs] {\tiny \scshape{Outputs}};
            
            \node[below=0.5mm of outputs, draw, thick, magenta, shape=rectangle, minimum width=3.5mm, minimum height=3.5mm, rounded corners, fill=white, anchor=north] (protected) {};
            \node[right=0mm of protected] {\tiny \scshape{Protected data}};
            
            \node[below=1mm of protected, rounded corners] (arrowmid) {};
            \node[left=1mm of arrowmid, rounded corners] (arrowleft) {};
            \node[right=1mm of arrowmid, rounded corners] (arrowright) {};
            \draw[noiseedge] (arrowleft) -- (arrowright);
            \node[right=0.5mm of arrowmid] {\tiny \scshape{Clip + Noise}};
            
        \end{tikzpicture}
    \end{subfigure}
    \caption{
    Meta-training (left) and meta-testing (right) using our method.
    We train a model on multiple tasks with non-private (simulated or proxy) data to 
    predict on target $(t)$ points using the context $(c)$ points.
    Crucially, by including a DP mechanism, which clips and adds noise to the data \textit{during training}, the parameter updates (dashed arrow) teach the model to make well-calibrated and accurate predictions in the presence of DP noise.
    At test time, we deploy the model on real data using the same mechanism, which protects the context set with DP guarantees.
    \label{fig:schematic}
    }
\end{figure}
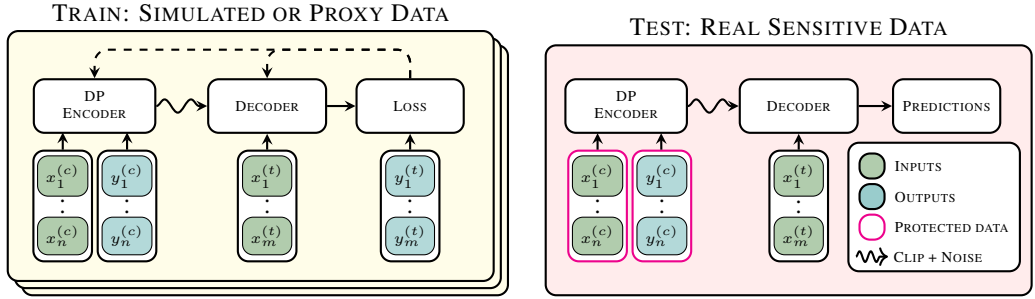
\pg{Neural Processes}
Our method is based on neural processes \citep[NPs;][]{garneloConditionalNeuralProcesses2018}, a model which leverages the flexibility of neural networks to produce well-calibrated predictions in the meta-learning setting.
The parameters of the NP are meta-trained to generalise to unseen datasets, while adapting to new contexts much faster than gradient-based fine-tuning alternatives \citep{finn2017modelagnostic}.

\pg{Convolutional NPs}
We focus on convolutional conditional NPs \citep[ConvCNPs;][]{gordonConvolutionalConditionalNeural2020}, a type of NP that has remarkably strong performance in challenging regression problems.
That is because the ConvCNP is translation equivariant \citep[TE;][]{cohen2016group}, so its outputs change predictably whenever the input data are translated.
This is an extremely useful inductive bias when modelling, for example, stationary data.
The ConvCNP architecture also makes it natural to embed an especially effective DP mechanism inside it using the \emph{functional mechanism}~\citep{hallDifferentialPrivacyFunctions2013} to protect the 
privacy of the context set (\Cref{fig:schematic}).
We call the resulting model the DPConvCNP.

\pg{Training with a DP mechanism}
A crucial aspect of our approach is training the DPConvCNP on non-sensitive data \textit{with the DP mechanism in the training loop}.
The mechanism involves clipping and adding noise, so applying it only during testing would create a mismatch between training and testing.
Training with the mechanism eliminates this mismatch, ensuring calibrated predictions (\Cref{fig:toy_fits_intro}).

\pg{Overview of contributions}
In summary, our main contributions in this work are as follows.
\begin{enumerate}[leftmargin=*]
    \item 
    We introduce the DPConvCNP, a meta-learning model which extends the ConvCNP using the functional 
    DP mechanism \citep{hallDifferentialPrivacyFunctions2013}.
    The model is meta-trained with the mechanism in place, learning to make calibrated predictions from the context data under DP.
    \item
    We improve upon the functional mechanism of \cite{hallDifferentialPrivacyFunctions2013}
    by leveraging Gaussian DP theory~\citep{dongGaussianDifferentialPrivacy2022}, 
    showing that context set privacy can be protected with smaller amounts of noise (at least 25\% lower standard deviation in the settings considered in \Cref{fig:dp-accounting-comparison}).
    We incorporate these improvements into DPConvCNP, but note that they are also of interest in any use case
    of the functional mechanism.
    \item
    We conduct a study on synthetic and sim-to-real tasks.
    Remarkably, even with relatively few context points (a few hundreds) and modest privacy budgets, the predictions of the DPConvCNP are surprisingly close to those of the non-DP optimal Bayes predictor.
    Further, we find that a single DPConvCNP can be trained to generalise across generative processes with different statistics and privacy budgets.
    We also evaluate the DPConvCNP by training it on synthetic data, and testing it on a real dataset in the small data regime.
    In all cases, the DPConvCNP produces well calibrated predictions, and is competitive with a carefully tuned DP Gaussian process baseline.
\end{enumerate}

\section{Related Work}

Training deep learning models on public proxy datasets and then fine-tuning with DP on private data is becoming increasingly common in computer vision and natural language processing applications \citep{yuDifferentiallyPrivateFinetuning2021,li2022large, deUnlockingHighAccuracyDifferentially2022,tobabenEfficacyDifferentiallyPrivate2023}.
However, these approaches rely on the availability of very large non-sensitive datasets.
Because these datasets would likely need to be scraped from the internet, it is unclear whether they are actually non-sensitive~\citep{tramerPositionConsiderationsDifferentially2024}.
On the other hand, other approaches study meta-learning with DP during meta-training \citep{liDifferentiallyPrivateMetaLearning2020,zhouTasklevelDifferentiallyPrivate2022}, but do not enforce privacy guarantees at meta-test time.

Our approach fills a gap in the literature by enforcing privacy of the meta-test data with DP guarantees
(see \Cref{fig:schematic}), and using non-sensitive proxy data during meta-training.
Unlike other approaches which rely on large fine tuning datasets, our method produces well-calibrated predictions even for relatively small datasets (a few hundred datapoints).
In this respect, the work of \citet{smithDifferentiallyPrivateRegression2018}, who study Gaussian process (GP) regression under DP for the small data regime, is perhaps most similar to ours.
However, \citet{smithDifferentiallyPrivateRegression2018} enforce privacy constraints only with respect to the output variables and do not protect the input variables, whereas our approach protects both.

In terms of theory, there is fairly limited prior work on releasing functions with DP guarantees.
Our method is based on the functional DP mechanism of \citet{hallDifferentialPrivacyFunctions2013} which works by adding noise from a GP to a function to be released.
This approach works especially well when the function lies in a reproducing kernel Hilbert space (RKHS), a property which we leverage in the DPConvCNP.
We improve on the original functional mechanism by leveraging Gaussian DP theory of \citet{dongGaussianDifferentialPrivacy2022}.
In related work, \citet{aldaBernsteinMechanismFunction2017a} develop the Bernstein DP mechanism, which adds noise to the coefficients of the Bernstein polynomial of the released function, and \citet{mirshaniFormalPrivacyFunctional2019} generalise the functional mechanism beyond RKHSs. \citet{jiangFunctionalRenyiDifferential2023} derive Rényi differential privacy~\citep[RDP;][]{mironovRenyiDifferentialPrivacy2017} bounds for the mechanism of \citet{hallDifferentialPrivacyFunctions2013}.

\begin{figure}[t!]
    \centering
    \vspace{-12mm}    
    \includegraphics[width=0.87\linewidth,height=26mm]{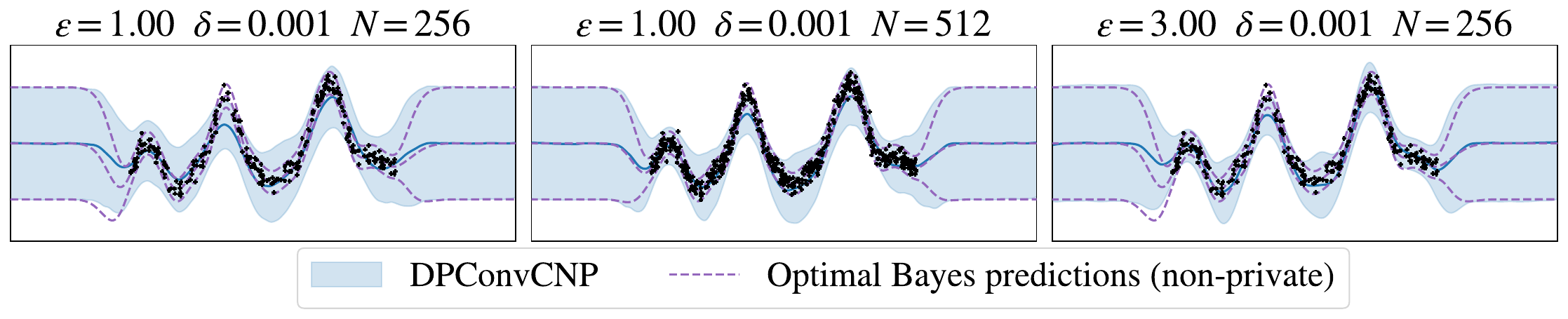}
    \caption{
    Training our proposed model with a DP mechanism inside it, enables the model to make accurate well-calibrated predictions, even for modest privacy budgets and dataset sizes.
    Here, the context data (black) are protected with different $(\epsilon, \delta)$ DP budgets as indicated.
    The model makes predictions (blue) that are remarkably close to the optimal (non-private) Bayes predictor.
    }
    \label{fig:toy_fits_intro}
\end{figure}

\section{Background}

We start by laying the necessary background.
In \Cref{sec:meta_learning_neural_processes}, we outline meta-learning and NPs, focusing on the Conv\-CNP.
In \Cref{sec:background-dp} we introduce DP, and the functional mechanism of \citet{hallDifferentialPrivacyFunctions2013}.
We keep the discussion on DP lightweight, deferring technical details to 
\Cref{sec:privacy-extra}.

\subsection{Meta-learning and Neural Processes} \label{sec:meta_learning_neural_processes}

\pg{Supervised learning}
Let $\mathcal{D}$ be the set of datasets consisting of $(x, y)$-pairs with $x\in \mathcal{X}\subset \R^d$ and $y\in \mathcal{Y} \subset \R$.
The goal of supervised learning is to use a dataset $D\in \mathcal{D}$ to learn appropriate parameters $\theta$ for a conditional distribution $p(y | x, \theta),$ which maximise the predictive log-likelihood on unseen, randomly sampled test pairs $(x^*, y^*),$ i.e. $\mathcal{L}(\theta, (x^*, y^*)) = \log p(y^* | x^*, \theta).$ 
Let us denote the entire algorithm that performs learning, followed by prediction, by $\pi,$ that is $\pi(x^*, D) = p(\cdot | x^*, \theta^*),$ where $\theta^* = \argmax_{\theta} \mathcal{L}(r, D).$
Supervised learning is concerned with designing a hand-crafted $\pi,$ e.g.~picking an appropriate architecture and optimiser, which is trained on a single dataset $D.$

\pg{Meta-learning}
Meta-learning can be regarded as supervised learning of the function $\pi$ itself.
In this setting, $D$ is regarded as part of a single training example, which means that a meta-learning algorithm can handle different $D$ at test time.
Concretely, in meta-learning, we have $\pi_{\theta, \phi}(x^*, D) = p(\cdot | x^*, \theta, r_\phi(D)),$ where $r_\phi$ is now a function that produces task-specific parameters, adapted for $D.$
The meta-training set now consists of a collection of datasets \smash{$(D_m)_{m = 1}^M,$} often referred to as \emph{tasks}.
Each task is partitioned into a context set \smash{$D^{(c)} = (\mathbf{x}^{(c)}, \mathbf{y}^{(c)})$} and a target set \smash{$D^{(t)} = (\mathbf{x}^{(t)}, \mathbf{y}^{(t)}).$}
We refer to \smash{$\mathbf{x}^{(c)}$}and \smash{$\mathbf{y}^{(c)}$} as the \textit{context inputs and outputs} and to \smash{$\mathbf{x}^{(t)}$} and \smash{$\mathbf{y}^{(t)}$} as the \textit{target inputs and outputs}.
To meta-train a meta-learning model, we optimise its predictive log-likelihood, averaged over tasks, i.e.~$\mathbb{E}_D[\mathcal{L}(\theta, \phi, D)] = \mathbb{E}_D[\log \pi_{\theta, \phi}(\mathbf{x}^{(t)}, D^{(c)})(\mathbf{y}^{(t)})].$
Meta-learning algorithms are broadly categorised into two groups, based on the choice of $r_\phi$ \citep{bronskill2020data}.

\pg{Gradient based vs amortised meta-learning}
On one hand, gradient-based methods, such as MAML \citep{finn2017modelagnostic} and its variants (e.g.~\citep{nichol2018firstorder}) rely on gradient-based fine-tuning at test time.
Concretely, these let $r_\phi$ be a function that performs gradient-based optimisation.
For such algorithms, we can enforce DP with respect to a meta-test time dataset by fine-tuning with a DP optimisation algorithm, such as DP-SGD \citep{Abadi_2016}.
While generally effective, such approaches can require significant resources for fine-tuning at meta-test-time, as well as careful DP hyper-parameter tuning to work at all.
On the other hand, there are amortised methods, such as neural processes \citep{garneloConditionalNeuralProcesses2018}, prototypical networks \citep{snellPrototypicalNetworksFewshot2017}, and matching networks \citep{vinyals2016matching}, in which $r_\phi$ is a learnable function, such as a neural network.
This approach has the advantage that it requires far less compute and memory at meta-test-time.
In this work, we focus on neural processes (NPs), and show how $r_\phi$ can be augmented with a DP mechanism to make well calibrated predictions, while protecting the context data at meta test time.

\pg{Neural Processes}
Neural processes (NPs) are a type of model which leverage the flexibility of neural networks to produce well calibrated predictions.
A range of NP variants have been developed, including conditional NPs \citep[CNPs;][]{garneloConditionalNeuralProcesses2018}, latent-variable NPs \citep[LNPs;][]{garneloNeuralProcesses2018}, Gaussian NPs \citep[GNPs;][]{markou2022practical}, score-based NPs \cite{dutordoir2023neural}, and autoregressive NPs \citep{bruinsma2023autoregressive}.
In this work, we focus on CNPs because these are ideally suited for our purposes, but our framework can be extended to other variants.
A CNP consists of an \textit{encoder} \smash{$\texttt{enc}_{\phi}$,} and a \textit{decoder} $\texttt{dec}_{\theta}$.
The encoder is a neural network which ingests a context set \smash{$D^{(c)} \in \mathcal{D}$} and outputs a representation $r$ in some representation space $\mathcal{R}.$
Two concrete examples of such encoders are DeepSets \citep{zaheerDeepSets2017} and SetConv layers \citep{gordonConvolutionalConditionalNeural2020}.
The decoder is another neural network, with parameters $\theta,$ which takes the representation $r$ together with target inputs \smash{$\mathbf{x}^{(t)}$} and produces predictions for the corresponding \smash{$\mathbf{y}^{(t)}.$}
In summary
\begin{equation}
\pi_{\phi,\theta}(\mathbf{x}^{(t)}, D^{(c)}) = \texttt{dec}_{\theta}(\mathbf{x}^{(t)}, r),\quad r = \texttt{enc}_{\phi}(D^{(c)}).
\end{equation}
In CNPs, a standard choice, which we also use here, is to let \smash{$\pi_{\phi,\theta}(\mathbf{x}^{(t)}, D^{(c)})$} return a mean \smash{$\mu_{\phi,\theta}(x^{(t)}, D^{(c)})$} and a variance \smash{$\sigma_{\phi,\theta}^2(x^{(t)}, D^{(c)}),$} to parameterise a predictive distribution that factorises across the target points \smash{$y^{(t)} | x^{(t)} \sim \caln(\mu_{\phi,\theta}(x^{(t)},D^{(c)}), \sigma_{\phi,\theta}^2(x^{(t)}, D^{(c)})).$}
We note that our framework straightforwardly extends to more complicated 
$\pi_{\phi,\theta}(\mathbf{x}^{(t)}, D^{(c)})$.
To train a CNP to make accurate predictions, we can optimise a log-likelihood objective \cite{garneloConditionalNeuralProcesses2018} such as
\begin{equation}
    \label{eq:loss}
    \mathcal{L}(\theta, \phi) 
    = \mathbb{E}_{D}\left[\log \caln\left(\mathbf{y}_m^{(t)} 
    | \mu_{\phi, \theta}(\mathbf{x}_m^{(t)}, D^{(c)}), 
    \sigma_{\phi, \theta}^2(\mathbf{x}^{(t)}, D^{(c)})\right)\right],
\end{equation}
\begin{wrapfigure}{r}{0.45\textwidth}
\begin{minipage}{\linewidth}
  \vspace{-8mm}
  \begin{algorithm}[H]
    \caption{Meta-training a neural process.}
    \label{alg:meta-training-np}
    \begin{algorithmic}
        \STATE \textbf{Input:} Simulated datasets $(D_m)^{M}_{m=1}$, encoder $\texttt{enc}_\phi$, decoder $\texttt{dec}_\theta,$ iterations $T$, optimiser $\texttt{opt}$
        \STATE \textbf{Output:} Optimised parameters $\phi, \theta$
        \FOR{ $i \in \{1\dotsc T\}$ }
        \STATE Choose $D$ from $(D_m)^M_{m=1}$ randomly
        \STATE $D^{(c)}, D^{(t)} \gets D$
        \STATE $\mathbf{x}^{(t)}, \mathbf{y}^{(t)} \gets D^{(t)}$
        \STATE $\boldsymbol{\mu}, \boldsymbol{\sigma}^2 \gets \texttt{dec}_{\theta}(\mathbf{x}^{(t)}, \texttt{enc}_\phi(D^{(c)}))$
        \STATE $\mathcal{L}(\theta, \phi) \gets \log \caln(\mathbf{y}^{(t)} 
        | \boldsymbol{\mu}, \boldsymbol{\sigma}^2)$
        \STATE $\phi, \theta \gets \texttt{opt}(\phi, \theta, \nabla_{\phi, \theta} \mathcal{L})$
        \ENDFOR
        \STATE \textbf{Return} $\phi, \theta$
    \end{algorithmic}
  \end{algorithm}
  \vspace{-4mm}
  \begin{algorithm}[H]
    \caption{Meta-testing a neural process.}
    \label{alg:meta-testing-np}
    \begin{algorithmic}
        \STATE \textbf{Input:} Real context $D^{(c)}$, $\texttt{enc}_\phi$, $\texttt{dec}_\theta$
        \STATE \textbf{Output:} Predictive $\mu, \sigma,$ with domain $\mathcal{X}$
        \STATE $\mu(\cdot), \sigma(\cdot) \gets \texttt{dec}_{\theta}(\cdot, \texttt{enc}_\phi(D^{(c)}))$
        \STATE \textbf{Return} $\mu, \sigma$
    \end{algorithmic}
  \end{algorithm}
  \vspace{-10mm}
\end{minipage}
\end{wrapfigure}
where the expectation is taken over the distribution over tasks $D$.
This objective is optimised by presenting each task $D_m$ to the CNP, computing the gradient of the loss with back-propagation, and updating the parameters 
$(\phi,\theta)$ of the CNP with any first-order optimiser
(see alg. \ref{alg:meta-training-np}).
This process trains the CNP to make well-calibrated predictions for \smash{$D^{(t)}$} given \smash{$D^{(c)}.$}
At test time, given a new \smash{$D^{(c)},$} we can use \smash{$\pi_{\phi,\theta}$} which can be queried at arbitrary target inputs, to obtain corresponding predictions (alg. \ref{alg:meta-testing-np}).

\pg{Convolutional CNPs}
Whenever we have useful inductive biases or other prior knowledge, we can leverage these by building them directly into the encoder and the decoder of the CNP.
Stationarity is a powerful inductive bias that is often encountered in potentially sensitive applications such as time series or spatio-temporal regression.
Whenever the generating process is stationary, the corresponding Bayesian predictive posterior is TE \citep{foong2020meta}.
ConvCNPs leverage this inductive bias using TE architectures \citep{cohen2016group,huang2023practical}.

\pg{ConvCNP encoder}
To achieve TE, the ConvCNP encoder produces an $r$ that is itself a TE function.
Specifically, $\texttt{enc}_\phi$ maps the context \smash{$D^{(c)} = ((x_n^{(c)}, y_n^{(c)}))_{n=1}^N$} to the function $r: \mathcal{X} \to \mathbb{R}^2$
\begin{equation}
    r(x) = \begin{bmatrix} r^{(d)}(x) \\ r^{(s)}(x) \end{bmatrix} 
    = \sum_{n = 1}^N \begin{bmatrix} 1 \\ y_n^{(c)} \end{bmatrix} \psi \left(\frac{x - x_n^{(c)}}{\lambda}\right), \label{eq:renc}
\end{equation}
where $\psi$ is the Gaussian radial basis function (RBF) and $\phi = \{\lambda\}.$
We refer to the two channels of $r$ as the \textit{density} \smash{$r^{(d)}$} and the \textit{signal} \smash{$r^{(s)}$} channels, which can be viewed as a smoothed version of \smash{$D^{(c)}.$}
The density channel carries information about the inputs of the context data, while the signal channel carries information about the outputs.
This encoder is referred to as the SetConv.

\pg{ConvCNP decoder}
Once $r$ has been computed, it is passed to the decoder which performs three steps.
First, it discretises $r$ using a pre-specified resolution.
Then, it applies a CNN to the discretised signal, and finally it uses an RBF smoother akin to \cref{eq:renc} to make predictions at arbitrary target locations.
The aforementioned steps are all TE so, composing them with the TE encoder produces a TE prediction map \citep{bronstein2021geometric}.
The ConvCNP has universal approximator properties and produces state-of-the-art, well-calibrated predictions \citep{gordonConvolutionalConditionalNeural2020}.

\begin{figure}
    \vspace{-13mm}
    \begin{subfigure}{0.49\textwidth}
        \centering
        \includegraphics[width=\linewidth]{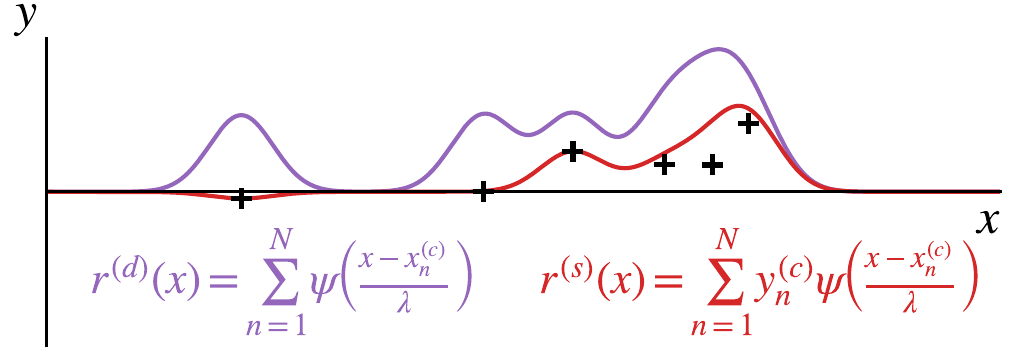}
        \label{fig:setconv}
    \end{subfigure}
    \hfill
    \begin{subfigure}{0.49\textwidth}
        \centering
        \includegraphics[width=\linewidth]{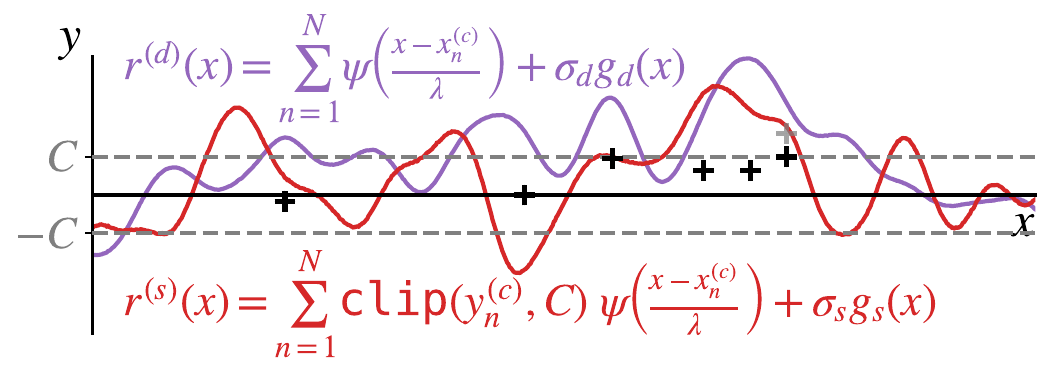}
        \label{fig:dpsetconv}
    \end{subfigure}
    \caption{
        Left; Illustration of the ConvCNP encoder $\texttt{enc}_\phi.$
        Black crosses show an example context set $D^{(c)}.$
        The density channel \smash{$r^{(d)}$} is shown in purple and the signal channel \smash{$r^{(r)}$} is shown in red.
        The representation $r$ consists of concatenating \smash{$r^{(d)}$} and \smash{$r^{(s)}.$}
        Right; Illustration of the DPConvCNP encoder.
        Black crosses show an example context \smash{$D^{(c)},$} clipped with a threshold \smash{$C$} (gray dashed).
        Here, a single point (rightmost) is clipped (gray cross shows value before clipping).
        The density and signal channels are computed and GP noise is added to obtain the  DP representation (red \& purple).
    }
\end{figure}

\subsection{Differential Privacy}\label{sec:background-dp}

Differential privacy~\citep{dworkCalibratingNoiseSensitivity2006, dworkAlgorithmicFoundationsDifferential2014} quantifies the maximal privacy loss to data subjects that can occur when the results of analysis are released.
The loss is quantified by two numbers, $\epsilon$ and $\delta,$ which bound the change in the distribution of the output of an algorithm, when the data of a single data subject in the dataset change.
\begin{definition}\label{def:eps-delta-dp}
    An algorithm $\calm$ is $(\epsilon, \delta)$-DP if for neighbouring $D, D'$ and all measurable sets 
    $S$
    \begin{equation}
        \Pr(\calm(D) \in S) \leq e^\epsilon \Pr(\calm(D') \in S) + \delta.
    \end{equation}
\end{definition}
We consider $D \in \R^{N\times d}$ with $N$ users and $d$ dimensions, and use the \emph{substitution neighbourhood} relation $\sim_S$ where $D \sim_S D'$ if $D$ and $D'$ differ by at most one row. 

\pg{Gaussian DP}
In \cref{sec:func-mech} we discuss the functional mechanism of \citet{hallDifferentialPrivacyFunctions2013}, which we use in the ConvCNP.
However, the original privacy guarantees derived by \citet{hallDifferentialPrivacyFunctions2013} are suboptimal.
We improve upon these using the framework of Gaussian DP \citep[GDP;][]{dongGaussianDifferentialPrivacy2022}.
\citet{dongGaussianDifferentialPrivacy2022} define GDP from a hypothesis testing perspective, which is not necessary for our purposes.
Instead, we present GDP through the following conversion formula between GDP and DP.
\begin{definition}\label{def:gaussian-dp}
    A mechanism $\calm$ is $\mu$-GDP if and only if it is $(\epsilon, \delta(\epsilon))$-DP
    for all $\epsilon \geq 0$, where 
    \begin{equation}
        \delta(\epsilon) = \Phi\left(-\frac{\epsilon}{\mu} + \frac{\mu}{2}\right)
        - e^\epsilon \Phi\left(-\frac{\epsilon}{\mu} - \frac{\mu}{2}\right)
        \label{eq:gdp-definition}
    \end{equation}
    and $\Phi$ is the CDF of the standard Gaussian distribution.
\end{definition}

\pg{Properties of (G)DP}
Differential privacy has several useful properties.
First, \emph{post-processing immunity} guarantees that post-processing the result of a DP algorithm does not cause privacy loss:
\begin{theorem}[\citealt{dworkAlgorithmicFoundationsDifferential2014}]\label{thm:post-processing-immunity}
    Let $\calm$ be an $(\epsilon, \delta)$-DP (or $\mu$-GDP) algorithm and let $f$ 
    be any, possibly randomised, function.
    Then $f\circ \calm$ is $(\epsilon, \delta)$-DP (or $\mu$-GDP).
\end{theorem}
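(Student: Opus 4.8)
The plan is to establish the two guarantees in sequence: first the $(\epsilon, \delta)$-DP claim, and then the $\mu$-GDP claim as an immediate consequence via the conversion formula in \Cref{def:gaussian-dp}. Throughout, fix a pair of neighbouring datasets $D, D'$ and a measurable set $S$ in the codomain of $f$.

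First I would treat the case of a deterministic, measurable $f$. The key observation is that the preimage $f^{-1}(S) = \{z : f(z) \in S\}$ is itself measurable, so the pushed-forward event on $f \circ \calm$ is just an event on the output of $\calm$:
\begin{equation}
    \Pr(f(\calm(D)) \in S) = \Pr(\calm(D) \in f^{-1}(S)).
\end{equation}
Applying the $(\epsilon, \delta)$-DP guarantee of $\calm$ (\Cref{def:eps-delta-dp}) to the measurable set $f^{-1}(S)$ and then rewriting back for $D'$ yields $\Pr(f(\calm(D)) \in S) \leq e^\epsilon \Pr(f(\calm(D')) \in S) + \delta$, which is exactly the required inequality. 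Note that no property of $f$ other than measurability is used.

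Next I would lift this to a randomised $f$ by writing $f(\cdot) = g(\cdot, R)$, where $R$ is an auxiliary source of randomness independent of $\calm$ and $g(\cdot, r)$ is deterministic for each fixed $r$. Conditioning on $R = r$ reduces each instance to the deterministic case, so the inequality $\Pr(g(\calm(D), r) \in S) \leq e^\epsilon \Pr(g(\calm(D'), r) \in S) + \delta$ holds for (almost) every $r$. Integrating against the law of $R$ and using Fubini's theorem to interchange the expectation over $R$ with the event probabilities preserves the bound, since the right-hand side is affine in the conditional probabilities; this delivers the $(\epsilon, \delta)$-DP inequality for $f \circ \calm$.

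Finally, for the $\mu$-GDP case I would exploit the fact that \Cref{def:gaussian-dp} characterises $\mu$-GDP purely as $(\epsilon, \delta(\epsilon))$-DP holding simultaneously for all $\epsilon \geq 0$. Since $\calm$ is $\mu$-GDP, it is $(\epsilon, \delta(\epsilon))$-DP for every $\epsilon$; applying the $(\epsilon, \delta)$-DP result above with $\delta = \delta(\epsilon)$ shows that $f \circ \calm$ is $(\epsilon, \delta(\epsilon))$-DP for every $\epsilon \geq 0$, which is precisely $\mu$-GDP. I expect the only real obstacle to be the measurability bookkeeping in the randomised case, namely justifying the decomposition $f(\cdot) = g(\cdot, R)$ and the interchange of integration with the probability bounds, rather than any genuinely hard inequality, since the DP definition was deliberately designed to be stable under exactly this kind of averaging over independent randomness.
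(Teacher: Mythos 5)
Your proof is correct. The paper does not actually prove this statement---it is quoted from \citet{dworkAlgorithmicFoundationsDifferential2014} without proof---and your argument is the standard one from that reference: reduce to deterministic measurable $f$ via the preimage identity $\Pr(f(\calm(D)) \in S) = \Pr(\calm(D) \in f^{-1}(S))$, then handle randomised $f$ by conditioning on the auxiliary randomness and averaging the affine bound. Your handling of the GDP case is also sound \emph{given the paper's conventions}: because \Cref{def:gaussian-dp} defines $\mu$-GDP as the simultaneous family of $(\epsilon, \delta(\epsilon))$-DP guarantees, closure under post-processing for each fixed $\epsilon$ immediately yields closure for $\mu$-GDP. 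It is worth noting that \citet{dongGaussianDifferentialPrivacy2022} prove this instead via a data-processing inequality for trade-off functions, which is the route one must take if starting from the hypothesis-testing definition of GDP rather than the conversion formula; your shortcut is legitimate here only because the paper adopts the conversion formula as the definition.
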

\emph{Composition} of DP mechanisms refers to running multiple mechanisms on the same data.
When each mechanism can depend on the outputs of the previous mechanisms, the composition
is called \emph{adaptive}.
GDP is particularly appealing because it has a simple and tight composition formula:
\begin{theorem}[\citealt{dongGaussianDifferentialPrivacy2022}]\label{thm:gdp-composition}
    The adaptive composition of $T$ mechanisms that are $\mu_i$-GDP ($i = 1, \dots, T$), is 
    $\mu$-GDP with $\mu = \sqrt{\mu_1^2 + \dotsb + \mu_T^2}.$
\end{theorem}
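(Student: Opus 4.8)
The plan is to prove the statement in the trade-off function formulation of GDP, where composition is cleanest, and then translate back to \Cref{def:gaussian-dp}. Recall that a trade-off function $T(P,Q)(\alpha)$ records the smallest achievable type-II error of any test of $P$ against $Q$ at type-I error $\alpha$, and write $G_\mu := T(\caln(0,1), \caln(\mu,1))$, so that $G_\mu(\alpha) = \Phi(\Phi^{-1}(1-\alpha) - \mu)$. The first step is to verify that the conversion formula in \cref{eq:gdp-definition} is exactly the statement that $\calm$ is $G_\mu$-DP, i.e.\ that for every neighbouring pair $D \sim_S D'$ one has $T(\calm(D), \calm(D')) \geq G_\mu$. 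This is a duality argument: a trade-off function is equivalent to the collection of its supporting lines, and each supporting line of $G_\mu$ corresponds to one $(\epsilon, \delta(\epsilon))$ constraint; the specific $\delta(\epsilon)$ in \cref{eq:gdp-definition} is precisely the lower envelope traced out by these tangents.

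Next I would establish the general composition principle for trade-off functions. Define the tensor product $f \otimes g$ of two trade-off functions as $T(P \times P', Q \times Q')$ whenever $f = T(P,Q)$ and $g = T(P', Q')$; one checks this is well defined. The composition theorem then states that the adaptive composition of $f_i$-DP mechanisms is $(f_1 \otimes \dots \otimes f_T)$-DP. I would prove this by induction on $T$, reducing to the two-fold case. For the two-fold adaptive composition I would condition on the output of the first mechanism: for each realised first output the second mechanism is $f_2$-DP on the same neighbouring pair, and integrating the conditional trade-off functions against the first shows that the joint experiment dominates the product experiment in the partial order on trade-off functions, giving the $f_1 \otimes f_2$ lower bound.

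It then remains to compute $G_{\mu_1} \otimes \dots \otimes G_{\mu_T}$. By definition this is the trade-off function of testing $\caln(0, I_T)$ against $\caln(\boldsymbol{\mu}, I_T)$ with $\boldsymbol{\mu} = (\mu_1, \dots, \mu_T)$. By the Neyman--Pearson lemma the most powerful test rejects on the likelihood ratio, which depends on the data $z$ only through the scalar projection $\langle \boldsymbol{\mu}, z\rangle / \|\boldsymbol{\mu}\|_2$; under the two hypotheses this projection is distributed as $\caln(0,1)$ and $\caln(\|\boldsymbol{\mu}\|_2, 1)$ respectively. Hence the multivariate test reduces to a univariate one, giving $G_{\mu_1} \otimes \dots \otimes G_{\mu_T} = G_{\|\boldsymbol{\mu}\|_2} = G_{\sqrt{\mu_1^2 + \dots + \mu_T^2}}$. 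Combining this identity with the composition theorem, the adaptive composition is $G_{\sqrt{\mu_1^2 + \dots + \mu_T^2}}$-DP, which by the equivalence of the first step is exactly $\sqrt{\mu_1^2 + \dots + \mu_T^2}$-GDP.

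I expect the main obstacle to be the adaptive composition theorem rather than the Gaussian computation. The subtlety is that later mechanisms may depend on earlier outputs, so one cannot simply multiply fixed distributions; the conditioning argument must be made rigorous with a measurability/Fubini step, and one must verify that the tensor product is monotone with respect to the pointwise order on trade-off functions so that the per-realisation bounds assemble into a global bound. The rotation-invariance reduction in the last step, by contrast, is routine once the trade-off function machinery and the composition theorem are in hand.
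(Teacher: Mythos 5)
The paper does not prove this statement: it is imported verbatim from \citet{dongGaussianDifferentialPrivacy2022} (their composition theorem for trade-off functions together with the identity $G_{\mu_1}\otimes\dotsb\otimes G_{\mu_T}=G_{\sqrt{\mu_1^2+\dotsb+\mu_T^2}}$), and your outline is a correct and faithful reconstruction of exactly that argument, including the duality step matching \cref{eq:gdp-definition} to the trade-off-function definition and the Neyman--Pearson projection reducing the multivariate Gaussian test to a univariate one. Nothing in your plan deviates from, or is missing relative to, the source the paper relies on.
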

\pg{Gaussian mechanism}
One of the central mechanisms to guarantee DP, is the Gaussian mechanism.
This releases the output of a function $f$ with added Gaussian noise
\begin{equation}
    \calm(D) = f(D) + \caln(0, \sigma^2I),
\end{equation}
where the variance $\sigma^2$ depends on the $l_2$-sensitivity of $f,$ defined as
\begin{equation} \label{eq:sensitivity}
    \Delta = \sup_{D\sim D'}||f(D) - f(D')||_2.
\end{equation}

\begin{theorem}[\citealt{dongGaussianDifferentialPrivacy2022}]\label{thm:gauss-mech-gdp}
    The Gaussian mechanism with variance $\sigma^2 = \sfrac{\Delta^2}{\mu^2}$
    is $\mu$-GDP.
\end{theorem}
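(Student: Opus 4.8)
The plan is to turn the $\mu$-GDP claim, via \Cref{def:gaussian-dp}, into the statement that for every neighbouring pair $D\sim_S D'$ and every $\epsilon\ge 0$ the Gaussian mechanism is $(\epsilon,\delta(\epsilon))$-DP with the exact curve in \eqref{eq:gdp-definition}, and to establish this by a Neyman--Pearson computation. First I would fix $D\sim_S D'$ and set $v=f(D)-f(D')$, so that $\calm(D)\sim\caln(f(D),\sigma^2 I)$ and $\calm(D')\sim\caln(f(D'),\sigma^2 I)$ are isotropic Gaussians whose means differ by $v$, with $\|v\|_2\le\Delta$ by \cref{eq:sensitivity}. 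Write $P=\caln(f(D'),\sigma^2I)$ and $Q=\caln(f(D),\sigma^2I)$.

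For a fixed $\epsilon$ the tightest admissible $\delta$ is $\sup_S\bigl[\Pr(Q\in S)-e^\epsilon\Pr(P\in S)\bigr]$, and by the Neyman--Pearson lemma this supremum is attained at the likelihood-ratio super-level set $S^\star=\{x:\log(\mathrm dQ/\mathrm dP)(x)>\epsilon\}$. The Gaussian log-ratio $\log(\mathrm dQ/\mathrm dP)(x)=\langle v,x\rangle/\sigma^2-\|v\|_2^2/(2\sigma^2)+\text{const}$ is \emph{affine in the single projection} $\langle v,x\rangle$, so $S^\star$ is a half-space; the remaining $d-1$ coordinates orthogonal to $v$ are identically distributed under $P$ and $Q$ and drop out. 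The problem therefore collapses to distinguishing two one-dimensional Gaussians of common variance $\sigma^2$ separated by $\|v\|_2$, i.e. of standardised separation $a:=\|v\|_2/\sigma$. Evaluating the two Gaussian tail probabilities of $S^\star=\{\langle v,x\rangle>t\}$ at the threshold $t=\sigma^2\epsilon/\|v\|_2+\|v\|_2/2$ and simplifying gives
\[
\delta_a(\epsilon)=\Phi\!\left(-\tfrac{\epsilon}{a}+\tfrac{a}{2}\right)-e^\epsilon\,\Phi\!\left(-\tfrac{\epsilon}{a}-\tfrac{a}{2}\right),
\]
which is exactly the curve of \Cref{def:gaussian-dp} with $\mu$ replaced by $a$. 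Because $P$ and $Q$ are symmetric about their midpoint, the same $\delta_a$ governs the reversed inequality, so both directions required by \Cref{def:eps-delta-dp} hold.

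It remains to plug in the noise scale: with $\sigma^2=\Delta^2/\mu^2$ and $\|v\|_2\le\Delta$ we get $a=\|v\|_2/\sigma\le\Delta/\sigma=\mu$ for every neighbouring pair. The final step is to show $\delta_a(\epsilon)\le\delta_\mu(\epsilon)$ for all $\epsilon\ge0$ whenever $a\le\mu$, so that the worst case over neighbours is attained at $a=\mu$ and the mechanism is $(\epsilon,\delta_\mu(\epsilon))$-DP for all $\epsilon$, i.e. $\mu$-GDP. I expect this monotonicity in the separation parameter to be the only delicate point. It can be settled either by differentiating the two-$\Phi$ expression in $a$ and checking $\partial_a\delta_a(\epsilon)\ge0$ using the standard-normal identity $\phi(z)=\phi(-z)$ and $\phi'(z)=-z\phi(z)$ (the $e^\epsilon$ factor makes the two derivative terms cancel down to a single nonnegative $\phi$ contribution), or, more conceptually, by noting that increasing the mean shift can only make the two Gaussians easier to tell apart, so the associated trade-off functions are stochastically ordered and $a$-GDP implies $\mu$-GDP. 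Everything preceding this step is a direct Neyman--Pearson calculation.
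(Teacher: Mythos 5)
This theorem is not proved in the paper at all: it is imported verbatim from \citet{dongGaussianDifferentialPrivacy2022} as background, so there is no in-paper proof to compare against. Your blind reconstruction is correct and is essentially the argument of the cited source, translated from trade-off functions into the $(\epsilon,\delta(\epsilon))$ curve of \Cref{def:gaussian-dp}: the optimal set for $\sup_S[\Pr(Q\in S)-e^\epsilon\Pr(P\in S)]$ is the likelihood-ratio super-level set, which for two isotropic Gaussians with equal covariance is a half-space depending only on $\langle v,x\rangle$, so the problem collapses to a one-dimensional mean-shift test with standardised separation $a=\lVert v\rVert_2/\sigma$, and the two tail probabilities give exactly $\delta_a(\epsilon)=\Phi(-\epsilon/a+a/2)-e^\epsilon\Phi(-\epsilon/a-a/2)$. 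The monotonicity step you flag as delicate does go through cleanly: the identity $e^\epsilon\,\Phi'(-\epsilon/a-a/2)=\Phi'(-\epsilon/a+a/2)$ collapses the derivative to $\partial_a\delta_a(\epsilon)=\Phi'(-\epsilon/a+a/2)\geq 0$, so $a\leq\mu$ implies $\delta_a(\epsilon)\leq\delta_\mu(\epsilon)$ and the worst case is at full sensitivity (the degenerate case $v=0$ is trivial). Two cosmetic remarks: your stated threshold for the half-space should read $\sigma^2\epsilon/\lVert v\rVert_2+\lVert v\rVert_2^2/2$ after centring at $f(D')$ (you wrote $\lVert v\rVert_2/2$), though the tail probabilities you then report are the correct ones; and the symmetry remark handling the swapped pair $(D',D)$ is worth keeping, since \Cref{def:eps-delta-dp} quantifies over ordered pairs.
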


\subsection{The Functional Mechanism}\label{sec:func-mech}

Now we turn to the functional mechanism of \citet{hallDifferentialPrivacyFunctions2013}.
Given a dataset $D \in \mathbb{R}^{N \times d},$ the functional mechanism releases a function $f_D\colon T\to \R$, where $T\subset \R^d$,
with added noise from a Gaussian process.
For simplicity, here we only define the functional mechanism for functions in a
\emph{reproducible kernel Hilbert space} (RKHS), and defer the more general definition
to Appendix~\ref{sec:general-functional-mechanism}.

\begin{definition}\label{def:functional-mechanism}
    Let $g$ be a sample path of a Gaussian process having mean zero and covariance function $k$, and 
    let $\calh$ be an RKHS with kernel $k$.
    Let $\{f_D : D\in \cald\}\subset \calh$ be a family of functions indexed by datasets, 
    satisfying
    \begin{equation}
        \Delta_{\mathcal{H}} f \stackrel{\textup{def}}{=} \sup_{D\sim D'} ||f_D - f_{D'}||_\calh \leq \Delta.
        \label{eq:rkhs-sensitivity}
    \end{equation}
    The functional mechanism with multiplier $c$ and sensitivity $\Delta$ is defined as
    \begin{equation}
        \calm(D) = f_D + c g.
    \end{equation}
\end{definition}

\begin{theorem}[\citeauthor{hallDifferentialPrivacyFunctions2013}]\label{thm:func-mech-bounds-old}
    If $\epsilon \leq 1$, the mechanism in Def. \ref{def:functional-mechanism} 
    with $c = \frac{\Delta}{\epsilon}\sqrt{2\ln (2 / \delta)}$ is 
    $(\epsilon, \delta)$-DP.
\end{theorem}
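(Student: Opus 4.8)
The plan is to reduce the release of the whole function $\calm(D) = f_D + cg$ to a family of finite-dimensional Gaussian mechanisms, apply the standard Gaussian-mechanism tail bound in each finite marginal, and then lift the guarantee back to function space. First I would fix an arbitrary finite set of evaluation points $t_1,\dots,t_m \in T$ and consider the evaluation vector $\calm(D)(\mathbf{t}) = \mathbf{f}_D + c\mathbf{g}$, where $\mathbf{f}_D = (f_D(t_i))_i$ and $\mathbf{g} \sim \caln(0,K)$ with $K_{ij} = k(t_i,t_j)$ by the definition of a GP. Thus the marginal law of the release at these points is $\caln(\mathbf{f}_D, c^2 K)$, a multivariate Gaussian mechanism. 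Because the law of a Gaussian process is determined by its finite-dimensional marginals and the cylinder sets generate the Borel $\sigma$-algebra on the path space, it suffices to establish the $(\epsilon,\delta)$ inequality uniformly over all such finite marginals; the inequality then passes to all measurable sets by a monotone-class argument.

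For the finite marginal, whitening the output by $(c^2K)^{-1/2}$ turns $\caln(\mathbf{f}_D, c^2K)$ against $\caln(\mathbf{f}_{D'}, c^2K)$ into two unit-variance isotropic Gaussians whose means are separated by the Mahalanobis distance $s = \tfrac{1}{c}\sqrt{(\mathbf{f}_D - \mathbf{f}_{D'})^\top K^{-1}(\mathbf{f}_D - \mathbf{f}_{D'})}$. The crucial RKHS step bounds this quantity by the RKHS sensitivity: writing $h = f_D - f_{D'} \in \calh$ with evaluation vector $\mathbf{h}$, the quadratic form $\mathbf{h}^\top K^{-1}\mathbf{h}$ equals the squared norm of the orthogonal projection of $h$ onto $\mathrm{span}\{k(\cdot,t_i)\}$ and hence is at most $\|h\|_{\calh}^2 \le \Delta^2$. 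Therefore $s \le \Delta/c$ uniformly over every choice of points.

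It remains to invoke the Gaussian-mechanism analysis. The privacy loss $L$ of $\caln(\mathbf{f}_D, c^2K)$ against $\caln(\mathbf{f}_{D'}, c^2K)$ is itself Gaussian, distributed as $\caln(s^2/2, s^2)$, and the hockey-stick divergence is controlled by $\Pr(L > \epsilon) = \Phi\!\left(\tfrac{s}{2} - \tfrac{\epsilon}{s}\right)$, which is increasing in $s$ and hence maximised at $s = \Delta/c$. Substituting $c = \tfrac{\Delta}{\epsilon}\sqrt{2\ln(2/\delta)}$ gives $\Delta/c = \epsilon/\sqrt{2\ln(2/\delta)}$, and the tail bound $\Phi(-t) \le \tfrac12 e^{-t^2/2}$ together with $\epsilon \le 1$ (used to absorb the mean-shift term $s^2/2$) yields $\Pr(L > \epsilon) \le \delta$. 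This is where both the factor $2$ inside the logarithm---coming from the $\tfrac12$ in the tail bound---and the hypothesis $\epsilon \le 1$ enter. The same estimate holds with $D$ and $D'$ interchanged by symmetry of the sensitivity bound, giving the finite-dimensional $(\epsilon,\delta)$ guarantee and, by the reduction above, the claimed guarantee for $\calm$.

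I expect the main obstacle to be the lifting step rather than the Gaussian computation: one must argue rigorously that controlling the hockey-stick divergence on every finite-dimensional cylinder controls it on the full Borel $\sigma$-algebra of the infinite-dimensional path space. The clean way is to note that the divergence between the two path-space measures equals the supremum of the divergences between their finite-dimensional projections---a consequence of the measures being determined by, and approximable through, their finite marginals---so no extra loss is incurred in the limit. The RKHS projection inequality is the other step to state carefully, but it is standard once the reproducing property is invoked.
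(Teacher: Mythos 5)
Your reconstruction is correct and is essentially the standard Hall et al.\ argument; note, though, that the paper does not prove this statement itself---it is quoted as background, and the only proof machinery the paper supplies is for its improved GDP version (\cref{thm:functional-mech-dp} in \cref{sec:privacy-extra}). That appendix proof follows exactly the skeleton you describe---reduction to finite-dimensional marginals, whitening by $M^{-1/2}$ (\cref{thm:gauss-mechanism-matrix}), the RKHS-norm-dominates-the-Mahalanobis-distance projection inequality (\cref{thm:functional-mech-sensitivity-rkhs}), and lifting from the field of cylinder sets $\calf_0$ to the generated $\sigma$-field $\calf$---with the single difference that the classical $(\epsilon,\delta)$ Gaussian tail bound you invoke (which is where $\epsilon\le 1$ and the $2$ inside the logarithm enter) is replaced by the tight GDP guarantee of \cref{thm:gauss-mech-gdp}. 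Your identification of the two delicate steps (the monotone-class lifting and the projection inequality) matches the points the paper is careful about, so the proposal is sound.
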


\section{Differential privacy for the ConvCNP}
\label{sec:convcnp_privacy}
Now we turn to our main contributions.
First, we tighten the functional mechanism privacy analysis in \cref{sec:improving_functional} and then 
we build the functional mechanism into the ConvCNP in \cref{sec:functional_dpconvcnp}.

\subsection{Improving the Functional Mechanism}
\label{sec:improving_functional}
The privacy bounds given by Theorem~\ref{thm:func-mech-bounds-old} are suboptimal, and do not 
allow us to use the tight composition formula from Theorem~\ref{thm:gdp-composition}. However, 
the proof of Theorem~\ref{thm:func-mech-bounds-old} builds on the classical Gaussian mechanism
privacy bounds, which we can replace with the GDP theory from Section~\ref{sec:background-dp}.
As demonstrated in \Cref{fig:dp-accounting-comparison}, our bound offers significantly smaller $\epsilon$ for the same noise standard deviation, compared to the existing bounds of \citet{hallDifferentialPrivacyFunctions2013} and \citet{jiangFunctionalRenyiDifferential2023}.
\begin{restatable}{theorem}{theoremfunctionalmechdp}\label{thm:functional-mech-dp}
    The functional mechanism with sensitivity $\Delta$ and multiplier 
    $c = \sfrac{\Delta}{\mu}$ is $\mu$-GDP.
\end{restatable}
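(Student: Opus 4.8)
The plan is to follow the reduction of \citet{hallDifferentialPrivacyFunctions2013} from the functional release to a finite-dimensional Gaussian mechanism, but to replace their classical $(\epsilon,\delta)$ accounting of that Gaussian mechanism with the sharp GDP characterization of \Cref{thm:gauss-mech-gdp}. Concretely, the privacy of releasing the random sample path $\calm(D) = f_D + cg$ is governed entirely by its finite-dimensional marginals, so it suffices to control, for every finite collection of evaluation points, the indistinguishability of the two output laws obtained from neighbouring $D \sim D'$.

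First I would fix points $t_1,\dots,t_n \in T$, write $v = (f_D(t_i) - f_{D'}(t_i))_{i=1}^n$, and let $K$ be the Gram matrix $K_{ij} = k(t_i,t_j)$. Since $g$ is a zero-mean GP with covariance $k$, evaluating $\calm(D)$ at these points yields $\caln(f_D^{(n)}, c^2 K)$ and evaluating $\calm(D')$ yields $\caln(f_{D'}^{(n)}, c^2 K)$; the two laws share the covariance $c^2 K$ and differ only in their means, by $v$. The key quantity is then the Mahalanobis gap $v^\top (c^2 K)^{-1} v$. Using the reproducing property $h(t_i) = \langle h, k(\cdot,t_i)\rangle_\calh$ with $h = f_D - f_{D'}$, the scalar $v^\top K^{-1} v$ equals the squared norm of the orthogonal projection of $h$ onto $\mathrm{span}\{k(\cdot,t_i)\}$, which is at most $\|h\|_\calh^2$; combined with the sensitivity bound \cref{eq:rkhs-sensitivity} this gives $v^\top K^{-1} v \le \|f_D - f_{D'}\|_\calh^2 \le \Delta^2$.

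To turn this into a GDP statement, I would whiten the correlated noise: applying the invertible linear map $K^{-1/2}$ (post-processing, hence privacy-preserving by \Cref{thm:post-processing-immunity}) transforms the problem into an isotropic Gaussian mechanism with noise variance $c^2$ and $l_2$-sensitivity $\|K^{-1/2} v\|_2 = \sqrt{v^\top K^{-1} v} \le \Delta$. \Cref{thm:gauss-mech-gdp} then certifies that this finite-dimensional mechanism is $(\Delta/c)$-GDP, and since $c = \Delta/\mu$ this is exactly $\mu$-GDP, with monotonicity of the GDP guarantee absorbing the inequality on the sensitivity. Because the evaluation points were arbitrary, every finite-dimensional marginal of the functional mechanism is $\mu$-GDP.

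The main obstacle is the final lifting step: a guarantee on all finite marginals must be promoted to a guarantee on the full sample-path release. I would equip the function space with the cylinder $\sigma$-algebra generated by point evaluations, on which the GP law is uniquely determined by its finite marginals; the equivalent $(\epsilon,\delta(\epsilon))$ inequality from \Cref{def:gaussian-dp} then holds on all cylinder sets and extends to the generated $\sigma$-algebra by a monotone-class (Dynkin) argument. A secondary technical point is that $K$ may be singular, which I would handle by replacing $K^{-1}$ with the Moore--Penrose pseudoinverse and observing that the projection argument forces $v$ to lie in the range of $K$, so that both the whitening and the Mahalanobis bound remain valid on that subspace.
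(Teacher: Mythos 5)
Your proposal is correct and follows essentially the same route as the paper's proof: reduce to finite-dimensional marginals, bound the Mahalanobis sensitivity via the RKHS norm, whiten to an isotropic Gaussian mechanism so that \Cref{thm:gauss-mech-gdp} applies, and lift the resulting $(\epsilon,\delta(\epsilon))$ curve from the cylinder sets to the generated $\sigma$-field. The only differences are that you re-derive the projection bound $v^\top K^{-1}v \le \|f_D - f_{D'}\|_\calh^2$ (which the paper simply imports from \citet{hallDifferentialPrivacyFunctions2013} as \Cref{thm:functional-mech-sensitivity-rkhs}) and that you explicitly treat singular Gram matrices, a point the paper's Lemma~\ref{thm:gauss-mechanism-matrix} sidesteps by assuming positive definiteness.
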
   
\begin{proof}
    The proof of Theorem~\ref{thm:func-mech-bounds-old} from 
    \citet{hallDifferentialPrivacyFunctions2013} shows that any 
    $(\epsilon, \delta)$-DP bound for the Gaussian mechanism 
    carries over to the functional mechanism.
    Replacing the classical Gaussian mechanism bound 
    with the GDP bound proves the claim.
    For details, see~\cref{sec:privacy-extra}.
\end{proof}

\begin{figure}[t!]
\vspace{-12mm}
\begin{minipage}{0.46\textwidth}
\begin{algorithm}[H]
\caption{
    DPSetConv; modifications to the original SetConv layer shown in {\color{myblue} blue}.
}
    \begin{algorithmic}
        \STATE \textbf{Input:} Grid $\mathbf{x} \subseteq \mathbb{R}^D,$ $D^{(c)},$ $(\epsilon, \delta),$ RBF covariance $k$ with scale $\lambda,$ threshold $C,$ DP accounting method $\texttt{noise\_scales}.$
        \STATE \textbf{Output:} DP representation of $r^{(d)}, r^{(s)}.$
        \STATE {\color{myblue}$\tilde{y}^{(c)}_n \gets \texttt{clip}(y^{(c)}_n, C)$ for $n = 1, \dots, N$}
        \STATE {\color{myblue} $g_d, g_s \sim \mathcal{GP}(0, k)$}
        \STATE {\color{myblue} $\mathbf{g}_d, \mathbf{g}_s \gets g_d(\mathbf{x}), g_s(\mathbf{x})$}
        \STATE ${\color{myblue} \sigma_d, \sigma_s \gets \texttt{noise\_scales}(\epsilon, \delta, C)}$
        \STATE $\mathbf{r}^{(d)} \gets \sum_{n=1}^N \psi(\sfrac{(\mathbf{x} - x^{(c)}_n)}{\lambda}) {\color{myblue}~+~\sigma_d\mathbf{g}_d}$
        \STATE $\mathbf{r}^{(s)} \gets \sum_{n=1}^N \tilde{y}^{(c)}_n\psi(\sfrac{(\mathbf{x} - x^{(c)}_n)}{\lambda}) {\color{myblue}~+~\sigma_s\mathbf{g}_s}$
        \STATE \textbf{Return:} Density and signal  $\mathbf{r}^{(d)}, \mathbf{r}^{(s)}.$
    \end{algorithmic}
    \label{alg:dpsetconv}
\end{algorithm}
\end{minipage}
~
\begin{minipage}{0.52\textwidth}
\begin{figure}[H]
    \vspace{-8mm}
    \includegraphics[width=1.05\linewidth]{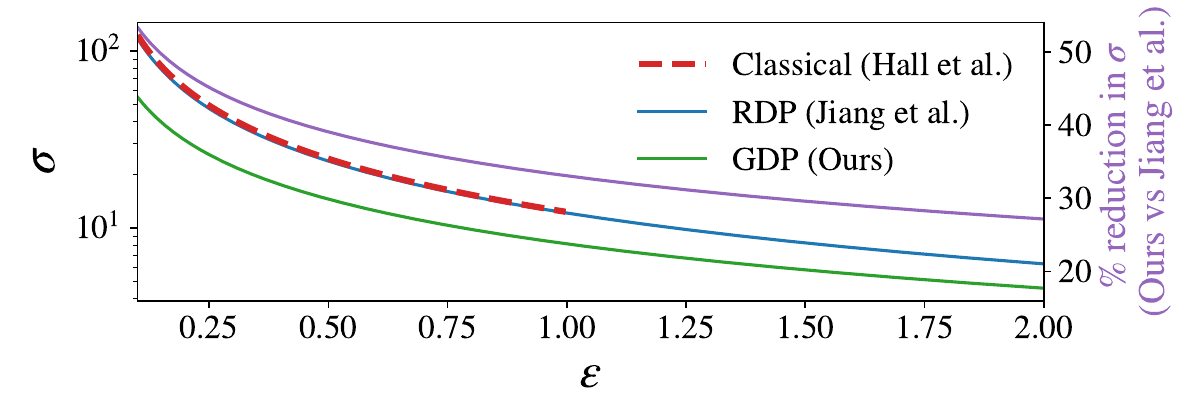}
    \caption{
    Noise magnitude comparison for the classical functional mechanism of 
    \citet{hallDifferentialPrivacyFunctions2013}, the RDP-based mechanism of
    \citet{jiangFunctionalRenyiDifferential2023} and our improved GDP-based mechanism.
    The line for \citeauthor{hallDifferentialPrivacyFunctions2013} cuts off at $\epsilon = 1$ since their bound has only been proven for $\epsilon \leq 1$.
    We set $\Delta^2 = 10$ and $\delta = 10^{-3}$, which are representative values 
    from our experiments. 
    See Appendix~\ref{app:accountant-comparison-details}
    for more details.
    }
    \vspace{-8mm}
    \label{fig:dp-accounting-comparison}
\end{figure}
\end{minipage}
\end{figure}

\subsection{Differentially Private Convolutional CNP}
\label{sec:functional_dpconvcnp}
\pg{Differentially Private SetConv}
Now we turn to building the functional DP mechanism into the ConvCNP.
We want to modify the SetConv encoder (Eq. \ref{eq:renc}) to make it DP.
As a reminder, the SetConv outputs the density $r^{(d)}$ and signal $r^{(s)}$ channels
\begin{equation} \label{eq:channels}
\begin{bmatrix} r^{(d)}(x) \\ r^{(s)}(x) \end{bmatrix} = \sum_{n = 1}^N \begin{bmatrix} 1 \\ y_n^{(c)} \end{bmatrix} \psi \left(\frac{x - x_n^{(c)}}{\lambda}\right),
\end{equation}
which are the two quantities we want to release under DP.
To achieve this, we must first determine the sensitivity of $r^{(d)}$ and $r^{(s)},$ as defined in Eq. \ref{eq:rkhs-sensitivity}.
Recall that we use the substitution neighbourhood relation $\sim_S$, defined as \smash{$D^{(c)}_1 \sim_S D^{(c)}_2$} if \smash{$D^{(c)}_1$ and $D^{(c)}_2$} differ in at most one row, i.e.\ by a single context point.
Since the RBF $\psi$ is bounded above by $1,$ it can be shown (see \cref{sec:func-mech-sensitivity-dp-convcnp}) that the squared $l_2$-sensitivity of $r^{(d)}$ is bounded above by $2,$ and this bound is tight. 
Unfortunately however, since the signal channel \smash{$r^{(s)}$} depends linearly on each \smash{$y^{(c)}_n$} (see Eq. \ref{eq:channels}), its sensitivity is unbounded.
To address this, we clip each \smash{$y^{(c)}_n$} by a threshold $C,$ which is a standard way to ensure the sensitivity is bounded.
With this modification we obtain the following tight sensitivities for \smash{$r^{(d)}$} and \smash{$r^{(s)}$}:
\begin{equation}
    \Delta_{\mathcal{H}}^2 r^{(d)} = 2
    ,~~~~ 
    \Delta_{\mathcal{H}}^2 r^{(s)} = 4C^2
\end{equation}
With these in place, we can state our privacy guarantee which forms the basis of the DPConvCNP.
Post-processing immunity (\Cref{thm:post-processing-immunity}) ensures that post-processing \smash{$r^{(s)}$} and \smash{$r^{(d)}$} with the ConvCNP decoder does not result in further privacy loss.
\begin{theorem}\label{thm:functional-mech-set-conv-dp}
    Let $g_d$ and $g_s$ be sample paths of two independent Gaussian processes having zero mean and covariance function $k,$ such that $0 \leq k \leq C_k$ for some $C_k > 0.$
    Let $\Delta_d^2 = 2C_k$ and $\Delta_s^2 = 4C^2C_k.$
    Then releasing $r^{(d)} + \sigma_d g_d$ and $r^{(s)} + \sigma_s g_s$ is $\mu$-GDP with $\mu = \sqrt{\sfrac{\Delta_s^2}{\sigma_s^2} + \sfrac{\Delta_d^2}{\sigma_d^2}}.$
\end{theorem}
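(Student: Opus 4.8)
The plan is to treat the two channel releases as two separate instances of the functional mechanism of \Cref{def:functional-mechanism}, establish a GDP guarantee for each via \Cref{thm:functional-mech-dp}, and then recombine them with the GDP composition rule of \Cref{thm:gdp-composition}. Since $g_d$ and $g_s$ are independent, releasing the pair $\bigl(r^{(d)} + \sigma_d g_d,\, r^{(s)} + \sigma_s g_s\bigr)$ is exactly the (non-adaptive, hence a special case of adaptive) composition of the two single-channel mechanisms, so the quadrature-sum form of $\mu$ should fall out immediately once the per-channel parameters are identified.

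First I would check that both channels are elements of the RKHS $\mathcal{H}$ of $k$ and compute their RKHS sensitivities, since that is the quantity \Cref{thm:functional-mech-dp} consumes. Identifying the smoothing kernel with the GP covariance, the reproducing property writes the channels as finite combinations of kernel sections, $r^{(d)} = \sum_{n} k(\cdot, x_n^{(c)})$ and $r^{(s)} = \sum_{n} \tilde{y}_n^{(c)} k(\cdot, x_n^{(c)})$, so both lie in $\mathcal{H}$. Under the substitution relation, replacing a single context point alters one section, and expanding the squared norm gives, for the density channel,
\[
\| r^{(d)}_D - r^{(d)}_{D'} \|_{\mathcal{H}}^2 = k(x, x) - 2 k(x, x') + k(x', x') \le 2 C_k,
\]
using $0 \le k \le C_k$, which is tight when the substituted inputs are far apart; hence $\Delta_d^2 = 2 C_k$. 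For the signal channel the clipped outputs obey $|\tilde{y}_n^{(c)}| \le C$, and the analogous expansion, maximised by taking opposite-signed clipped values at a coincident input, yields $\Delta_s^2 = 4 C^2 C_k$. These are the computations deferred to the referenced appendix.

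With the sensitivities in hand, each single-channel release matches \Cref{def:functional-mechanism} with multiplier $c = \sigma_d$ (resp.\ $\sigma_s$), so \Cref{thm:functional-mech-dp} makes $r^{(d)} + \sigma_d g_d$ a $\mu_d$-GDP mechanism with $\mu_d = \Delta_d / \sigma_d$ and $r^{(s)} + \sigma_s g_s$ a $\mu_s$-GDP mechanism with $\mu_s = \Delta_s / \sigma_s$. Applying \Cref{thm:gdp-composition} to the two independent releases then gives $\mu = \sqrt{\mu_d^2 + \mu_s^2} = \sqrt{\Delta_d^2/\sigma_d^2 + \Delta_s^2/\sigma_s^2}$, which is the claimed value, and post-processing by the decoder incurs no further loss by \Cref{thm:post-processing-immunity}.

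I expect the only genuine work to be the RKHS sensitivity calculation; the two invocations afterwards are mechanical. The delicate point is that the sensitivity must be measured in the RKHS norm of $k$, rather than a pointwise or discretised $l_2$ norm, because that is precisely the norm the functional mechanism controls; getting this right is what turns the naive-looking bounds into the correct $2 C_k$ and $4 C^2 C_k$ that feed \Cref{thm:functional-mech-dp}.
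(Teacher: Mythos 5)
Your proposal is correct and follows essentially the same route as the paper: the paper's proof likewise applies \Cref{thm:functional-mech-dp} to each channel separately and combines the two guarantees with \Cref{thm:gdp-composition}, with the RKHS sensitivity computations ($\Delta_d^2 = 2C_k$, $\Delta_s^2 = 4C^2C_k$ via expanding $\|k_{x_1}-k_{x_2}\|_{\mathcal{H}}^2$ and its clipped-output analogue) deferred to the appendix exactly as you carry them out. Your identification of the multiplier $c=\sigma_d$ (resp.\ $\sigma_s$) giving $\mu_d=\Delta_d/\sigma_d$, and your remark that the norm must be the RKHS norm of the noise covariance, both match the paper's argument.
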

\begin{proof}
    The result follows by starting from the GDP bound of the mechanism in 
    \cref{thm:functional-mech-dp} and using \cref{thm:gdp-composition} to combine the privacy costs for the 
    releases of $r^{(d)}$ and $r^{(s)}.$
\end{proof}

\begin{corollary}
    Algorithm~\ref{alg:meta-testing-np} with the DPSetConv encoder from 
    Algorithm~\ref{alg:dpsetconv} is $(\epsilon, \delta)$-DP with respect to the real context set $D^{(c)}$.
\end{corollary}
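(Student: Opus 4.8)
The plan is to reduce the claim to \Cref{thm:functional-mech-set-conv-dp} via post-processing immunity. Observe that \Cref{alg:meta-testing-np} run with the \Cref{alg:dpsetconv} encoder decomposes into two stages: first the DPSetConv maps the context set $D^{(c)}$ to the noised representation $(\mathbf{r}^{(d)}, \mathbf{r}^{(s)})$, and second the decoder $\texttt{dec}_\theta$ maps this representation, together with the target inputs (which do not depend on $D^{(c)}$), to the predictive functions $\mu, \sigma$. The key structural point is that $D^{(c)}$ enters the entire computation only through the representation produced by the encoder. Hence it suffices to show the encoder output is $(\epsilon, \delta)$-DP, and then invoke \Cref{thm:post-processing-immunity} to conclude the same guarantee for the decoder's output.

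First I would identify the encoder output with the release analysed in \Cref{thm:functional-mech-set-conv-dp}. The clipping step $\tilde{y}^{(c)}_n \gets \texttt{clip}(y^{(c)}_n, C)$ ensures the signal channel has bounded RKHS sensitivity, giving $\Delta_s^2 = 4C^2 C_k$ and $\Delta_d^2 = 2 C_k$, where $C_k$ is the uniform bound $0 \le k \le C_k$ on the RBF covariance (here $C_k = 1$). The quantities $\mathbf{r}^{(d)} = \sum_n \psi(\cdot) + \sigma_d \mathbf{g}_d$ and $\mathbf{r}^{(s)} = \sum_n \tilde{y}^{(c)}_n \psi(\cdot) + \sigma_s \mathbf{g}_s$ are exactly the functional-mechanism releases $r^{(d)} + \sigma_d g_d$ and $r^{(s)} + \sigma_s g_s$ of \Cref{thm:functional-mech-set-conv-dp}, restricted to the finite grid $\mathbf{x}$. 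Evaluating the released function on a fixed grid is itself a data-independent post-processing, so by \Cref{thm:functional-mech-set-conv-dp} the pair $(\mathbf{r}^{(d)}, \mathbf{r}^{(s)})$ is $\mu$-GDP with $\mu = \sqrt{\sfrac{\Delta_s^2}{\sigma_s^2} + \sfrac{\Delta_d^2}{\sigma_d^2}}$.

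It remains to convert the GDP guarantee into the target $(\epsilon, \delta)$-DP statement. The accounting routine $\texttt{noise\_scales}(\epsilon, \delta, C)$ returns $\sigma_d, \sigma_s$ precisely so that the induced $\mu$ meets the budget: by \Cref{def:gaussian-dp}, $\mu$-GDP is equivalent to $(\epsilon, \delta(\epsilon))$-DP for all $\epsilon \ge 0$, and the scales are selected so that $\delta(\epsilon) \le \delta$ at the requested $\epsilon$. Therefore the encoder output is $(\epsilon, \delta)$-DP, and post-processing immunity propagates this through the decoder, proving the corollary.

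I expect the main obstacle to be the careful justification that every subsequent operation really is admissible post-processing that never re-touches $D^{(c)}$. In particular one must check that the decoder's discretisation, CNN, and final RBF smoother act only on the noised representation; that the finite-dimensional grid evaluation of the GP sample is a legitimate measurable map of the function-valued release (so that \Cref{thm:post-processing-immunity} applies to it); and that the stated uniform kernel bound $0 \le k \le C_k$ genuinely holds for the Gaussian RBF used, so that the sensitivities $\Delta_d^2, \Delta_s^2$ feeding \Cref{thm:functional-mech-set-conv-dp} are valid.
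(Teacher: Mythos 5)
Your proposal is correct and follows essentially the same route as the paper: invoke \Cref{thm:functional-mech-set-conv-dp} with the clipped sensitivities, convert the resulting $\mu$-GDP guarantee to $(\epsilon,\delta)$-DP via \Cref{def:gaussian-dp} through the choice of $\sigma_d,\sigma_s$ in \texttt{noise\_scales}, and propagate through the decoder by post-processing immunity. The one point you flag as an obstacle --- that grid evaluation of the function-valued release is admissible --- is exactly the step the paper discharges by citing Proposition~5 of \citet{hallDifferentialPrivacyFunctions2013}, which shows the guarantee on the cylinder-set $\sigma$-field carries over to finite point evaluations.
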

\begin{proof}
    The $\texttt{noise\_scales}$ method in \cref{alg:dpsetconv} computes the appropriate 
    $\sigma_d$ and $\sigma_s$ values from 
    \cref{thm:functional-mech-set-conv-dp} and \cref{def:gaussian-dp}
    such that releasing the functional encodings $r^{(d)} + \sigma_d g_d$ and 
    $r^{(s)} + \sigma_s g_s$ is $(\epsilon, \delta)$-DP. The 
    $(\epsilon, \delta)$-DP guarantee 
    extends~\citep[Proposition 5]{hallDifferentialPrivacyFunctions2013} to the point 
    evaluations $\mathbf{r}^{(d)}$ and $\mathbf{r}^{(s)}$ over the 
    grid $\mathbf{x}$ in \cref{alg:dpsetconv}.
    Post-processing immunity (\cref{thm:post-processing-immunity})
    extends $(\epsilon, \delta)$-DP to \cref{alg:meta-testing-np}.
\end{proof} 

\subsection{Training the DPConvCNP}
\label{sec:learning_privacy_parameters}

\pg{Training loss and algorithm}
We meta-train the DPConvCNP parameters $\theta, \phi$ using the CNP log-likelihood (eq. \ref{eq:loss}) within Algorithm \ref{alg:meta-training-np}, and meta-test it using alg. \ref{alg:meta-testing-np}.
Importantly, the encoder $\texttt{enc}_\phi$ now includes clipping and adding noise (alg. \ref{alg:dpsetconv}) in its forward pass.
Meta-training with the functional in place is crucial, because it teaches the decoder to handle the DP noise and clipping.

\textbf{Privacy hyperparameters.}
By \Cref{def:gaussian-dp} and \Cref{thm:functional-mech-set-conv-dp}, each $(\epsilon, \delta)$-budget implies a $\mu$-budget, placing a constraint on the sensitivities and noise magnitudes, namely \smash{$\mu^2 = \sfrac{\Delta_s^2}{\sigma_s^2} + \sfrac{\Delta_d^2}{\sigma_d^2}.$}
Since $\psi$ is an RBF, $\Delta_d^2 = 2$ and $\Delta_s^2 = 4C^2,$ and we need to specify $C, \sigma_s$ and $\sigma_d$, subject to this constraint.
We introduce a variable $0 < t < 1$ and rewrite the constraint as
\begin{equation}
    \sigma_s^2 = \frac{4C^2}{t\mu^2}
    ~~\text{ and }~~ 
    \sigma_d^2 = \frac{2}{(1 - t)\mu^2}
\end{equation}
allowing us to freely set $t$ and $C.$
One straightforward approach is to fix $t$ and $C$ to hand-picked values, but this is sub-optimal since the optimal values depend on $\mu, N,$ and the data statistics.
Instead, we can make them adaptive, letting $t : \mathbb{R}^+\times \N \to (0, 1)$ and $C : \mathbb{R}^+ \times \N \to \mathbb{R}^+$ be learnable functions, e.g.~neural networks $t(\mu, N) = \texttt{sig}(\text{NN}_t(\mu, N))$ and $C(\mu, N) = \text{exp}(\text{NN}_{C}(\mu, N))$
where $\texttt{sig}$ is the sigmoid.
These networks are meta-trained along with all other parameters of the DPConvCNP.

\section{Experiments \& Discussion}\label{sec:experiments}
We conduct experiments on synthetic and a sim-to-real task with real data.
We provide the exact experimental details in \cref{app:experiment-details}.
We make our implementation of the DPConvCNP public in the repository \url{https://github.com/cambridge-mlg/dpconvcnp}.

\pg{DP-SGD baseline}
Since, we are interested in the small-data regime, i.e. a few hundred datapoints per task, we turn to Gaussian processes \citep[GP;][]{rasmussenGaussianProcessesMachine2006}, the gold-standard model for well-calibrated predictions in this setting.
To enforce DP, we make the GP variational \citep{titsias2009variational}, and use DP-SGD \citep{Abadi_2016} to optimise its variational parameters and hyperparameters.
This is a strong baseline because GPs excel in small data, and DP-SGD is a state-of-the-art DP fine-tuning algorithm.
We found it critical to carefully tune the DP-SGD parameters and the GP initialisation using BayesOpt, and devoted substantial compute on this to ensure we have maximised GP performance.
We refer to this baseline as the DP-SVGP.
For details see \cref{app:dpsvgp}.

\begin{wrapfigure}{r}{0.5\textwidth}
    \centering
    \vspace{-5mm}
    \hspace{-5mm}
    \includegraphics[width=0.44\textwidth]{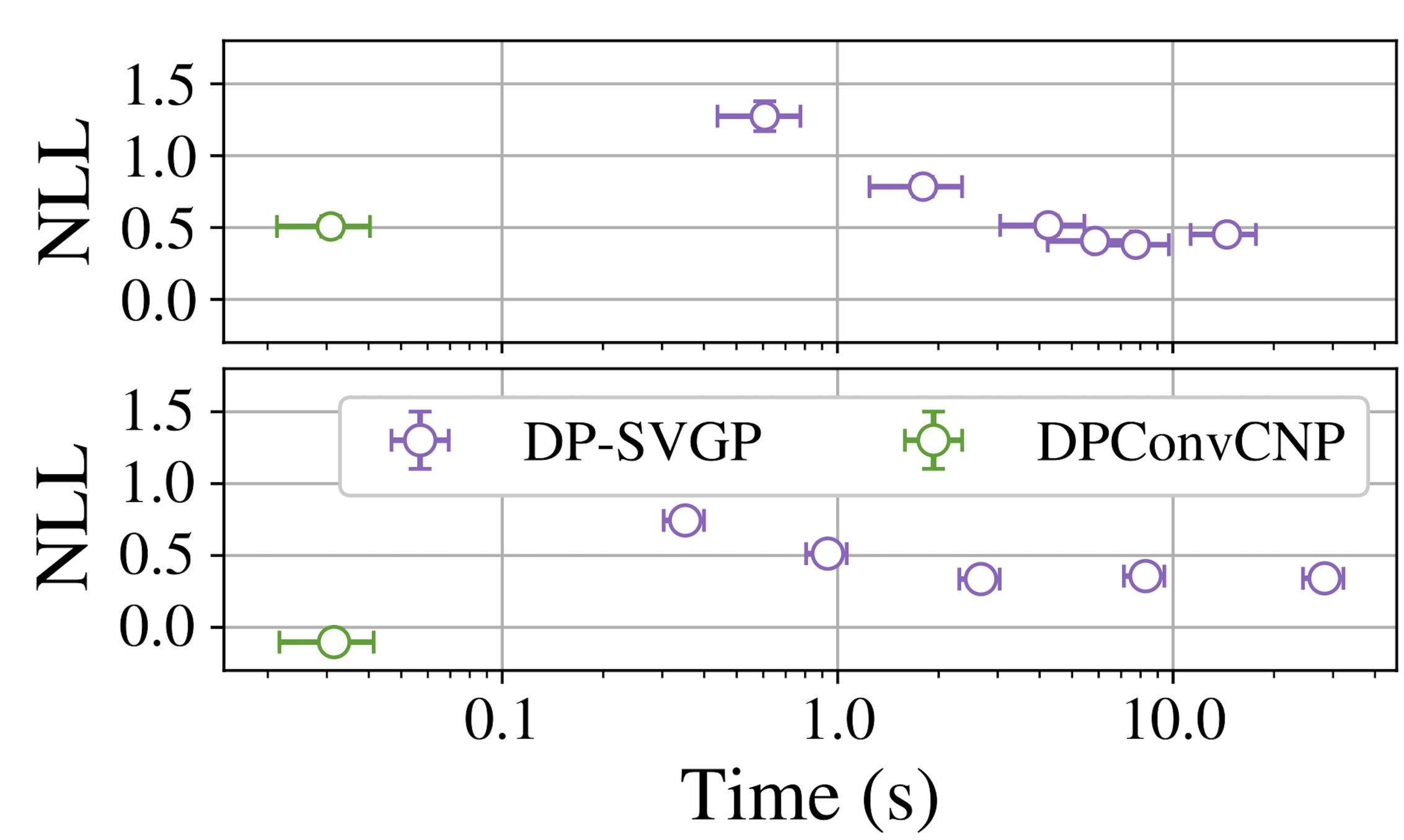}
    \vspace{-2mm}
    \caption{
        Deployment-time comparison on Gaussian (top) and non-Gaussian (bottom) data.
        We ran the DP-SVGP for different numbers of DP-SGD steps to determine a speed versus quality-of-fit tradeoff.
        Reporting 95\% confidence intervals.
    }
    \vspace{-10mm}
    \label{fig:timing}
\end{wrapfigure}
\pg{General setup}
In both synthetic and sim-to-real experiments, we first tuned the DP as well as the GP initialisation parameters of the DP-SVGP on synthetic data using BayesOpt.
We then trained the DPConvCNP on synthetic data from the same generative process.
Last, we tested both models on unseen test data.
For the DP-SVGP, testing involves DP fine-tuning its variational parameters and its hyperparameters on each test set.
For the DPConvCNP, testing involves a single forward pass through the network.
We report results in \cref{fig:synthetic,fig:sim-to-real}, and discuss them below.

\subsection{Synthetic tasks}

\pg{Gaussian data}
First, we generated data from a GP with an exponentiated quadratic (EQ) covariance (\cref{fig:synthetic}; top), fixing its signal and noise scales, as well as its lengthscale $\ell.$
For each $\ell$ we sampled datasets with $N \sim \mathcal{U}[1, 512]$ and privacy budgets with $\epsilon \sim \mathcal{U}[0.90, 4.00]$ and $\delta = 10^{-3}.$
We trained separate DP-SVGPs and DPConvCNPs for each $\ell$ and tested them on unseen data from the same generative process (\textit{non-amortised}; \cref{fig:synthetic}).
These models can handle different privacy budgets but only work well for the lengthscale they were trained on.
In practice an appropriate lengthscale is not known {\it a priori}.
To make this task more realistic, we also trained a single DPConvCNP on data with randomly sampled $\ell \sim \mathcal{U}[0.25, 2.00]$ (\textit{amortised}; \cref{fig:synthetic}).
This model implicitly infers $\ell$ and simultaneously makes predictions, under DP.
We also show the performance of the non-DP Bayes posterior, which is optimal  (\textit{oracle}; \cref{fig:synthetic} top).
See Appendix~\ref{app:experiment-details-synthetic} for more details.

\pg{DPConvCNP competes with DP-SVGP}
Even in the Gaussian setting, where the DP-SVGP is given the covariance of the generative process, the DPConvCNP remains competitive (red and purple in \cref{fig:synthetic}; top).
While the DP-SVGP outperforms the DPConvCNP for some $N$ and $\ell,$ the gaps are typically small.
In contrast, the DP-SVGP often fails to provide sensible predictions (see $\ell = 0.25, N \geq 300$), and tends to overestimate the lengthscale, which is a known challenge in variational GPs \citep{bauer2016understanding}.
We also found that the DP-SVGP tends to underestimate the observation noise, resulting in over-smoothed \textit{and} over-confident predictions which lead to a counter-intuitive reduction in performance as $N$ increases.
By contrast, the DPConvCNP gracefully handles different $N$ and recovers predictions that are close to the non-DP Bayesian posterior for modest $\epsilon$ and $N,$ with runtimes several orders of magnitude faster than the DP-SVGP (\cref{fig:timing}).

\begin{figure*}[t!]
    \centering
    \vspace{-13mm}
    \includegraphics[width=0.92\linewidth]{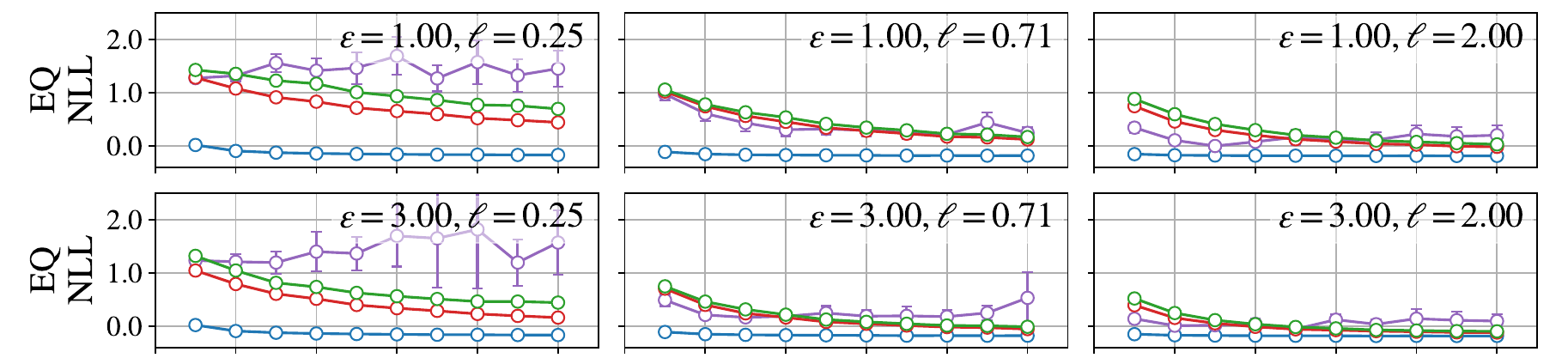} \\
    \includegraphics[width=0.9\linewidth]{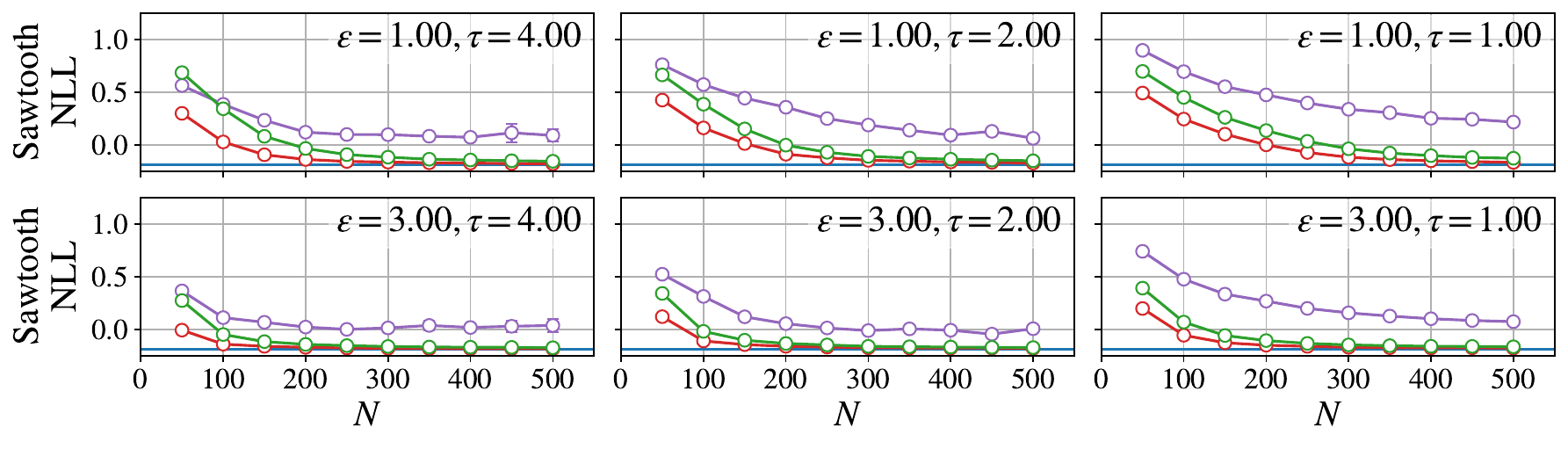}
    \includegraphics[width=0.9\linewidth]{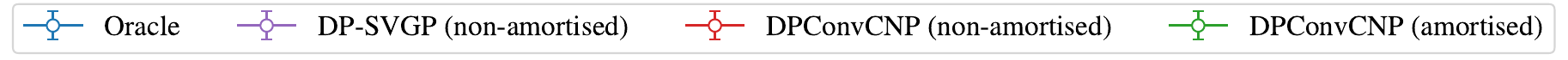}
    \caption{
    Negative log-likelihoods (NLL) of the DPConvCNP and the DP-SVGP baseline on synthetic data from a EQ GP (top two rows; EQ lengthscale $\ell$) and non-Gaussian data from sawtooth waveforms (bottom two rows; waveform period $\tau$).
    For each point shown we report the mean NLL with its 95\% confidence intervals (error bars too small to see).
    See \cref{app:supp-fits} for example fits.
    }
    \vspace{-6mm}
    \label{fig:synthetic}
\end{figure*}

\pg{Amortising over $\mathbf{\ell}$ and privacy budgets}
We observe that the DPConvCNP trained on a range of lengthscales (green; \Cref{fig:synthetic}) accurately infers the lengthscale of the test data, with only a modest performance reduction compared to its non-amortised counterpart (red).
The ability of the DPConvCNP to implicitly infer $\ell$ while making calibrated predictions is remarkable, given the DP constraints under which it operates.
Further, we observe that the DPConvCNP works well across a range of privacy budgets.
In preliminary experiments, we found that the performance loss due to amortising over privacy budgets is small.
This is particularly appealing because a single DPConvCNP can be trained on a range of budgets and deployed at test time using the privacy level specified by the practitioner, eliminating the need for separate models for different budgets.

\pg{Non-Gaussian synthetic tasks}
We generated data from a non-Gaussian process with sawtooth signals, which has previously been identified as a challenging task \cite{bruinsma2023autoregressive}.
We sampled the waveform direction and phase using a fixed period $\tau$ and adding Gaussian observation noise with a fixed magnitude.
We gave the DP-SVGP an advantage by using a periodic covariance function, and truncating the Fourier series of the waveform signal to make it continuous:
otherwise, since the DP-SVGP cannot handle discontinuities in the sawtooth signal, it explains the data mostly as noise, failing catastropically.
Again, we trained a separate DP-SVGP and DPConvCNP for each $\tau,$ as well as a single DPConvCNP model on randomly sampled $\tau^{-1} \sim \mathcal{U}[0.20, 1.25].$
We report results in \cref{fig:synthetic} (bottom), along with a non-DP oracle (blue).
The Bayes posterior is intractable, so we report the average NLL of the observation noise, which is a lower bound to the NLL.

\pg{DPConvCNP outperforms the DP-SVGP}
We find that, even though we gave the DP-SVGP significant advantages, the DPConvCNP still outperforms it, and produces near-optimal predictions even for modest $N$ and $\epsilon.$
Overall, our findings in the non-Gaussian tasks mirror those of the Gaussian tasks.
The DPConvCNP can amortise over different signal periods with very small performance drops (red, green in \cref{fig:synthetic}; bottom).
Given the difficulty of this task, the fact that the DPConvCNP can predict accurately for signals with different periods under DP constraints is especially impressive.

\subsection{Sim-to-real tasks}

\pg{Sim-to-real task}
We evaluated the performance of the DPConvCNP in a sim-to-real task, where we train the model on simulated data and test it on the the Dobe !Kung dataset \citep{howell2009dobe}, also used by \citet{smithDifferentiallyPrivateRegression2018}, containing age, weight and height measurements of 544 individuals.
We generated data from GPs with a Mat\'ern-$3/2$ covariance, with a fixed signal scale of $\sigma_v = 1.00,$ randomly sampled noise scale $\sigma_n \sim \mathcal{U}[0.20, 0.60]$ and lengthscale $\ell \sim \mathcal{U}[0.50, 2.00].$
We chose Mat\'ern-$3/2$ since its paths are rougher than those of the EQ, and picked hyperparameter ranges via a back-of-the envelope calculation, without tuning them for the task.
We trained a single DP-SVGP and a DPConvCNP with $\epsilon \sim \mathcal{U}[0.90, 4.00]$ and $\delta = 10^{-3}.$
We consider two test tasks: predicting the height or the weight of an individual from their age.
For each $N,$ we split the dataset into a context and target at random, repeating the procedure for multiple splits.

\pg{Sim-to-real comparison}
While the two models perform similarly for large $N,$ the DPConvCNP performs much better for smaller $N$ (\cref{fig:sim-to-real}; left).
The DPConvCNP predictions are surprisingly good even for strong privacy guarantees, e.g.~$\epsilon = 1.00, \delta = 10^{-3},$ and a modest dataset size (\cref{fig:sim-to-real}; right), and significantly better-calibrated than those of the DP-SVGP, which under-fits.
Note we have not tried to tune the simulator or add prior knowledge, which could further improve performance.

\begin{figure}[t!]
    \vspace{-16mm}
    \begin{subfigure}{0.53\textwidth}
        \centering
        \vspace{4mm}
        \includegraphics[width=\linewidth]{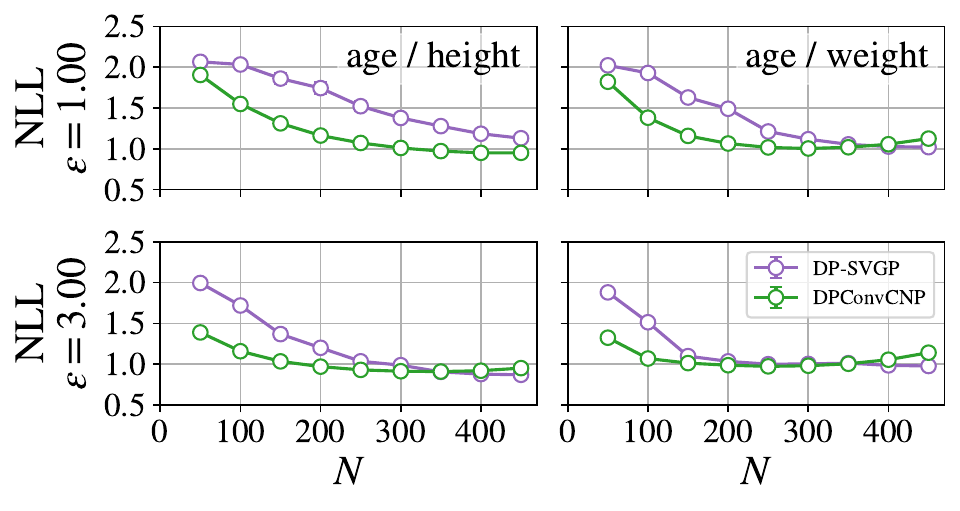}
        \vspace{-5mm}
    \end{subfigure}
    \begin{subfigure}{0.46\textwidth}
        \centering
        \includegraphics[width=\linewidth]{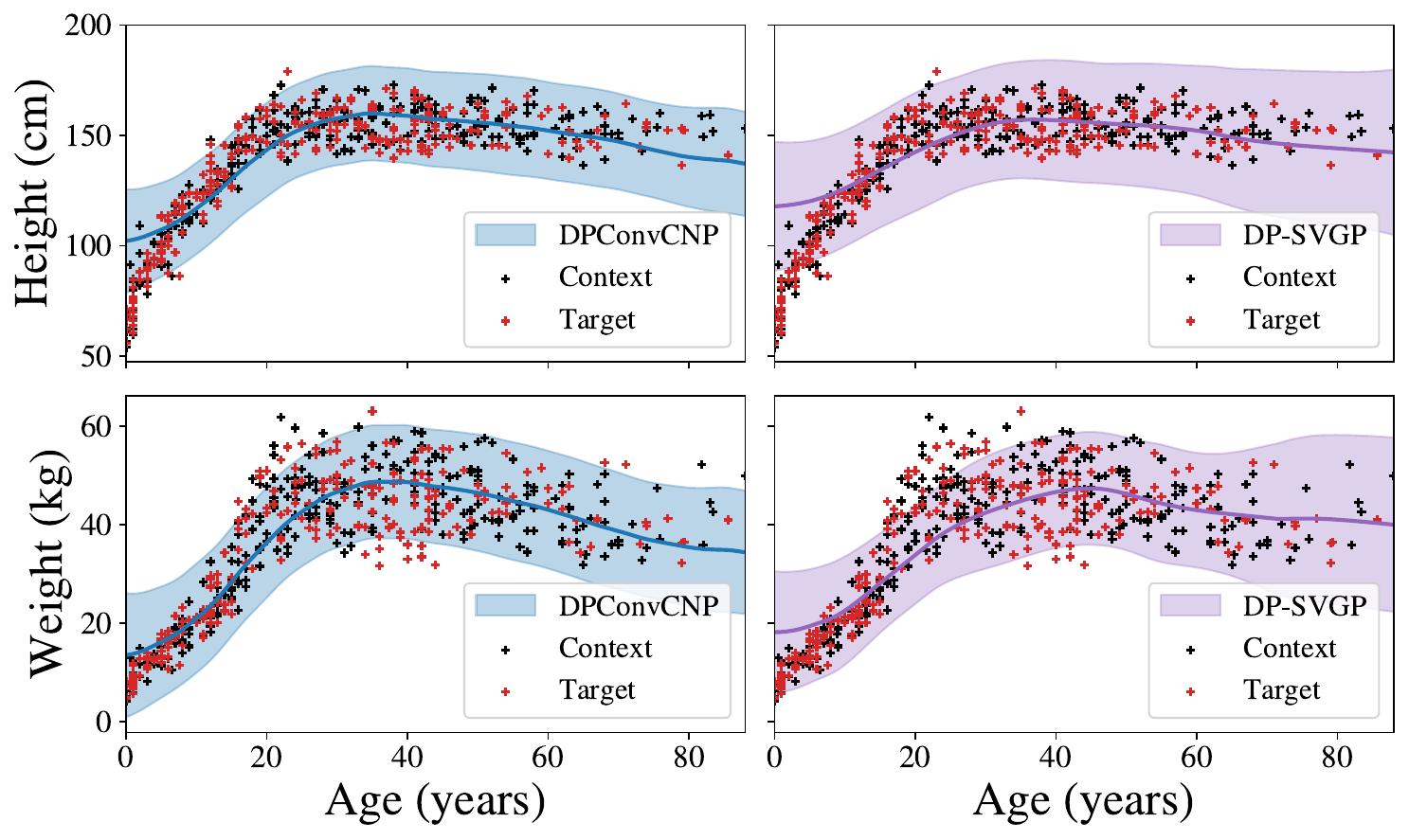}
    \end{subfigure}
    \caption{
    Left; Negative log-likelihoods of the DPConvCNP and the DP-SVGP baseline on the sim to real task with the !Kung dataset, predicting individuals' height from their age (left col.) or their weight from their age (right col.).
    For each point shown here, we partition each dataset into a context and target at random, make predictions, and repeat this procedure $512$ times.
    We report mean NLL with its 95\% confidence intervals.
    Error bars are to small to see here.
    Right; Example predictions for the DPConvCNP and the DP-SVGP, showing the mean and 95\% confidence intervals, with $N = 300, \epsilon = 1.00, \delta = 10^{-3}.$
    The DPConvCNP is visibly better-calibrated than the DP-SVGP.
    }
    \vspace{-4mm}
    \label{fig:sim-to-real}
\end{figure}

\section{Limitations \& Conclusion}\label{sec:limitations-conclusion}

\pg{Limitations}
The DPConvCNP does not model dependencies between target outputs, which is a major limitation.
This could be achieved straightforwardly by extending our approach to LNPs, GNPs, or ARNPs.
Another limitation is that the efficacy of any sim-to-real scheme is limited by the quality of the simulated data.
If the real and the simulated data differ substantially, then sim-to-real transfer has little hope of working.
This can be mitigated by simulating diverse datasets to ensure the real data are in the training distribution.
However, as simulator diversity increases, predictions typically become less certain, so there is a sweet spot in simulator diversity.
While we observed strong sim-to-real results, exploring the effect of this diversity is a valuable direction for future work.

\pg{Broader Impacts}
This paper presents work whose goal is to advance the field of DP.
Generally, we view the potential for broader impact of this work as generally positive.
Ensuring individual user privacy is critical across a host of Machine Learning applications.
We believe that methods such as ours, aimed at improving the performance of DP algorithms and improve their practicality, have the potential to have a positive impact on individual users of Machine Learning models.

\pg{Conclusion}
We proposed an approach for DP meta-learning using NPs.
We leveraged and improved upon the functional DP mechanism of \citet{hallDifferentialPrivacyFunctions2013}, and showed how it can be naturally built into the ConvCNP to protect the privacy of the meta-test set with DP guarantees.
Our improved bounds for the functional DP mechanism are substantial, providing the same privacy guarantees with a $\approx 30\%$ lower noise magnitude, and are likely of independent interest.
We showed that the DPConvCNP is competitive and often outperforms a carefully tuned DP-SVGP baseline on both Gaussian and non-Gaussian synthetic tasks, while simultaneously being orders of magnitude faster at meta-test time.
Lastly, we demonstrated how the DPConvCNP can be used as a sim-to-real model in a realistic evaluation scenario in the small data regime, where it outperforms the DP-SVGP baseline.

\section*{Acknowledgements}
This work was supported in part by the Research Council of Finland 
(Flagship programme: Finnish Center for Artificial Intelligence, 
FCAI as well as Grants 356499 and 359111), the Strategic Research Council 
at the Research Council of Finland (Grant 358247)
as well as the European Union (Project
101070617). Views and opinions expressed are however
those of the author(s) only and do not necessarily reflect
those of the European Union or the European Commission. Neither the European 
Union nor the granting authority can be held responsible for them.
SM is supported by the Vice Chancellor's and Marie and George Vergottis Scholarship, and the Qualcomm Innovation Fellowship. 
Richard E. Turner is supported by Google, Amazon, ARM, Improbable, EPSRC grant EP/T005386/1, and the EPSRC Probabilistic AI Hub (ProbAI, EP/Y028783/1).

\bibliography{main}
\bibliographystyle{plainnat}

\newpage
\appendix

\renewcommand\thefigure{S\arabic{figure}}
\renewcommand\thetable{S\arabic{table}}
\setcounter{figure}{0}

\section{Differential Privacy Details}\label{sec:privacy-extra}

\subsection{Measure-Theoretic Details}
Definition~\ref{def:eps-delta-dp} is the typical definition of 
$(\epsilon, \delta)$-DP that is given in the literature, but it glosses 
over some measure-theoretic details that are usually not important, but 
are important for the functional mechanism. In particular, 
the precise meaning of ``measurable'' is left
open. Here, we make the $\sigma$-field that ``measurable'' implicitly refers to 
explicit:
\begin{definition}\label{def:eps-delta-dp-measure-theory}
    An algorithm $\calm$ is $(\epsilon, \delta, \cala)$-DP for a $\sigma$-field 
    $\cala$ if, for neighbouring datasets $D, D'$ and all $A \in \cala$,
    \begin{equation}
        \Pr(\calm(D) \in A) \leq e^\epsilon \Pr(\calm(D') \in A) + \delta.
    \end{equation}
\end{definition}
\citet{hallDifferentialPrivacyFunctions2013} point out that the choice of 
$\cala$ is important, and insist that $\cala$ be the finest $\sigma$-field 
on which $\calm(D)$ is defined for all $D$. When the output of the 
mechanism is discrete, or an element of $\R^n$, this corresponds with the 
$\sigma$-field that is typically implicitly used in such settings. When 
the output is a function, as in the functional mechanism, the choice of 
$\cala$ is not as clear~\citep{hallDifferentialPrivacyFunctions2013}.
Note that $\cala$ is similarly implicitly present in the definition of GDP 
(Definition~\ref{def:gaussian-dp}).

Next, we recall the construction of the appropriate $\sigma$-field 
for the functional mechanism from \citet{hallDifferentialPrivacyFunctions2013}.
Let $T$ be an index set. We denote the set of functions from $T$ to $\R$ as 
$\R^T$. For $S = (x_1, \dotsc, x_n) \in T^n$ and a Borel set $B \in \calb(\R^n)$,
\begin{equation}
    C_{S, B} = \{f\in \R^T \mid (f(x_1), \dotsc, f(x_n)) \in B\}
\end{equation}
is called a cylinder set of functions. 
Let $\calc_S = \{C_{S, B} \mid B\in \calb(\R^n)\}$ and 
\begin{equation}
    \calf_0 = \bigcup_{S: |S| < \infty} \calc_S.
\end{equation}
$\calf_0$ is called the field of cylinder 
sets. $(\epsilon, \delta, \calf_0)$-DP~\footnote{
    This is a small abuse of notation, as $\calf_0$ is not a $\sigma$-field.
} amounts to $(\epsilon, \delta, \calb(\R^n))$-DP 
for any evaluation of $f$ at a finite vector of points $(x_1, \dotsc, x_n) \in T^n$,
of any size $n \in \N$~\citep{hallDifferentialPrivacyFunctions2013}. 

The $\sigma$-field for the functional mechanism is the $\sigma$-field 
$\calf$ generated by $\calf_0$~\citep{hallDifferentialPrivacyFunctions2013}.
It turns out that $(\epsilon, \delta, \calf_0)$-DP is sufficient for 
$(\epsilon, \delta, \calf)$-DP.

\subsection{General Definition of the Functional Mechanism}\label{sec:general-functional-mechanism}
\begin{definition}\label{def:functional-mechanism-general}
    Let $g$ be a sample path of a Gaussian process having mean zero and covariance function $k$.
    Let $\{f_D : D\in \cald\}\subset \R^T$ be a family of functions indexed by datasets satisfying the inequality
    \begin{equation}
        \sup_{D\sim D'}\sup_{n < \infty}\sup_{(x_1,\dotsc, x_n)\in T^n}\left|\left|
        \Delta_{D, D'}^{(x_1,\dotsc, x_n)}
        \right|\right|_2 
        \leq \Delta,
        \label{eq:functional-mech-sensitivity}
    \end{equation}
    with
    \begin{equation*}
        \Delta_{D, D'}^{(x_1,\dotsc, x_n)} =
        M^{-1/2}(x_1, \dotsc, x_n)
        \begin{bmatrix}
        f_D(x_1) - f_{D'}(x_1) \\
        \vdots \\
        f_D(x_n) - f_{D'}(x_n) \\
        \end{bmatrix},
    \end{equation*}
    where $M(x_1, \dotsc x_n)_{ij} = k(x_i, x_j)$.
    The functional mechanism with multiplier $c$ and sensitivity $\Delta$ is defined as
    \begin{equation} 
        \calm(D) = f_D + c g.
    \end{equation}
\end{definition}
If $f$ is a member of a \emph{reproducible kernel Hilbert space} (RKHS) 
$\calh$ with the same kernel $k$ as the noise process $g$, the sensitivity bound of 
Definition~\ref{def:functional-mechanism-general} is much simpler:
\begin{lemma}[\citealt{hallDifferentialPrivacyFunctions2013}]\label{thm:functional-mech-sensitivity-rkhs}
    For a function $f$ in an RKHS $\calh$ with kernel $k$, 
    \begin{equation}
        \Delta_{\mathcal{H}} f \stackrel{\textup{def}}{=} \sup_{D\sim D'} ||f_D - f_{D'}||_\calh \leq \Delta.
    \end{equation}
    implies \eqref{eq:functional-mech-sensitivity}.
\end{lemma}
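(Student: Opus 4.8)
The plan is to bound each finite-dimensional term appearing on the left of \eqref{eq:functional-mech-sensitivity} by the RKHS norm of the difference $f_D - f_{D'}$, and then take suprema. The key idea is that $\|\Delta_{D,D'}^{(x_1,\dots,x_n)}\|_2$ is exactly the RKHS norm of an orthogonal projection of $f_D - f_{D'}$, and projections are norm-contractive. First I would fix a neighbouring pair $D\sim D'$ and a finite tuple $(x_1,\dots,x_n)\in T^n$, and set $h = f_D - f_{D'}\in\calh$ together with the representers $k_{x_i} = k(\cdot, x_i)$. By the reproducing property the $i$-th entry of the difference vector is $h(x_i) = \langle h, k_{x_i}\rangle_\calh$; write this vector as $\mathbf{v}$, and note that $M_{ij} = \langle k_{x_i}, k_{x_j}\rangle_\calh$ is precisely the Gram matrix of the representers.

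Next I would consider the orthogonal projection $P$ of $h$ onto $V = \operatorname{span}\{k_{x_1},\dots,k_{x_n}\}$. Writing $Ph = \sum_j \alpha_j k_{x_j}$, the projection equations $\langle Ph, k_{x_i}\rangle_\calh = \langle h, k_{x_i}\rangle_\calh$ become $M\boldsymbol{\alpha} = \mathbf{v}$, so $\boldsymbol{\alpha} = M^{-1}\mathbf{v}$ when $M$ is invertible. A direct computation then gives $\|Ph\|_\calh^2 = \boldsymbol{\alpha}^\top M \boldsymbol{\alpha} = \mathbf{v}^\top M^{-1}\mathbf{v} = \|M^{-1/2}\mathbf{v}\|_2^2 = \|\Delta_{D,D'}^{(x_1,\dots,x_n)}\|_2^2$. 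Since orthogonal projection onto a subspace is contractive, $\|Ph\|_\calh \le \|h\|_\calh$, whence $\|\Delta_{D,D'}^{(x_1,\dots,x_n)}\|_2 \le \|f_D - f_{D'}\|_\calh$. Taking the supremum over the tuple, over $n$, and over neighbouring pairs gives $\sup \|\Delta_{D,D'}^{(x_1,\dots,x_n)}\|_2 \le \Delta_\calh f$, and combining with the hypothesis $\Delta_\calh f \le \Delta$ yields \eqref{eq:functional-mech-sensitivity}.

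The main subtlety, and the only real obstacle, is that the Gram matrix $M$ need not be invertible (for instance with repeated points or a degenerate kernel), so $M^{-1/2}$ must be interpreted through the Moore--Penrose pseudoinverse. The projection argument handles this cleanly: the projection $Ph \in V$ exists and is unique regardless of any linear dependence among the $k_{x_i}$, and the consistency condition $\mathbf{v} \in \operatorname{range}(M)$ holds automatically because $\mathbf{v}_i = \langle h, k_{x_i}\rangle_\calh = \langle Ph, k_{x_i}\rangle_\calh$. Consequently the identity $\|Ph\|_\calh^2 = \mathbf{v}^\top M^{+}\mathbf{v}$ persists with the pseudoinverse $M^{+}$ in place of $M^{-1}$, so the bound $\|\Delta_{D,D'}^{(x_1,\dots,x_n)}\|_2 \le \|f_D - f_{D'}\|_\calh$ remains valid and the conclusion is unaffected.
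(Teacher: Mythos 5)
The paper does not prove this lemma itself---it is imported verbatim from \citet{hallDifferentialPrivacyFunctions2013}---so there is no in-paper proof to compare against. Your argument is correct and is essentially the standard one underlying that reference: identifying $\|M^{-1/2}\mathbf{v}\|_2$ with the RKHS norm of the orthogonal projection of $f_D - f_{D'}$ onto $\operatorname{span}\{k_{x_1},\dots,k_{x_n}\}$ and invoking contractivity of the projection, with the pseudoinverse correctly covering the case of a singular Gram matrix.
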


\subsection{Proof of \Cref{thm:functional-mech-dp}}\label{sec:missing-proofs}

To prove Theorem~\ref{thm:functional-mech-dp}, we need a GDP version of a 
lemma from \citet{hallDifferentialPrivacyFunctions2013}:
\begin{restatable}{lemma}{lemmagaussmechanismmatrix}\label{thm:gauss-mechanism-matrix}
    Suppose that, for a positive definite symmetric matrix 
    $M\in \R^{d\times d}$, the function $f\colon \cald \to \R^d$ satisfies 
    \begin{equation}
        \sup_{D\sim D'}||M^{-1/2}(f(D) - f(D'))||_2 \leq \Delta. \label{eq:gauss-sensitivity}
    \end{equation}
    Then the mechanism $\calm$ that outputs (Gaussian mechanism)
    \[
    \calm(D) = f(D) + c Z, \quad Z\sim \caln_d(0, M)
    \]
    is $\mu$-GDP with $c = \frac{\Delta}{\mu}$.
\end{restatable}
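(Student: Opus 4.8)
The plan is to reduce the correlated-noise mechanism to the standard isotropic Gaussian mechanism by a whitening transformation, and then apply the known GDP guarantee together with post-processing immunity.

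First I would define the whitened mechanism $\calm'(D) = M^{-1/2}\calm(D)$. Since $M$ is symmetric positive definite, it has a symmetric positive definite square root $M^{1/2}$ whose inverse $M^{-1/2}$ is again symmetric, so $M^{-1/2}Z \sim \caln_d(0, M^{-1/2}M M^{-1/2}) = \caln_d(0, I)$. Writing $g(D) = M^{-1/2}f(D)$, this gives $\calm'(D) = g(D) + cW$ with $W\sim\caln_d(0,I)$, i.e.\ $\calm'$ is exactly an isotropic Gaussian mechanism applied to $g$ with noise scale $c$.

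Next I would identify the $l_2$-sensitivity of $g$. By linearity $g(D) - g(D') = M^{-1/2}(f(D) - f(D'))$, so $\sup_{D\sim D'}||g(D) - g(D')||_2 \leq \Delta$ is precisely the hypothesis \eqref{eq:gauss-sensitivity}. Applying the standard Gaussian-mechanism GDP guarantee (\Cref{thm:gauss-mech-gdp}) with noise variance $c^2$ then shows $\calm'$ is $\mu$-GDP once $c^2 = \Delta^2/\mu^2$, i.e.\ $c = \Delta/\mu$, matching the stated multiplier.

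Finally, since $\calm(D) = M^{1/2}\calm'(D)$ is a deterministic (linear) post-processing of $\calm'$, post-processing immunity (\Cref{thm:post-processing-immunity}) transfers the $\mu$-GDP guarantee from $\calm'$ to $\calm$, completing the argument. I do not expect a substantive obstacle; the only points needing care are the whitening identity $M^{-1/2}M M^{-1/2} = I$ (which uses $M \succ 0$) and applying post-processing in the correct direction, namely recovering $\calm$ from the whitened $\calm'$ rather than the reverse.
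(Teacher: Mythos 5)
Your proposal is correct and follows essentially the same route as the paper: both whiten the mechanism by $M^{-1/2}$ to reduce to an isotropic Gaussian mechanism, bound the sensitivity of the whitened function via \eqref{eq:gauss-sensitivity}, invoke \Cref{thm:gauss-mech-gdp}, and recover $\calm$ by post-processing with $M^{1/2}$. The only (immaterial) difference is that the paper additionally divides by $c$ so that the whitened mechanism has unit noise variance and sensitivity $\Delta/c = \mu$, whereas you keep the noise scale at $c$; the two normalisations are equivalent.
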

\begin{proof}
    We can write 
    \[
    \calm(D) = M^{1/2}c\left(\frac{M^{-1/2}}{c}f(D) + S\right),
    \quad S\sim \caln_d(0, I).
    \]
    Denote $\calm'(D) = \frac{M^{-1/2}}{c}f(D) + S$. $\calm'$ is a Gaussian 
    mechanism with variance 1.
    Because of \eqref{eq:gauss-sensitivity},
    $\frac{M^{-1/2}}{c}f(D)$ has sensitivity
    \[
    \Delta^* = \sup_{D\sim D'}\left|\left|\frac{M^{-1/2}}{c}f(D) 
    - \frac{M^{-1/2}}{c}f(D')\right|\right|_2 \leq \frac{\Delta}{c}
    \]
    so $\calm'$ is $\mu$-GDP by Theorem~\ref{thm:gauss-mech-gdp}.
    $\calm$ is obtained by post-processing $\calm'$, 
    so it is also $\mu$-GDP.
\end{proof}

\theoremfunctionalmechdp*
\begin{proof}
    Let $T$ be the index set of the Gaussian process $G$, and let 
    $S = (x_1, \dotsc, x_n) \in T^n$. Then $(G(x_1), \dotsc, G(x_n))$ has a 
    multivariate Gaussian distribution with mean zero and covariance 
    $\Cov(G(x_i), G(x_j)) = K(x_i, x_j)$. Then the vector obtained by evaluating 
    $\calm(D)$ at $(x_1, \dotsc, x_n)$ is $\mu$-GDP by 
    Lemma~\ref{thm:gauss-mechanism-matrix}, as \eqref{eq:functional-mech-sensitivity}
    implies the sensitivity bound \eqref{eq:gauss-sensitivity}.
    Theorem~\ref{thm:gauss-mech-gdp} gives a curve of 
    $(\epsilon, \delta(\epsilon))$-bounds for all $\epsilon \geq 0$ from $\mu$.

    This holds for any $S \in T^n$ and any $n\in \N$, so 
    $\calm$ is $(\epsilon, \delta(\epsilon), \calf_0)$-DP for all $\epsilon \geq 0$,
    which immediately implies $(\epsilon, \delta(\epsilon), \calf)$-DP.
    This curve can be converted back to $\mu$-GDP (with regards to $\calf$)
    using Theorem~\ref{thm:gauss-mech-gdp}.
\end{proof}

\subsection{Functional Mechanism Sensitivities for DPConvCNP}\label{sec:func-mech-sensitivity-dp-convcnp}

To bound the sensitivity of $r^{(d)}$ and $r^{(s)}$ for the functional mechanism, we look at
two neighbouring context sets $D^{(c)}_1 = ((x_{n,1}^{(c)}, y_{n,1}^{(c)}))_{n=1}^N$ and 
$D^{(c)}_2 = ((x_{n,2}^{(c)}, y_{n,2}^{(c)}))_{n=1}^N$ that differ only in the points 
$(x_1, y_1) \in D^{(c)}_1$ and $(x_2, y_2) \in D^{(c)}_2$. Let $r^{(d)}_{D^{(c)}_i}$ for $i\in \{1, 2\}$ 
denote $r^{(d)}$ from \eqref{eq:channels} computed from $D^{(c)}_i$, and define $r^{(s)}_{D^{(c)}_i}$
similarly.

Denote the RKHS of the kernel $k$ by $\calh$. The distance in $\calh$ between 
the functions $\Ka = k(x_1, \cdot)$ and $\Kb = k(x_2, \cdot)$ is given by
\begin{align}
    ||\Ka - \Kb||_\calh^2 &= \langle \Ka - \Kb, \Ka - \Kb\rangle_\calh
    \\&= k(x_1, x_1) - 2k(x_1, x_2) + k(x_2, x_2)
    \\&\leq 2C_k. \label{eq:density-diff-bound}
\end{align}
For the RBF kernel, this is a tight bound
without other assumptions on $x$, as $k(x, x) = 1 = C_k$ for all $x$ and 
$k(x_1, x_2)$ can be made arbitrarily small by placing $x_1$ and $x_2$ far away from 
each other.

The sensitivity of $r^{(d)}$ for the functional mechanism can be bounded with \eqref{eq:density-diff-bound}: for ,
\begin{align}
    \Delta^2_\calh r^{(d)} &= \sup_{D^{(c)}_1\sim_S D^{(c)}_2}\left|\left|r^{(d)}_{D^{(c)}_1} - r^{(d)}_{D^{(c)}_2}\right|\right|_\calh^2
    \\&= \sup_{D^{(c)}_1 \sim_S D^{(c)}_2}\left|\left|\sum_{n=1}^N(k_{x^{(c)}_{n,1}} - k_{x^{(c)}_{n,2}})\right|\right|_\calh^2
    \\&= \sup_{x_1, x_2}\left|\left|k_{x_1} - k_{x_2}\right|\right|_\calh^2
    \\&\leq 2C_k,
\end{align}
where the second to last line follows from the fact that $D^{(c)}_1$ and $D^{(c)}_2$ only differ in one 
datapoint. This is a tight bound for the RBF kernel, because when $x = x_1$, $k_{x_1}(x) = 1$ and 
$k_{x_2}(x) = 1$ can be a made arbitrarily small by moving $x_2$ far away from $x$.

For $r^{(s)}$ and any function $\phi$ with $|\phi(y)| \leq C$, we first bound 
\begin{align}
    &||\phiya\Ka - \phiyb\Kb||_\calh^2 
    \\&= \phiya^2 k(x_1, x_1) - 2\phiya\phiyb k(x_1, x_2) + \phiyb^2 k(x_2, x_2)
    \\&\leq 4C^2 C_k.
\end{align}
Again, these are tight bounds for the RBF kernel if we don't constrain $x$ or $y$ further.

The $\calh$-sensitivity for $r^{(s)}$ is then derived in the same way as the sensitivity for $r^{(d)}$, giving 
\begin{equation}
    \Delta_\calh^2 r^{(s)} \leq 4C^2C_k.
\end{equation}

\subsection{Gaussian Mechanism for DPConvCNP}
A naive way of releasing $r(x)$ under DP is to first select discretisation points 
$x_1, \dotsc, x_n$, in some way, and release $r(x_1), \dotsc, r(x_n)$ with 
the Gaussian mechanism. The components of $r$, $r^{(s)}$ and $r^{(d)}$, have the following 
sensitivities:
\begin{align}
    \Delta^2 r^{(d)}(x) &= \sup_{D^{(c)}_1\sim_S D^{(c)}_2}\left|\left|r^{(d)}_{D^{(c)}_1}(x) - r^{(d)}_{D^{(c)}_2}(x)\right|\right|_2^2
    \\&= \sup_{D^{(c)}_1 \sim_S D^{(c)}_2}\left|\sum_{n=1}^N(k_{x^{(c)}_{n,1}}(x) - k_{x^{(c)}_{n,2}}(x))\right|^2
    \\&= \sup_{x_1, x_2}\left|k_{x_1}(x) - k_{x_2}(x)\right|^2\label{eq:gauss-mech-density-sens-line-3}
    \\&\leq C_k^2.
\end{align}
Line \eqref{eq:gauss-mech-density-sens-line-3} follows from the fact that $D^{(c)}_1$ and $D^{(c)}_2$ only differ in one datapoint.

For $r^{(s)}(x)$, we have
\begin{equation}
    |\phi(y_1)\Ka(x) - \phi(y_2)\Kb(x)|^2 \leq 4C^2C_k^2.
\end{equation}
Then we get 
\begin{equation}
    \Delta^2 r^{(s)}(x) \leq 4C^2C_k^2
\end{equation}
following the derivation of $\Delta^2 r^{(d)}(x)$.
For the RBF kernel, this is again tight without additional assumptions on $y$ or $x$.

These sensitivities give the following privacy bound:
\begin{theorem}\label{thm:gauss-mech-set-conv-dp}
    Let $\Delta_s^2 = 4C^2C_k^2$ and $\Delta_d^2 = C_k^2$. 
    Releasing $n$ evaluations of $r(x) = (r^{(d)}(x), r^{(s)}(x))$ with the Gaussian 
    mechanism with noise variance $\sigma^2$ is $\mu$-GDP
    for 
    \begin{equation}
        \mu = \sqrt{n \frac{\Delta_s^2 + \Delta^2_d}{\sigma^2}}.
    \end{equation}
\end{theorem}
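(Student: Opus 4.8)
The plan is to reduce the claim to the vector-valued Gaussian mechanism bound of \Cref{thm:gauss-mech-gdp} together with the GDP composition of \Cref{thm:gdp-composition}, reusing the per-point sensitivities $\Delta^2 r^{(d)}(x) \le C_k^2$ and $\Delta^2 r^{(s)}(x) \le 4C^2C_k^2$ established just above. The key observation is that releasing $n$ evaluations of $r^{(d)}$ is exactly applying the Gaussian mechanism to the vector-valued map $D^{(c)} \mapsto (r^{(d)}_{D^{(c)}}(x_1), \dotsc, r^{(d)}_{D^{(c)}}(x_n))$, and likewise for $r^{(s)}$, so the only new work is to turn the per-point sensitivities into $l_2$-sensitivities of these two vectors.

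First I would bound the $l_2$-sensitivity of the density vector. For neighbouring $D^{(c)}_1 \sim_S D^{(c)}_2$ differing in a single point, the difference at each evaluation point $x_i$ collapses to $|k_{x_1}(x_i) - k_{x_2}(x_i)|$, which is at most $C_k$ uniformly over $x_i$ and over the differing inputs $x_1, x_2$, exactly as in the derivation of $\Delta^2 r^{(d)}(x)$. Summing the squared differences over the $n$ evaluation points gives
\begin{equation}
    \sup_{D^{(c)}_1 \sim_S D^{(c)}_2} \sum_{i=1}^n \left| r^{(d)}_{D^{(c)}_1}(x_i) - r^{(d)}_{D^{(c)}_2}(x_i)\right|^2 \le n C_k^2 = n\Delta_d^2,
\end{equation}
so the squared $l_2$-sensitivity of the density vector is at most $n\Delta_d^2$. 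The identical argument, using the pointwise bound $|\phi(y_1)k_{x_1}(x_i) - \phi(y_2)k_{x_2}(x_i)|^2 \le 4C^2C_k^2$, yields squared $l_2$-sensitivity at most $n\Delta_s^2$ for the signal vector.

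With these two vector sensitivities in hand, I would apply \Cref{thm:gauss-mech-gdp} to each release: perturbing the density vector with $\caln(0, \sigma^2 I)$ noise is $\mu_d$-GDP with $\mu_d = \sqrt{n\Delta_d^2/\sigma^2}$, and the signal release is $\mu_s$-GDP with $\mu_s = \sqrt{n\Delta_s^2/\sigma^2}$. Since both vectors are released (non-adaptively, hence as a special case of adaptive composition), \Cref{thm:gdp-composition} combines them into a $\mu$-GDP guarantee with $\mu = \sqrt{\mu_d^2 + \mu_s^2} = \sqrt{n(\Delta_s^2 + \Delta_d^2)/\sigma^2}$, which is exactly the claimed bound. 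Equivalently, one could regard the concatenated $2n$-dimensional output as a single Gaussian mechanism with squared $l_2$-sensitivity $n(\Delta_d^2 + \Delta_s^2)$ and invoke \Cref{thm:gauss-mech-gdp} once.

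The only point requiring mild care, rather than a genuine obstacle, is the aggregation step: I bound $\sup \sum_i |\cdot|^2$ by $\sum_i \sup |\cdot|^2$, which is a valid (if possibly loose) upper bound on the $l_2$-sensitivity and therefore yields a sound, though conservative, privacy guarantee. No attention is needed to whether the worst-case differing points coincide across evaluation points, precisely because the per-point bounds $C_k$ and $2CC_k$ hold uniformly. Everything else is a direct instantiation of the GDP machinery already established.
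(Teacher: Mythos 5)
Your proof is correct and is in substance the same as the paper's: the paper treats the release as a $2n$-fold GDP composition of scalar Gaussian mechanisms with per-point sensitivities $\Delta_d$ and $\Delta_s$, which for isotropic Gaussian noise is exactly equivalent to your route of bounding the $l_2$-sensitivity of each $n$-dimensional evaluation vector by $\sqrt{n}\,\Delta_d$ (resp.\ $\sqrt{n}\,\Delta_s$) and then composing the two channels. Your explicit remark that $\sup\sum_i|\cdot|^2 \le \sum_i \sup|\cdot|^2$ is the only aggregation step needed just makes transparent what the paper's appeal to \cref{thm:gdp-composition} over the $n$ evaluations does implicitly; both yield the identical $\mu$.
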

\begin{proof}
Releasing $n$ evaluations of $r(x)$ is simply an $n$-fold composition of Gaussian mechanisms that release $r(x)$ for one value.
Releasing $r(x)$ for one value is a composition of releasing $r^{(d)}(x)$ and $r^{(s)}(x)$, which have the sensitivities $\Delta_s$ and $\Delta_d$. 
Now Theorems~\ref{thm:gauss-mech-gdp} and \ref{thm:gdp-composition}
prove the claim.
\end{proof}
As $\mu$ scales with $n$, this method must add a large amount of noise for
even moderate numbers of discretisation points. 

The difference between having a factor of $C_k^2$ in the $L_2$-sensitivities and $C_k$ in the 
$\calh$-sensitivities is explained by the fact that the kernel also directly affects the noise variance for the functional mechanism,
but it does not directly affect the noise variance with the Gaussian mechanism. This can be illustrated by considering what happens when
the kernel is multiplied by a constant $u > 0$. This multiplies $C_k$ by $u$, and multiplies the $L_2$-sensitivities by $u^2$. For the 
Gaussian mechanism, this means multiplying the noise standard deviation by $u$, but simultaneously multiplying all released values by 
$u$, which does not change the signal-to-noise ratio. For the functional mechanism, multiplying the kernel values effectively multiplies 
$c$ by $\sqrt{u}$ and the squared sensitivities by $u$, which then cancel each other in $\mu$.

For the RBF kernel and clipping function $\phi$ with threshold $C = 1$, we see that $\Delta_\calh^2 r^{(d)} = 2$ while $\Delta_2^2 r^{(d)} = 1$, and 
$\Delta_\calh^2 r^{(s)} = 4$, while $2\Delta_2^2 r^{(s)} = 4$, so the functional 
mechanism adds noise with slightly more variance as releasing a single value with the Gaussian mechanism, so the functional mechanism adds 
noise of less variance when 2 or more discretisation points are required. However, the functional mechanism adds correlated noise, which
is not as useful for denoising as the uncorrelated noise that the Gaussian mechanism adds.

\subsection{Details on Figure~\ref{fig:dp-accounting-comparison}}\label{app:accountant-comparison-details}
In this section, we go over the details of the calculations
behind Figure~\ref{fig:dp-accounting-comparison}.
The ``classical'' line of the figure is computed from
Theorem~\ref{thm:func-mech-bounds-old}.
The GDP line uses Definition~\ref{def:gaussian-dp}
to convert the $(\epsilon, \delta)$-pair into a 
GDP $\mu$ bound by numerically solving for $\mu$ in 
Eq.\eqref{eq:gdp-definition}. $\sigma$ is then found
with Theorem~\ref{thm:functional-mech-dp}.

For the RDP line, we get an RDP guarantee from Corollary 2 of \citet{jiangFunctionalRenyiDifferential2023},
which we convert to $(\epsilon, \delta)$ with 
Proposition 3 of \citet{jiangFunctionalRenyiDifferential2023}.
These give the equation
\begin{equation}
    \epsilon = \frac{\alpha \Delta^2}{2\sigma^2}
    - \frac{\ln \delta}{\alpha - 1},
    \label{eq:rdp-functional-mech-adp}
\end{equation}
where $\alpha > 1$ is a parameter of RDP that can be freely chosen.
The $\alpha$ value that minimises $\epsilon$ is
\begin{equation}
    \alpha^* = \sqrt{-\frac{2 \sigma^2 \ln \delta}{\Delta^2}} + 1.
\end{equation}
Plugging $\alpha^*$ into Eq.~\eqref{eq:rdp-functional-mech-adp} 
gives the quadratic equation
\begin{equation}
    -\epsilon \sigma^2 
    + 2 \sqrt{-\frac{\Delta^2 \ln \delta}{2}}\sigma 
    + \frac{\Delta^2}{2}
    = 0
    \label{eq:rdp-functional-mech-quadratic}
\end{equation}
that can be solved for $\sigma$. 

To see that choosing the $\alpha$ that minimises 
$\epsilon$ also leads to the smallest $\sigma$ that 
satisfies a given $(\epsilon, \delta)$-bound, let 
\begin{equation}
    \epsilon(\alpha, \sigma) = \frac{\alpha \Delta^2}{2\sigma^2}
    - \frac{\ln \delta}{\alpha - 1},
\end{equation}
and let $\sigma^*$ be the solution to
Eq.~\eqref{eq:rdp-functional-mech-quadratic}.
Let $\alpha, \sigma$ be another pair that satisfies
the $(\epsilon, \delta)$-bound. 
Since 
$\alpha^*$ is chosen to minimise $\epsilon$, 
$\epsilon(\alpha^*(\sigma), \sigma) \leq \epsilon(\alpha, \sigma)$. 
We can assume that 
$\epsilon(\alpha, \sigma) = \epsilon(\alpha^*(\sigma), \sigma) =\epsilon$, 
since otherwise we could reduce $\sigma$ further. Now
\begin{equation}
    \epsilon(\alpha^*(\sigma^*), \sigma^*)
    = \epsilon 
    = \epsilon(\alpha^*(\sigma), \sigma)
\end{equation}
so $\epsilon(\alpha^*(\sigma^*), \sigma^*) = 
\epsilon(\alpha^*(\sigma), \sigma)$. By manipulating 
Eq.~\eqref{eq:rdp-functional-mech-quadratic},
we can see that $\epsilon(\alpha^*(\cdot), \cdot)$
is strictly decreasing, so this implies that $\sigma = \sigma^*$.

\label{app:sampling}
\section{Efficient sampling of Gaussian process noise}
In order to ensure differential privacy within the DPConvCNP, we need to add GP noise (from a GP with an EQ kernel) to the functional representation outputted by the SetConv.
In practice, this is implemented by adding GP noise on the discretised representation, i.e. the data and density channels.

While sampling GP noise is typically tractable if the grid is one-dimensional, the computational and memory costs of sampling can easily become intractable for two- or three-dimensional grids.
This is because the number of grid points increases exponentially with the number of input dimensions and, in addition to this, the cost of sampling increases cubically with the number of grid points.
Fortunately, we can exploit the regularity of the grid and the fact that the EQ kernel is a product kernel, to make sampling tractable.
\Cref{prop:grid_sampling} illustrates how this can be achieved.

\begin{proposition} \label{prop:grid_sampling}
    Let $x \in \mathbb{R}^{N_1 \times \dots \times N_D}$ be a grid of points in $\mathbb{R}^D$ given by
    \begin{equation}
        x_{n_1 \dots~n_D} = \left(u_1 + (n_1 - 1) \gamma_1,  \dots, u_D + (n_D - 1) \gamma_D \right),
    \end{equation}
    where $u_d \in \mathbb{R}$, $\gamma_d \in \mathbb{R}^+$
    and $1 \leq n_d \in \mathbb{N} \leq N_d$ for each $d  = 1, \dots, D$.
    Also let $k: \mathbb{R}^D \times \mathbb{R}^D \to \mathbb{R}$ be a product kernel, i.e. a kernel that satisfies
    \begin{equation}
        k(z, z') = \prod_{d=1}^D k_d(z_d, z'_d),
    \end{equation}
    for some univariate kernels $k_d: \mathbb{R} \to \mathbb{R}$, for every $z, z' \in \mathbb{R}^D$, let
    \begin{equation}
        K_{dnm} = k_d(u_d + (n - 1) \gamma_d,~u_d + (m - 1) \gamma_d),
    \end{equation}
    and let $L_d$ be a Cholesky factor of the matrix $K_d$.
    Then if $\epsilon_{n_1 \dots ~n_D}  \in \mathbb{R} \sim \mathcal{N}(0, 1)$ is a grid of corresponding i.i.d. standard Gaussian noise, the scalars $f_{n_1 \dots n_D}  \in \mathbb{R}$, defined as
    \begin{equation}
        \label{eq:fsum}
        f_{n_1 \dots ~n_D} = \sum_{k_1 = 1}^{N_1} L_{1 n_1 k_1} \dots \sum_{k_D = 1}^{N_D} L_{D n_D k_D} ~\epsilon_{k_1 \dots ~k_D},
    \end{equation}
    are Gaussian-distributed, with zero mean and covariance
    \begin{equation}
        \mathbb{C}[f_{n_1 \dots ~n_D}, f_{m_1 \dots ~m_D}] = k(x_{n_1 \dots~n_D}, x_{m_1 \dots~m_D}).
    \end{equation}
\end{proposition}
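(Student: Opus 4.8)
The plan is to exploit the fact that each $f_{n_1\dots n_D}$ in \eqref{eq:fsum} is a \emph{fixed linear combination} of the i.i.d.\ standard Gaussians $\epsilon_{k_1\dots k_D}$. Since any linear image of a jointly Gaussian vector is again jointly Gaussian, the entire grid $(f_{n_1\dots n_D})$ is jointly Gaussian, and so it suffices to verify that its mean vanishes and that its covariance matches the claimed expression. The mean is immediate: each $\epsilon_{k_1\dots k_D}$ has zero mean, and expectation commutes with the finite linear combination defining $f$, so $\mathbb{E}[f_{n_1\dots n_D}] = 0$.

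For the covariance, first I would collapse the nested sums in \eqref{eq:fsum} into a single sum over the multi-index $\mathbf{k} = (k_1, \dots, k_D)$, writing $f_{\mathbf{n}} = \sum_{\mathbf{k}} \big(\prod_{d=1}^D L_{d\,n_d\,k_d}\big)\,\epsilon_{\mathbf{k}}$. Expanding the product of two such linear forms and pushing the expectation inside gives $\mathbb{C}[f_{\mathbf{n}}, f_{\mathbf{m}}] = \sum_{\mathbf{k},\mathbf{k}'} \big(\prod_d L_{d\,n_d\,k_d}\big)\big(\prod_d L_{d\,m_d\,k'_d}\big)\,\mathbb{E}[\epsilon_{\mathbf{k}}\epsilon_{\mathbf{k}'}]$. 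Because the noise is i.i.d.\ standard Gaussian, $\mathbb{E}[\epsilon_{\mathbf{k}}\epsilon_{\mathbf{k}'}] = \prod_d \delta_{k_d k'_d}$, so the Kronecker deltas collapse the double multi-index sum to a single one, leaving $\mathbb{C}[f_{\mathbf{n}}, f_{\mathbf{m}}] = \sum_{\mathbf{k}} \prod_{d=1}^D L_{d\,n_d\,k_d} L_{d\,m_d\,k_d}$.

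The crux of the argument is then to separate this single sum over the grid of indices $\mathbf{k}$ into a product of one-dimensional sums. Since the summand is a product over $d$ of factors each depending only on $k_d$, the distributive identity $\sum_{k_1,\dots,k_D} \prod_d a_{d,k_d} = \prod_d \sum_{k_d} a_{d,k_d}$ applies, yielding $\prod_{d=1}^D \big(\sum_{k_d=1}^{N_d} L_{d\,n_d\,k_d} L_{d\,m_d\,k_d}\big)$. Each inner factor is exactly the $(n_d, m_d)$ entry of $L_d L_d^\top = K_d$, i.e.\ $k_d(u_d + (n_d-1)\gamma_d,\, u_d + (m_d-1)\gamma_d)$, and invoking the product-kernel assumption identifies $\prod_d (K_d)_{n_d m_d}$ with $k(x_{\mathbf{n}}, x_{\mathbf{m}})$, which is the claimed covariance. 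The only genuine subtlety is this factorisation step together with the attendant multi-index bookkeeping — ensuring the row indices $n_d, m_d$ stay attached to the correct dimension throughout; everything else is routine manipulation once the Gaussianity and the i.i.d.\ structure are in hand.
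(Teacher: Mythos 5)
Your proposal is correct and follows essentially the same route as the paper's proof: zero mean by linearity, expansion of the covariance, collapse of the double multi-index sum via the i.i.d.\ structure of the noise, factorisation of the remaining sum into one-dimensional sums identified with the entries of $L_d L_d^\top = K_d$, and finally the product-kernel assumption. No gaps.
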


Before giving the proof of \Cref{prop:grid_sampling}, we provide pseudocode for this approach in \Cref{alg:grid_sample} and discuss the computation and memory costs of this implementation compared to a naive approach.
Naive sampling on a grid of $N_1 \times \dots \times N_D$ points requires computing a Cholesky factor for the covariance matrix of the entire grid and then multiplying standard Gaussian noisy by this factor.
We discuss the costs of these operations, comparing them to the efficient approach.
\begin{algorithm}[t!]
\caption{Efficient sampling of GP noise on a $D$-dimensional grid.}
\label{alg:grid_sample}
\begin{algorithmic}
\vspace{0.5mm}
\STATE {\bfseries Input~~~:} Dimension-wise grid location $u_d \in \mathbb{R}$, spacing $\gamma_d \in \mathbb{R}$ and number of points $N_d  \in \mathbb{N}$, \\
~~~\hspace{9.8mm} Product kernel $k: \mathbb{R}^D \times \mathbb{R}^D \to \mathbb{R}$ with factors $k_d : \mathbb{R} \times \mathbb{R} \to \mathbb{R}$.

\STATE {\bfseries Output:} Sample $f_{n_1 \dots n_D}$ from GP with kernel $k$ on grid inputs $x_{n_1 \dots n_D}$ defined by $u_d, \gamma_d, N_d$.
\STATE ~\\
\STATE Sample $f_{n_1 \dots n_D} \sim \mathcal{N}(0, 1)$ for each $1 \leq n_d \leq N_d$, $d = 1, \dots, D$ \COMMENT{Sample Gaussian noise}
\FOR{$d = 1$ \textbf{to} $D$}
\STATE $K_{dnm} \gets k_d(u_d + n \gamma_d, u_d + m \gamma_d)$ for $0 \leq n, m \leq N_d - 1$ \COMMENT{Compute covariance}
\STATE $L_d \gets \textsc{Cholesky}(K_d)$ \COMMENT{Compute Cholesky factor}
\STATE $f \gets \textsc{MatmulAlongDim}(L_d, f, d)$ \COMMENT{Matmul $f$ by $L_d$ along dimension $d$}
\ENDFOR
\end{algorithmic}
\end{algorithm}

\textbf{Computing Cholesky factors.}
The cost of computing a Cholesky factor for covariance matrix of the entire $N_1 \times \dots \times N_D$ grid incurs a computation cost of $\mathcal{O}(N_1^3 \times \dots \times N_D^3)$ and a memory cost of $\mathcal{O}(N_1^2 \times \dots \times N_D^2)$.
By contrast, \Cref{alg:grid_sample} only ever computes Cholesky factors for $N_d  \times N_d$ covariance matrices, so it incurs a computational cost of \smash{$\mathcal{O}\big(\sum_{d=1}^D N_d^3\big)$} and a memory cost of \smash{$\mathcal{O}\big(\sum_{d=1}^D N_d^2\big)$}, which are both much lower than those of a naive implementation.
For clarity, if $N_1 = \dots = N_D = N$, naive factorisation has $\mathcal{O}(N^{3D})$ computational and $\mathcal{O}(N^{2D})$ memory cost, whereas the efficient implementation has $\mathcal{O}(D N^3)$ computational and $\mathcal{O}(D N^2)$ memory cost. 

\textbf{Matrix multiplications.}
In addition, naively multiplying the Gaussian noise by the Cholesky factor of the entire covariance matrix incurs a $\mathcal{O}(N_1^2 \times \dots \times N_D^2)$ computation cost.
On the other hand, in \Cref{alg:grid_sample} we perform $D$ batched matrix-vector multiplications, where the $d^{\text{th}}$ multiplication consists of \smash{$\prod_{d' = d} N_{d'}$} matrix-vector multiplications, where a vector with $N_d$ entries is multiplied by an $N_d \times N_d$ matrix.
The total computation cost for this step is only \smash{$\mathcal{O}\big(\sum_{d=1}^D N_d^2 \prod_{d' \neq d} N_{d'}\big)$}.
For example, if $N_1 = \dots = N_D = N$, naive matrix-vector multiplication has a computation cost of $\mathcal{O}(N^{2D})$, whereas the efficient implementation has a computation cost of $\mathcal{O}(N^{D+1})$.

In \Cref{alg:grid_sample}, \textsc{Cholesky} denotes a function that computes the Cholesky factor of a square positive-definite matrix.
$\textsc{MatvecAlongDim}(L_d, f, d)$ denotes the batched matrix-vector multiplication of an array $f$ by a matrix $L_d$ along dimension $d$, batching over the dimensions $d' \neq d$.
Specifically, given a $D$-dimensional array $b$ with dimension sizes $N_1, \dots, N_D$ and an $N_d \times N_d$ matrix $A$, the matrix-vector multiplication of $b$ by $A$ along dimension $d$ outputs the $D$-dimensional array
\begin{equation}
    \textsc{MatvecAlongDim}(A, b, d)_{n_1 \dots n_D} = \sum_{j = 1}^{N_d} A_{n_d j} b_{n_1 \dots n_{d-1}~j~n_{d+1} \dots n_D}.
\end{equation}
From the above equation, we can see that initialising $f$ with standard Gaussian noise, and batch-multiplying $f$ by $L_d$ along dimension $d$ for each $d = 1, \dots, D$, amounts to computing the nested sum in \Cref{eq:fsum}.
Note that the order with which these batch multiplications are performed does not matter: it does not change neither the numerical result nor the computation or memory cost of the algorithm.

\begin{proof}[Proof of \Cref{prop:grid_sampling}]
    From the definition above, we see that $f_{n_1 \dots ~n_D}$ is a linear transformation of Gaussian random variables with zero mean, and therefore also has zero mean.
    For the covariance, again from the definition above, we have
    \begingroup
    \allowdisplaybreaks
        \begin{align}
            &\mathbb{C}~[f_{n_1 \dots ~n_D}, f_{m_1 \dots ~m_D}] = \\
            &\mathbb{C}\left[\sum_{k_1 = 1}^{N_1} L_{1 n_1 k_1} \dots \sum_{k_D = 1}^{N_D} L_{D n_D k_D} ~\epsilon_{k_1 \dots ~k_D}, \sum_{l_1 = 1}^{N_1} L_{1 m_1 l_1} \dots \sum_{l_D = 1}^{N_D} L_{D m_D l_D} ~\epsilon_{l_1 \dots ~l_D}\right] = \\
            &\mathbb{C}\left[\sum_{k_1 = 1}^{N_1} \dots \sum_{k_D = 1}^{N_D} L_{1 n_1 k_1} \dots L_{D n_D k_D} ~\epsilon_{k_1 \dots ~k_D}, \sum_{l_1 = 1}^{N_1} \dots \sum_{l_D = 1}^{N_D} L_{1 m_1 l_1} \dots L_{D m_D l_D} ~\epsilon_{l_1 \dots ~l_D}\right] = \\
            &\mathbb{E}\left[\left(\sum_{k_1 = 1}^{N_1} \dots \sum_{k_D = 1}^{N_D} L_{1 n_1 k_1} \dots L_{D n_D k_D} ~\epsilon_{k_1 \dots ~k_D} \right) \left(\sum_{l_1 = 1}^{N_1} \dots \sum_{l_D = 1}^{N_D} L_{1 m_1 l_1} \dots L_{D m_D l_D} ~\epsilon_{l_1 \dots ~l_D}\right)\right],
        \end{align}
    \endgroup
    where we have used the fact that the expectation of $f$ is zero.
    Now, expanding the product of sums above, taking the expectation and using the fact that $\epsilon_{n_1 \dots ~n_D}$ are i.i.d., we see that all terms vanish, except those where $k_d =  l_d$ for all $d = 1, \dots, D$.
    Specifically, we have
    \begingroup
    \allowdisplaybreaks
        \begin{align}
            &\mathbb{C}~[f_{n_1 \dots ~n_D}, f_{m_1 \dots ~m_D}] = \\
            =&~\mathbb{E}\left[\sum_{k_1 = 1}^{N_1} \dots \sum_{k_D = 1}^{N_D} \sum_{l_1 = 1}^{N_1} \dots \sum_{l_D = 1}^{N_D} L_{1 n_1 k_1} \dots L_{D n_D k_D} L_{1 m_1 l_1} \dots L_{D m_D l_D} ~\epsilon_{k_1 \dots ~k_D} \epsilon_{l_1 \dots ~l_D} \right] \\
            = &\sum_{k_1 = 1}^{N_1} \dots \sum_{k_D = 1}^{N_D} \sum_{l_1 = 1}^{N_1} \dots \sum_{l_D = 1}^{N_D} L_{1 n_1 k_1} \dots L_{D n_D k_D} L_{1 m_1 l_1} \dots L_{D m_D l_D} ~\mathbb{E}\left[\epsilon_{k_1 \dots ~k_D} \epsilon_{l_1 \dots ~l_D} \right] \\
            = &\sum_{k_1 = 1}^{N_1} \dots \sum_{k_D = 1}^{N_D} \sum_{l_1 = 1}^{N_1} \dots \sum_{l_D = 1}^{N_D} L_{1 n_1 k_1} \dots L_{D n_D k_D} L_{1 m_1 l_1} \dots L_{D m_D l_D} ~~1_{k_1 = ~l_1, \dots, ~k_D = ~l_D} \\
            =&\sum_{k_1 = 1}^{N_1} \dots \sum_{k_D = 1}^{N_D} L_{1 n_1 k_1} \dots L_{D n_D k_D} L_{1 m_1 k_1} \dots L_{D m_D k_D} \\
            =&\sum_{k_1 = 1}^{N_1} L_{1 n_1 k_1} L_{1 m_1 k_1} \dots \sum_{k_D = 1}^{N_D} L_{D n_D k_D} L_{D m_D k_D} \\
            =&\prod_{d = 1}^D K_{d n_d m_d}\\
            =&~k(x_{n_1 \dots~n_D}, x_{m_1 \dots~m_D}).
        \end{align}
    \endgroup
    which is the required result.
\end{proof}

\section{Additional results}
\label{app:additional-results}

\subsection{How effectively does the ConvCNP learn to undo the DP noise?}

\pg{Quantifying performance gaps}
In this section we provide some additional results on the performance of the DPConvCNP and the functional mechanism.
Specifically, we investigate the performance gap between the DPConvCNP and the oracle (non-DP) Bayes predictor.
Assuming the data generating prior is known, which in our synthetic experiments it is, the corresponding Bayes posterior predictive attains the best possible average log-likelihood achievable.
We determine and quantify the sources of this gap in a controlled setting.

\begin{figure}[t!]
    \centering
    \includegraphics[width=\linewidth]{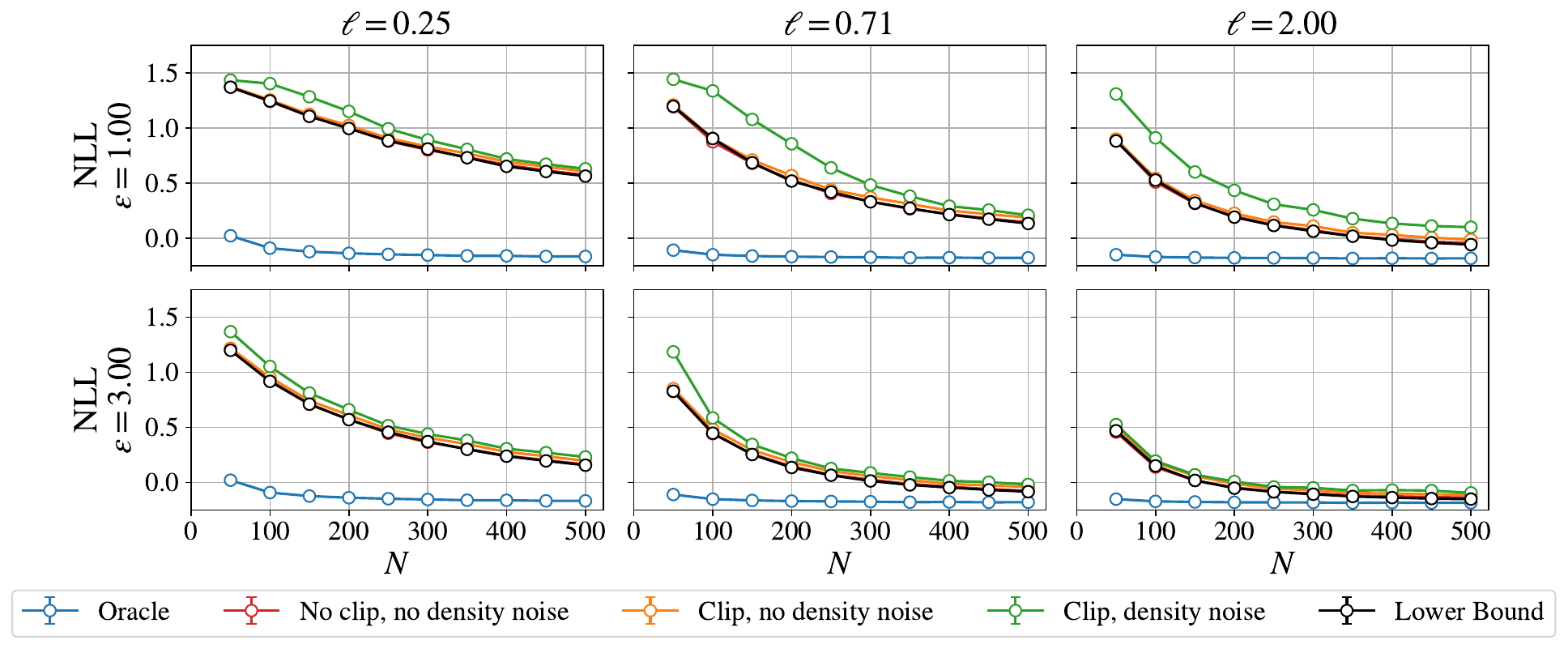}
    \vspace{-5mm}
    \caption{
        DPConvCNP performance on the GP modelling task, where the data are generated using an EQ GP with lengthscale $\ell.$
        We train three models per $\epsilon, \ell$ combination, keeping $\delta = 10^{-3}$ fixed as well as the clipping threshold $C = 2.00$ and noise weight $t = 0.50$ fixed.
        Specifically, we train one model where only noise to the signal channel (red; no clip, no density), one model where noise and clipping are applied to the signal channel (orange; clip, no density noise) and another model where noise and clipping to the signal channel as well as noise to the density channel are applied (green; clip, density noise).
        We also show the NLL of the oracle, non-DP, Bayesian posterior, which is the best average NLL that can be obtained on this task (blue).
        Lastly, we show a bound to the functional mechanism (black), which is a lower bound on the NLL that can be obtained with the functional mechanism with $C = 2.00, t = 0.50$ on this task.
        We used 512 evaluation tasks for each $N, \ell, \epsilon$ combination, and report mean NLLs together with their 95\% confidence intervals.
        Note that the error bars are plotted but are too small to see in the plot.
    }
    \label{fig:bound-sanity-check}
\end{figure}
\pg{Sources of the performance gaps}
Specifically, the performance gap can be broken down into two main parts: one part due to the DP mechanism (specifically the signal channel clipping and noise, and the density channel noise) and another part due to the fact that we are training a neural network to map the DP representation to an estimate of the Bayes posterior.
To assess the performance gap introduced by each of these sources, we perform a controlled experiment with synthetic data from a Gaussian process prior (see \cref{fig:bound-sanity-check}).

\pg{Gap quantification experiment}
We fix the clipping threshold value at $C = 2.00,$ which is a sensible setting since the marginal confidence intervals of the data generating process are $\pm 1.96.$
We also fix the noise weighting at $t = 0.50,$ which is again is a sensible setting since it places roughly equal importance to the noise added to the density and the signal channels.
We consider three different settings for the prior lenghtscale $(\ell = 0.25, 0.71, 2.00)$ and two settings for the DP parameters ($\epsilon = 1.00, 3.00$ and fixed $\delta = 10^{-3}$).
For each of the six combinations of settings, we train three different DPConvCNPs, one with just signal noise (red; no clip, no density noise), one with signal noise and clipping (orange; clip, no density noise) and one with signal noise and clipping and also density noise (green; clip, density noise).
Note, only the last model has DP guarantees.
We compare performance with the non-DP Bayesian posterior oracle (blue).

\pg{Lower bound model}
When we only add signal noise to the ConvCNP representation (and do not apply clipping or add density noise), and the true generative process is a GP such as in this case, the predictive posterior (given the noisy signal representation and the noiseless density representation) is a GP.
That is because the data come from a known GP, and the signal channel is a linear combination of the data (since we have turned off clipping) plus GP noise, so it is also a GP.
Therefore, we can write down a closed form predictive posterior in this case.
We refer to this as the \textit{lower bound model} (black) in \cref{fig:bound-sanity-check}, because for a given $C$ and $t,$ the performance of this model is a lower bound to the NLL of any model that uses this representation as input.
Note however that different lower bounds can be obtained for different $C$ and $t.$

\pg{Conclusions}
We observe that the DPConvCNP with no clipping and no density noise (red) matches the performance of the lower bound model.
This is encouraging as it suggests that the model is able to undo the effect of the signal noise perfectly.
We also observe that applying clipping (orange) does not reduce performance substantially, which is also encouraging as it suggests that the model is able to cope with the effect of clipping on the signal channel, when it is trained to do so.
Lastly, we observe that there is an additional gap in performance is introduced due to noise in the density channel (green).
This is expected since the density noise is substantial and confounds the context inputs.
This gap reduces as the number of context points increases, which is again expected.
From the above, we conclude that in practice, the model is able to make predictions under DP constraints that are near optimal, i.e. there is likely not a significant gap due to approximating the mapping from the DP representation to the optimal predictive map, with a neural network.

\subsection{Supplementary model fits for the synthetic tasks}
\label{app:supp-fits}

We also provide supplementary model fits for the synthetic, Gaussian and non-Gaussian tasks.
For each task, we provide fits for three parameter settings ($\ell$ and $\tau$), two privacy budgets, four context sizes and two dataset random seeds.
See \cref{fig:synthetic-fits-eq-1,fig:synthetic-fits-eq-2,fig:synthetic-fits-sawtooth-1,fig:synthetic-fits-sawtooth-2}, at the end of this document, for model fits.

\section{Differentially-Private Sparse Gaussian Process Baseline}
\label{app:dpsvgp}
Here, we provide details of the differentially-private sparse variational Gaussian process (DP-SVGP) baseline.

Let $D = (\mathbf{x}, \mathbf{y})$ denote a dataset consisting of inputs $N$ inputs $\mathbf{x} \in \mathcal{X}^N$ and $N$ corresponding outputs $\mathbf{y} \in \mathcal{Y}^N$. We assume the observations are generated according to the probabilistic model:
\begin{equation}
    \begin{aligned}
        f \sim \mathcal{GP}(0, k_{\theta_1}(x, x')) \\
        \mathbf{y} | f, \mathbf{x} \sim \prod_{n=1}^N p_{\theta_2}(y_n | f(x_n)),
    \end{aligned}
\end{equation}
where $k_{\theta}$ denotes the GP kernel from which the latent function $f$ is sampled from, with hyperparameters $\theta$, and $\theta_2$ denotes the parameters of the likelihood function. Computing the posterior distribution $p_{\theta}(f | \mathbf{x}, \mathbf{y})$ is only feasible when the likelihood is Gaussian. Even when this is true, the computational complexity associated with this computation is $\mathcal{O}(N^3)$.

Sparse variational GPs \citep{titsias2009variational} offer a solution to this by approximating the true posterior with the GP
\begin{equation}
    q_{\theta, \phi}(f) = p_{\theta}(f_{\neq \mathbf{u}} | \mathbf{u}) q_{\phi}(\mathbf{u})
\end{equation}
with $\mathbf{u} = f(\mathbf{z})$, where $\mathbf{z} \in \mathcal{X}^M$ denote a set of $M$ inducing locations, and $q_{\phi}(\mathbf{u}) = \mathcal{N}(\mathbf{u}; \mathbf{m}, \mathbf{S})$. The computational complexity associated with this posterior approximation is $\mathcal{O}(NM^2)$, which is significantly lower if $M \ll N$. We can optimise the variational parameters $\phi = \{\mathbf{m}, \mathbf{S}, \mathbf{z}\}$ by optimisation of the evidence lower bound, $\mathcal{L}_{\text{ELBO}}$:
\begin{equation}
    \mathcal{L}_{\text{ELBO}}(\theta, \phi) = \mathbb{E}_{q_{\theta}(f)}\left[\log p_{\theta}(\mathbf{y} | f(\mathbf{x}))\right] - \text{KL}\left[q_{\phi}(\mathbf{u}) \| p_{\theta}(\mathbf{u})\right].
\end{equation}
Importantly, $\mathcal{L}_{\text{ELBO}}$ also serves as a lower bound to the marginal likelihood $p_{\theta}(\mathbf{y} | \mathbf{x})$, and so we can optimise this objective with respect to both $\theta$ and $\phi$. Since the likelihood factorises, we can obtain an unbiased estimate to the $\mathcal{L}_{\text{ELBO}}$ by sampling batches of datapoints. This lends itself to stochastic optimisation using gradient based methods, such as SGD. By replacing SGD with a differentially-private gradient-based optimisation routine (DP-SGD), we obtain our DP-SVGP baseline.

A difficulty in performing DP-SGD to optimise model and variational parameters of the DP-SVGP baseline is that the test-time performance is a complex function of the hyperparameters of DP-SGD (i.e.\ maximum gradient norm, batch size, epochs, learning rate), the initial hyperparameters of the model (i.e.\ kernel hyperparameters, and likelihood parameters), and the initial variational parameters (i.e.\ number of inducing locations $M$). Fortunately, we are considering the meta-learning setting, in which we have available to us a number of datasets that we can use to tune these hyperparameters. We do so using Bayesian optimisation (BO) to maximise the sum of the $\mathcal{L}_{\text{ELBO}}$'s for a number of datasets. To limit the number of parameters we optimise using BO, we set the initial variational mean and variational covariance to the prior mean and covariance, $\mathbf{m} = \mathbf{0}$ and $\mathbf{S} = k(\mathbf{z}, \mathbf{z})$. 

In \cref{tab:bo_ranges}, we provide the range for each hyperparameter that we optimise over. In all cases, we fix the number of datasets that we compute the $\mathcal{L}_{\text{ELBO}}$ for to 64 and the number of BO iterations to 200. We use Optuna \citep{akiba2019optuna} to perform the BO, and Opacus \citep{opacus} to perform DP-SGD using the PRV privacy accountant.

\begin{table}[htb]
    \centering
    \begin{tabular}{r | c c}
    \toprule
         Hyperparameter & Min & Max \\
         \midrule
          Max gradient norm & 1 & 20 \\ 
         Epochs & 200 & 1000 \\
         Batch size & 10 & 128 \\
         Learning rate & 0.001 & 0.02 \\
         \midrule
         Lengthscale & 0.1 & 2.5 \\
         Period & 0.25 & 4.0 \\
         Scale & 0.5 & 2.0 \\
         Observation noise & 0.05 & 0.25 \\
     \bottomrule
    \end{tabular}
    \vspace{2mm}
    \caption{The ranges of DP-SGD hyperparameter settings (upper half) and initial model hyperparameters (lower half) over which Bayesian optimisation is performed for the DP-SVGP baseline.}
    \label{tab:bo_ranges}
\end{table}

\section{Experiment details}
\label{app:experiment-details}

In this section we give full details for our experiments.
Specifically, we describe the generative processes we used for the Gaussian, non-Gaussian and sim-to-real tasks.

\subsection{Synthetic tasks}\label{app:experiment-details-synthetic}

First, we specify the general setup that is shared between the Gaussian and non-Gaussian tasks.
Second, we specify the Gaussian and non-Gaussian generative processes used to generate the outputs.
Lastly we give details on the parameter settings for the amortised and the non-amortised models.

\pg{General setup}
During training, we generate data by sampling the context set size $N \sim \mathcal{U}[1, 512],$ then sample $N$ context inputs uniformly in $[-2.00, 2.00]$ and $512$ target inputs in the range $[-6.00, 6.00].$
We then sample the corresponding outputs using either the EQ Gaussian process or the sawtooth process, which we define below.
For the DPConvCNP we use 6,553,600 such tasks with a batch size of 16 at training time, which is equivalent to 409,600 gradient update steps.
We do note however that this large number of tasks, which was used to ensure convergence across all variants of the models we trained, is likely unnecessary and significantly fewer tasks (fewer than half of what we used) suffices.
Throughout optimisation, we maintain a fixed set of 2,048 tasks generated in the same way, as a validation set.
Every 32,768 gradient updates (i.e. 200 times throughout the training process) we evaluate the model on these held out tasks, maintaining a checkpoint of the best model encountered thus far.
After training, this best model is the one we use for evaluation.
At evaluation time, we fix $N$ to each of the numbers specified in \cref{fig:synthetic}, and sample $N$ context inputs uniformly in $[-2.00, 2.00]$ and $512$ target inputs in the range $[-2.00, 2.00].$
We repeat this procedure for 512 separate tasks, and report the mean NLL together with its 95\% confidence intervals in \cref{fig:synthetic}.
For all tasks, we set the privacy budget with $\delta = 10^{-3}$ and $\epsilon \sim \mathcal{U}[0.90, 4.00].$

\pg{Gaussian generative process}
For the Gaussian task, we generate the context and target outputs from a GP with the exponentiated quadratic (EQ) covariance, which is defined as
$$k(x, x') = \sigma_v^2 \exp\left(-\frac{(x - x')^2}{2\ell^2}\right) + \sigma_n^2.$$

\pg{Sawtooth generative process}
For the non-Gaussian task, we generate the context and target outputs from a the truncation of the Fourier series of the sawtooth waveform
$$f(x) = \frac{2}{\pi}\sum_{m = 1}^2 \frac{\sin(2m\pi (d x / \tau) + \phi)}{m}$$
where $d \sim \mathcal{U}[-1, 1]$ is a direction sampled uniformly from $\{-1, 1\},$ $\tau$ is a period and $\phi \in [0, 2\pi]$ is a phase shift.
In preliminary experiments, we found that the DPConvCNP worked well with the raw sawtooth signal (i.e. the full Fourier series) but the DP-SVGP struggled due to the discontinuities of the original signal, so we truncated the series, giving an advantage to the DP-SVGP.

\pg{Non-amortised and amortised tasks}
For the non-amortised tasks, we train and evaluate a single model for a single setting of the generative parameter $\ell$ or $\tau.$
Specifically, for the Gaussian tasks, we fix $\ell = 0.50, 0.71$ or $2.00$ and train a separate model for each one, that is then tested on data with the same value of $\ell.$
Similarly, for the non-Gaussian tasks, we fix $\tau^{-1} = 0.25, 0.50$ or $1.00$ and train a separate model for each one, that is again then tested on data with the same value of $\tau.$
For the amortised tasks, we sample the generative parameter $\ell$ or $\tau$ at random.
Specifically, for the Gaussian tasks, we sample $\ell \sim \mathcal{U}[0.20, 2.50]$ for each task that we generate, and train a \textit{single model} on these data.
We then evaluate this model for each of the settings $\ell = 0.50, 0.71$ or $2.00.$
Similarly, for the non-Gaussian tasks, we sample $\tau^{-1} \sim \mathcal{U}[0.20, 1.25]$ for each task that we generate, and train a \textit{single model} on these data.
We then evaluate this model for each of the settings $\tau^{-1} = 0.25, 0.50$ or $1.00.$
The results of these procedures are summarised in \cref{fig:synthetic}.

\subsection{Sim-to-real tasks}
For the sim-to-real tasks we follow a training procedure that is similar to that of the synthetic experiments.
During training, we generate data by sampling the context set size $N \sim \mathcal{U}[1, 512],$ then sample $N$ context inputs uniformly in $[-1.00, 1.00]$ and $512$ target inputs in the range $[-1.00, 1.00].$
We then generate data by sampling them from a GP with covariance
\begin{equation}
    k(x, x') = k_{3/2, \ell}(x, x') + \sigma_n^2,
\end{equation}
where $k_{3/2, \ell}$ is a Matern-3/2 covariance with lengthscale $\ell \sim \mathcal{U}[0.50, 2.00]$ and noise standard deviation $\sigma_n \sim \mathcal{U}[0.30, 0.80].$
For all tasks, we set the privacy budget at $\delta = 10^{-3}$ and $\epsilon \sim \mathcal{U}[0.90, 4.00].$
The Dobe !Kung dataset is publicly available in TensorFlow 2 \citep{Abadi_2016}, specifically the Tensorflow Datasets package.
Note that we rescale the ages to be between $-1.0$ and $1.0$ and normalise the heights and weights of users to have zero mean and unit standard deviation. We assume that 
the required statistics for these normalisations are 
public, but they could be released with additional 
privacy budget. Inaccurate normalisations would only
increase the sim-to-real gap and reduce utility, but
not affect the privacy analysis.
At evaluation time, we fix $N$ to each of the numbers specified in \cref{fig:sim-to-real}.
We then sample $N$ points at random from the normalised !Kung dataset and use the remaining points as the target set.
We repeat this procedure for 512 separate tasks, and report the mean NLL together with its 95\% confidence intervals in \cref{fig:sim-to-real}.

\subsection{Optimisation}
For all our experiments with the DPConvCNP we use Adam with a learning rate of $3 \times 10^{-4},$ setting all other options to the default TensorFlow 2 settings.

\subsection{Compute details}
\label{app:compute}
We train the DPConvCNP on a single NVIDIA GeForce RTX 2080 Ti GPU, on a machine with 20 CPU workers.
Meta-training requires approximately 5 hours, with synthetic data generated on the fly.
Meta-testing is performed on the same infrastructure, and timings are reported in \cref{fig:timing}.

\section{DPConvCNP architecture}
Here we give the details of the DPConvCNP architecture used in our experiments.
The DPConvCNP consists of a DPSetConv encoder, and a CNN decoder followed by a SetConv decoder.
We specify the details for the parameters of these layers below.

\pg{DPSetConv encoder and SetConv decoder}
For all our experiments, we initialise the DPSetConv and SetConv lengthscales (which are also used to sample the DP noise) to $\lambda = 0.20,$ and allow this parameter to be optimised during training.
For the learnable DP parameter mappings $t(\mu, N) = \texttt{sig}(\text{NN}_t(\mu, N))$ and $C(\mu, N) = \text{exp}(\text{NN}_{C}(\mu, N))$ we use simple fully connected feedforward networks with two layers of 32 hidden units each.
For the discretisation step in the encoder, we use a resolution of 32 points per unit for all our experiments.
We also use a fixed discretisation window of $[-7, 7]$ for the synthetic tasks and $[-2, 2]$ for the sim-to-real tasks.
We did this for simplicity, although our implementation supports dynamically adaptive discretisation windows.

\pg{Decoder convolutional neural network}
Most of the computation involved in the DPConvCNP happens in the CNN of the decoder.
For this CNN we used a bare-bones implementation of a UNet with skip connections.
This UNet consists of an initial convolution layer processes the signal and density channels, along with two constant channels fixed to the magnitudes $\sigma_s, \sigma_d$ of the DP noise used in these two channels, into another set of $C_{\text{in}}$ channels.
The result of the initial layer is then passed through the UNet backbone, which consists of $N$ convolutional layers with a stride of $2$ and with output channels $C = (C_1, \dots, C_N),$ followed by $N$ transpose convolutions again with a stride of $2$ and output channels $C = (C_N, \dots, C_1).$
Before applying each of these convolution layers, we create a skip connection from the input of the convolution layer and concatenate this to the output of the corresponding transpose convolution layer.
Finally, we pass the output of the UNet through a final transpose convolution with $C_{\text{out}} = 2$ output channels, which are then smoothed by the SetConv decoder to obtain the interpolated mean and (log) standard deviation of the predictions at the target points.
For all our experiments, we used $C_{\text{in}} = 32, N = 7$ and $C_n = 256.$
We used a kernel size of 5 for all convolutions and transpose convolutions.

\begin{figure*}[p]
    \centering
    \vspace{10mm}
    \includegraphics[width=\linewidth]{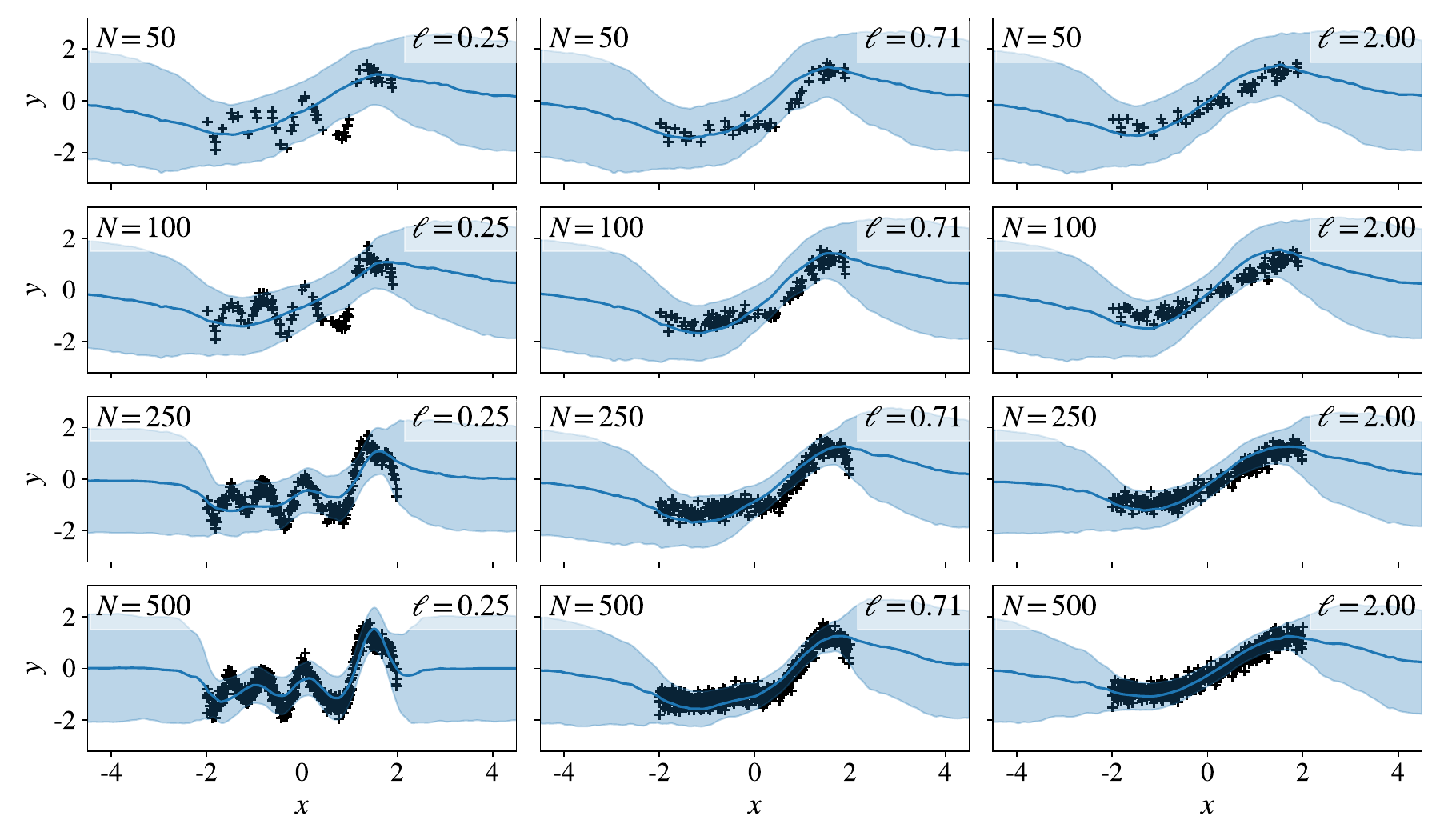}
    \includegraphics[width=\linewidth]{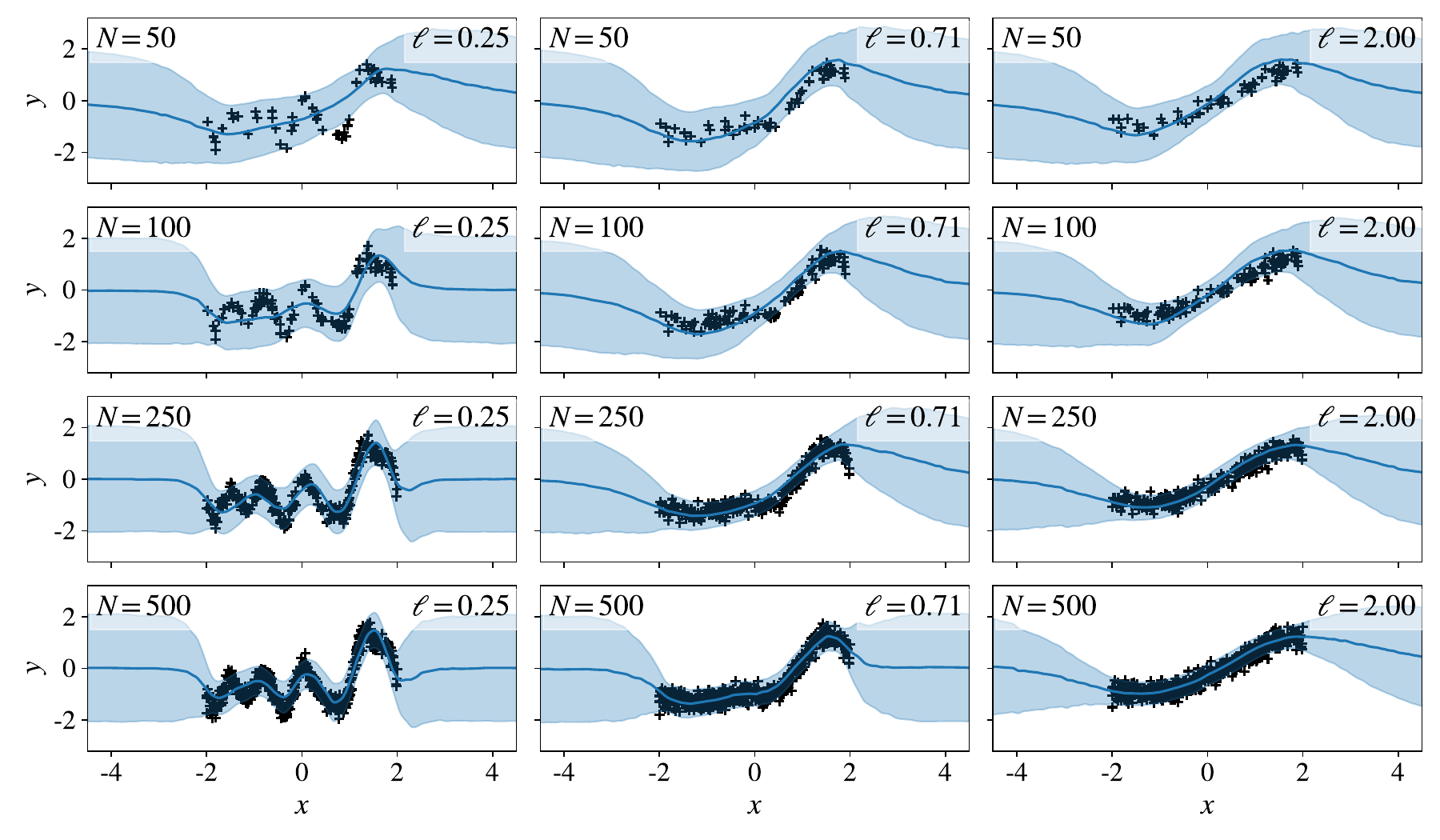}
    \vspace{-5mm}
    \caption{
        Example model fits for the DPConvCNP on the EQ GP task.
        For all the above fits, a single \textit{amortised} DPConvCNP is used, that is a DPConvCNP that has been trained on EQ GP data with randomly chosen lengthscales $\ell \sim \mathcal{U}[0.20, 2.50]$ and random privacy budgets, specifically $\epsilon \sim \mathcal{U}[0.90, 4.00]$ and $\delta = 10^{-3}.$
        The first four rows correspond to $\epsilon = 1.00$ and the last four to $\epsilon = 3.00.$
        We have fixed $\delta = 10^{-3}.$
        Note that column-wise the datasets are fixed, and we are varying the context set size $N.$
    }
    \label{fig:synthetic-fits-eq-1}
\end{figure*}

\begin{figure*}[p]
    \centering
    \vspace{10mm}
    \includegraphics[width=\linewidth]{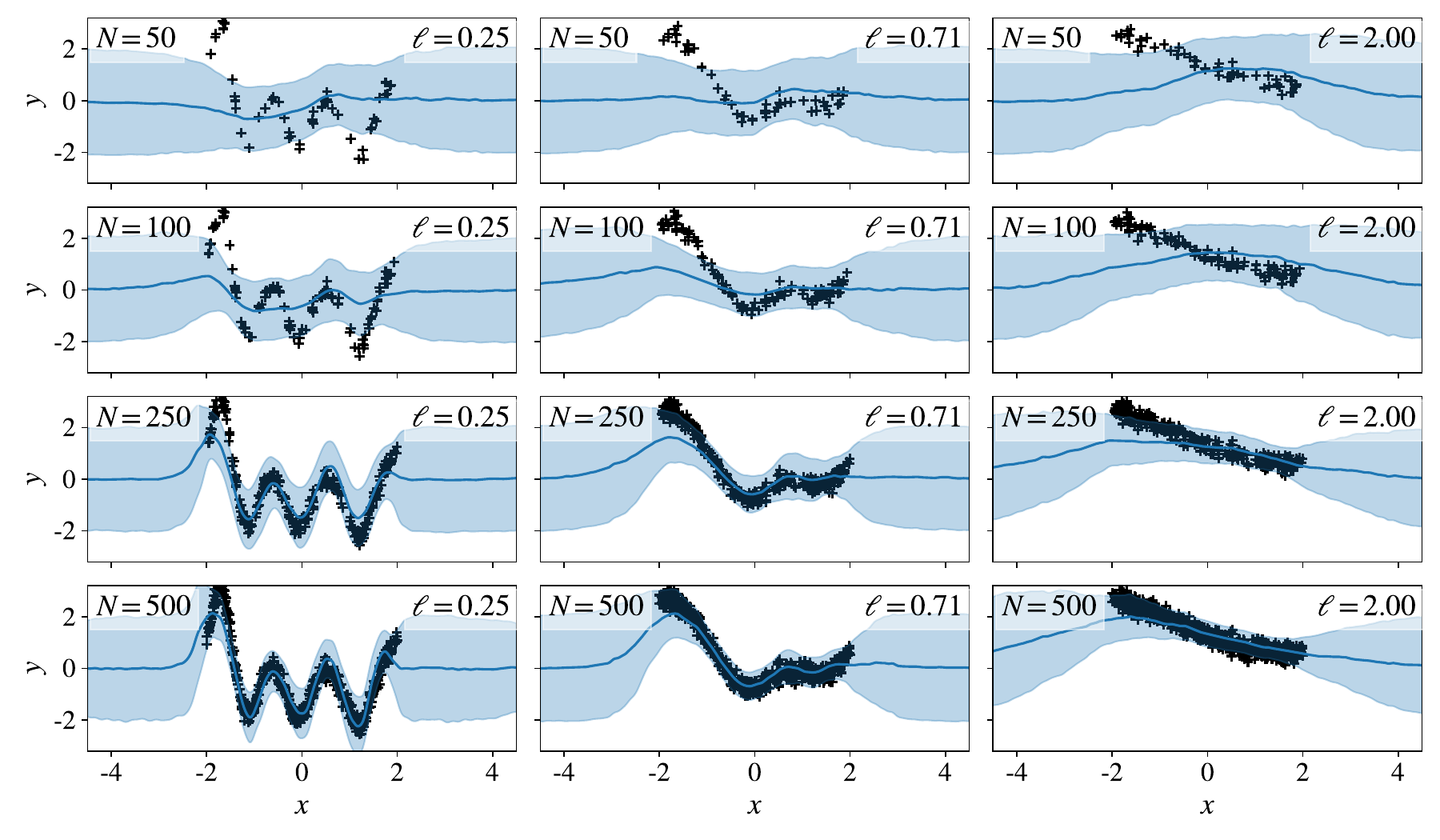}
    \includegraphics[width=\linewidth]{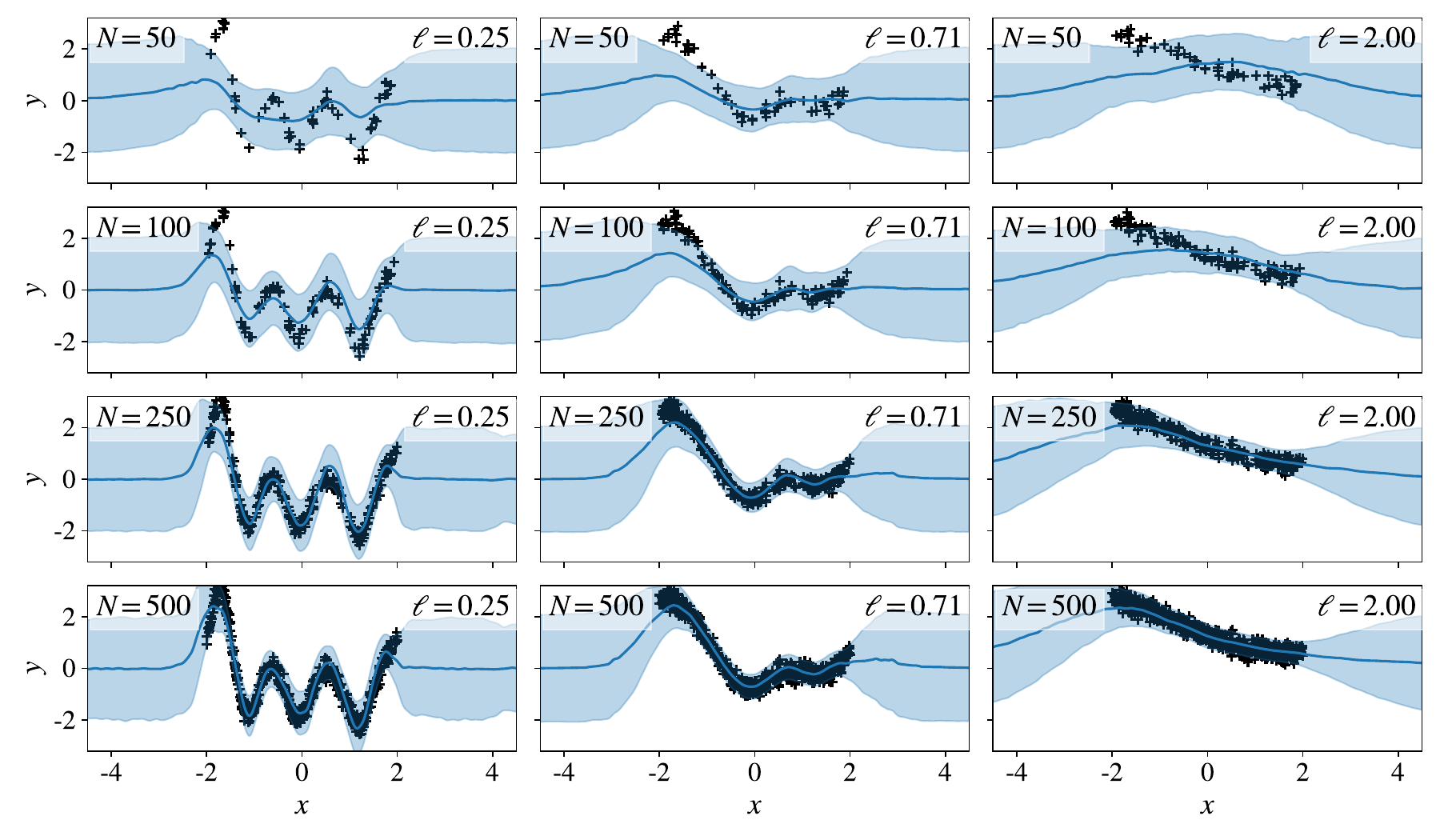}
    \vspace{-5mm}
    \caption{
        Same as \cref{fig:synthetic-fits-eq-1}, but with a different dataset seed.
        Example model fits for the DPConvCNP on the EQ GP task.
        For all the above fits, a single \textit{amortised} DPConvCNP is used, that is a DPConvCNP that has been trained on EQ GP data with randomly chosen lengthscales $\ell \sim \mathcal{U}[0.20, 2.50]$ and random privacy budgets, specifically $\epsilon \sim \mathcal{U}[0.90, 4.00]$ and $\delta = 10^{-3}.$
        The first four rows correspond to $\epsilon = 1.00$ and the last four to $\epsilon = 3.00.$
        We have fixed $\delta = 10^{-3}.$
        Note that column-wise the datasets are fixed, and we are varying the context set size $N.$
    }
    \label{fig:synthetic-fits-eq-2}
\end{figure*}

\begin{figure*}[p]
    \centering
    \vspace{10mm}
    \includegraphics[width=\linewidth]{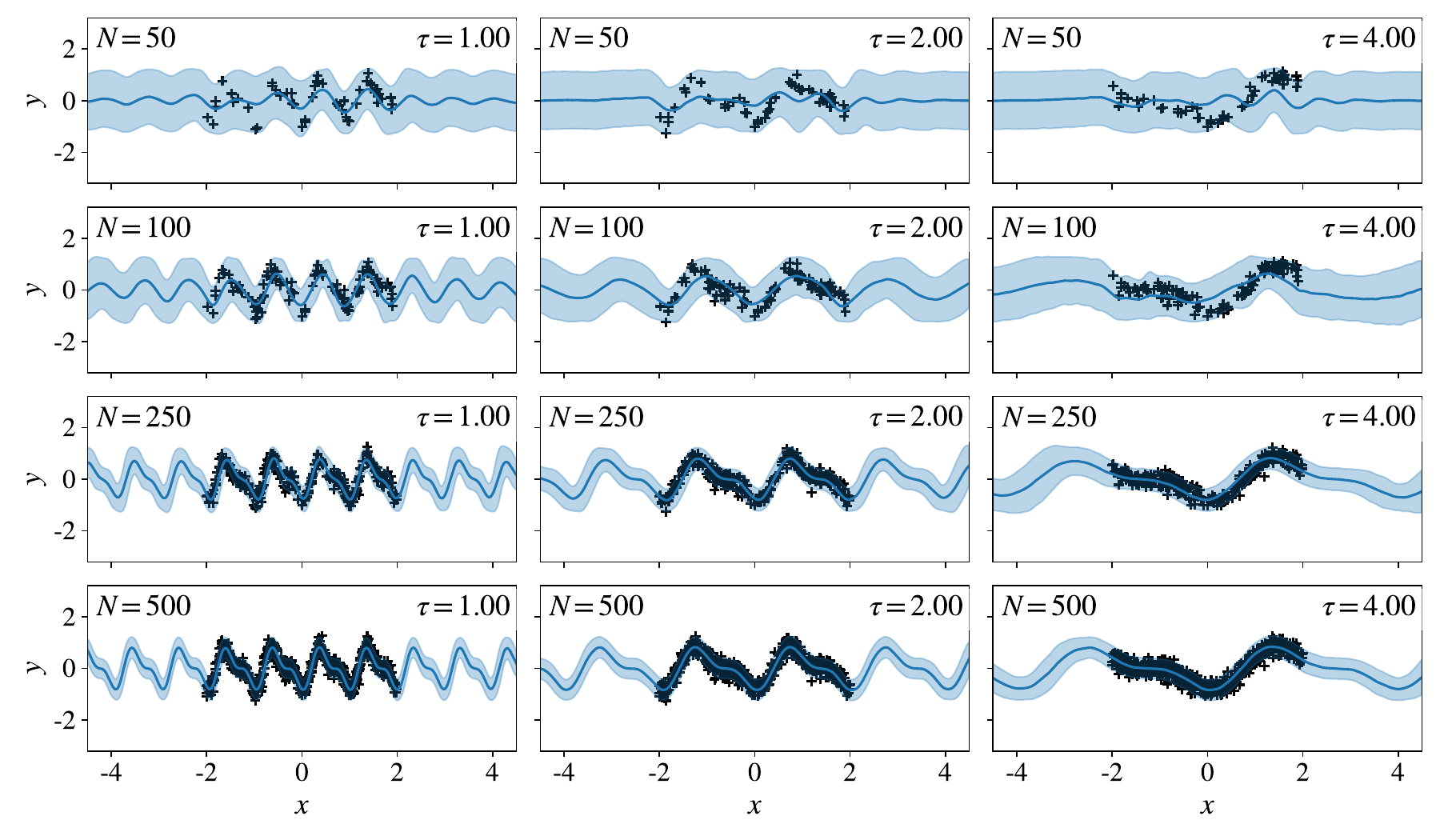}
    \includegraphics[width=\linewidth]{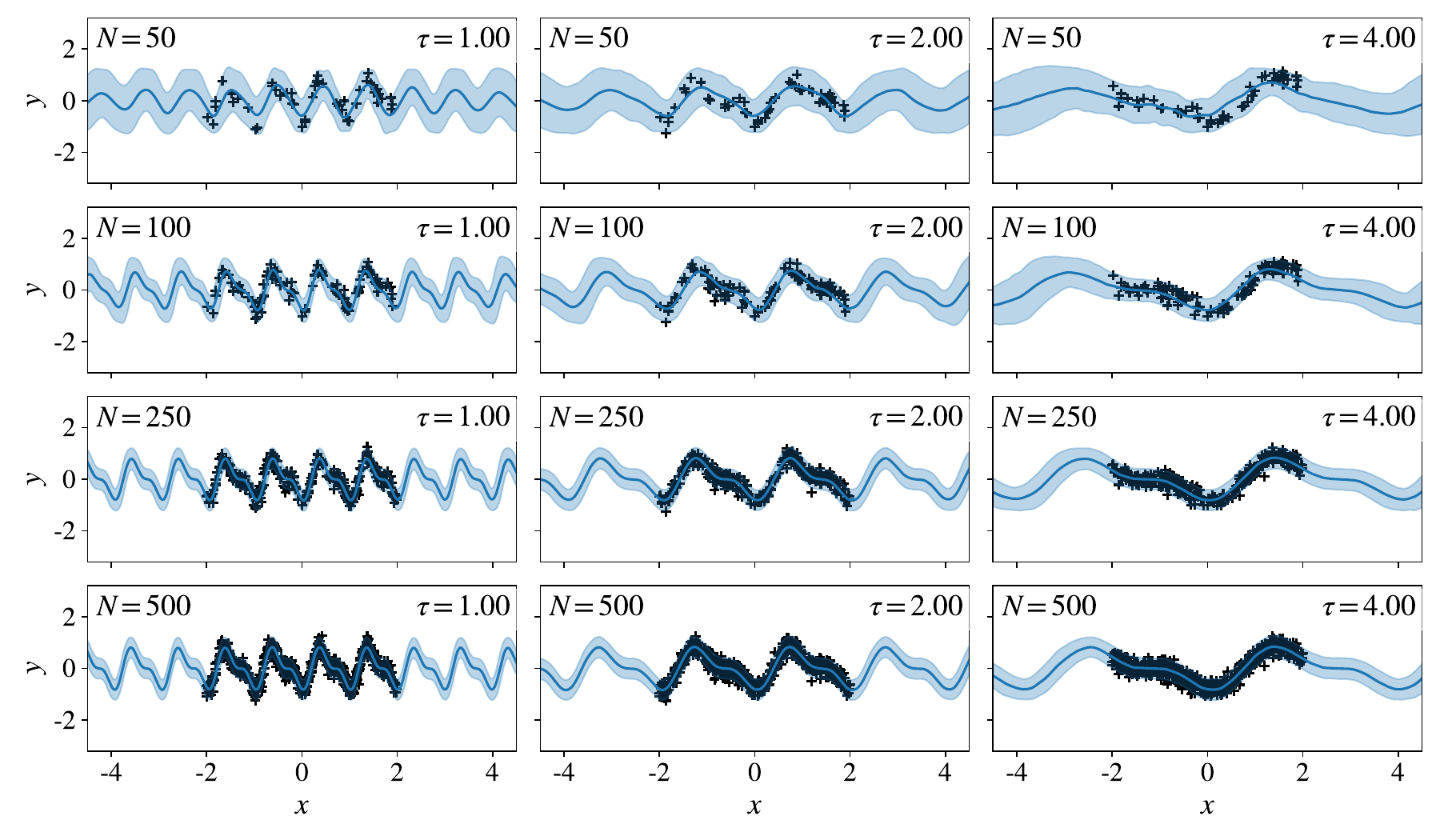}
    \vspace{-5mm}
    \caption{
        Example model fits for the DPConvCNP on the sawtooth task.
        For all the above fits, a single \textit{amortised} DPConvCNP is used, that is a DPConvCNP that has been trained on sawtooth data with randomly chosen periods $\tau^{-1} \sim \mathcal{U}[0.20, 1.25]$ and random privacy budgets, specifically $\epsilon \sim \mathcal{U}[0.90, 4.00]$ and $\delta = 10^{-3}.$
        The first four rows correspond to $\epsilon = 1.00$ and the last four to $\epsilon = 3.00.$
        We have fixed $\delta = 10^{-3}.$
        Note that column-wise the datasets are fixed, and we are varying the context set size $N.$
    }
    \label{fig:synthetic-fits-sawtooth-1}
\end{figure*}

\begin{figure*}[p]
    \centering
    \vspace{10mm}
    \includegraphics[width=\linewidth]{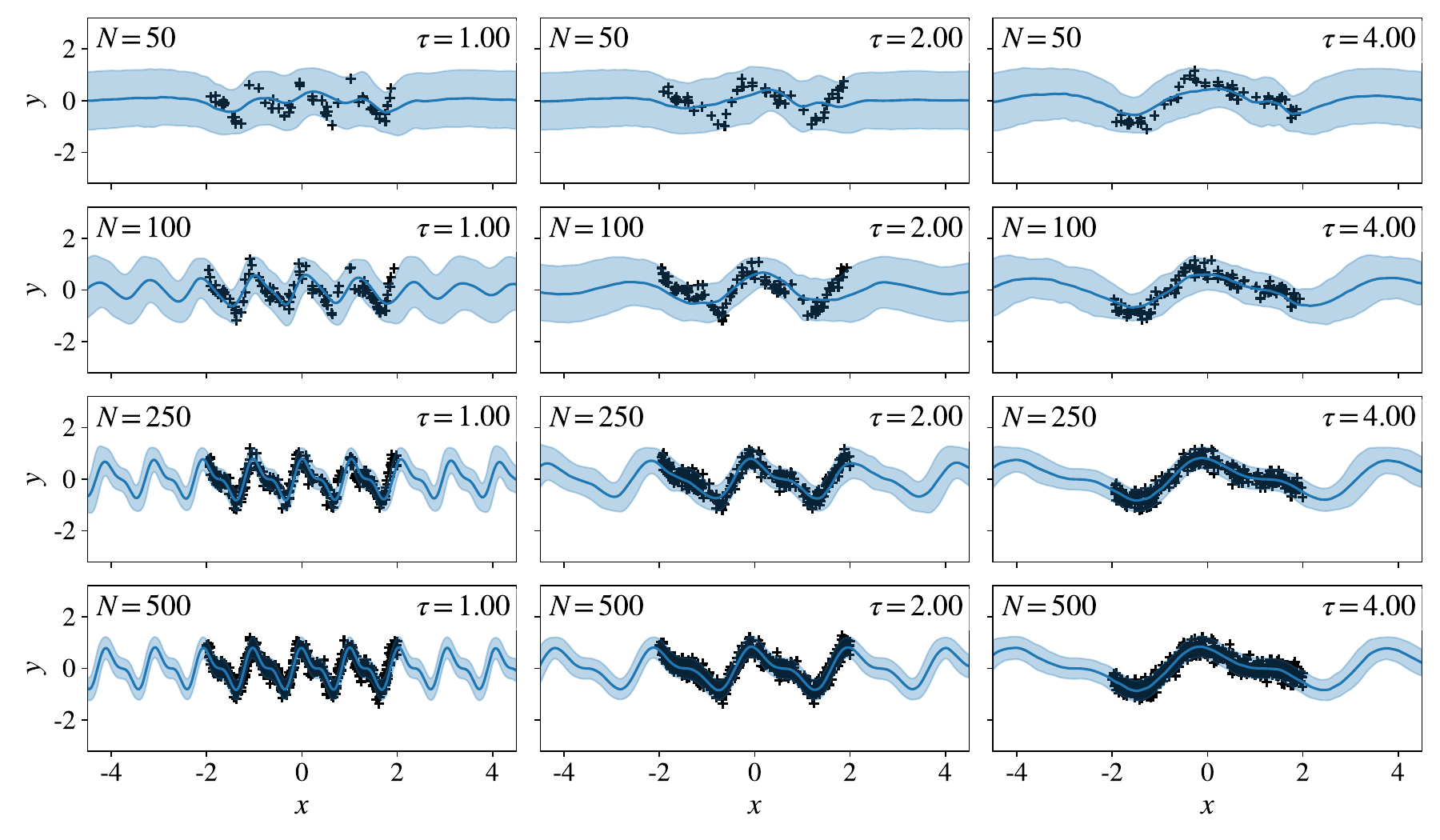}
    \includegraphics[width=\linewidth]{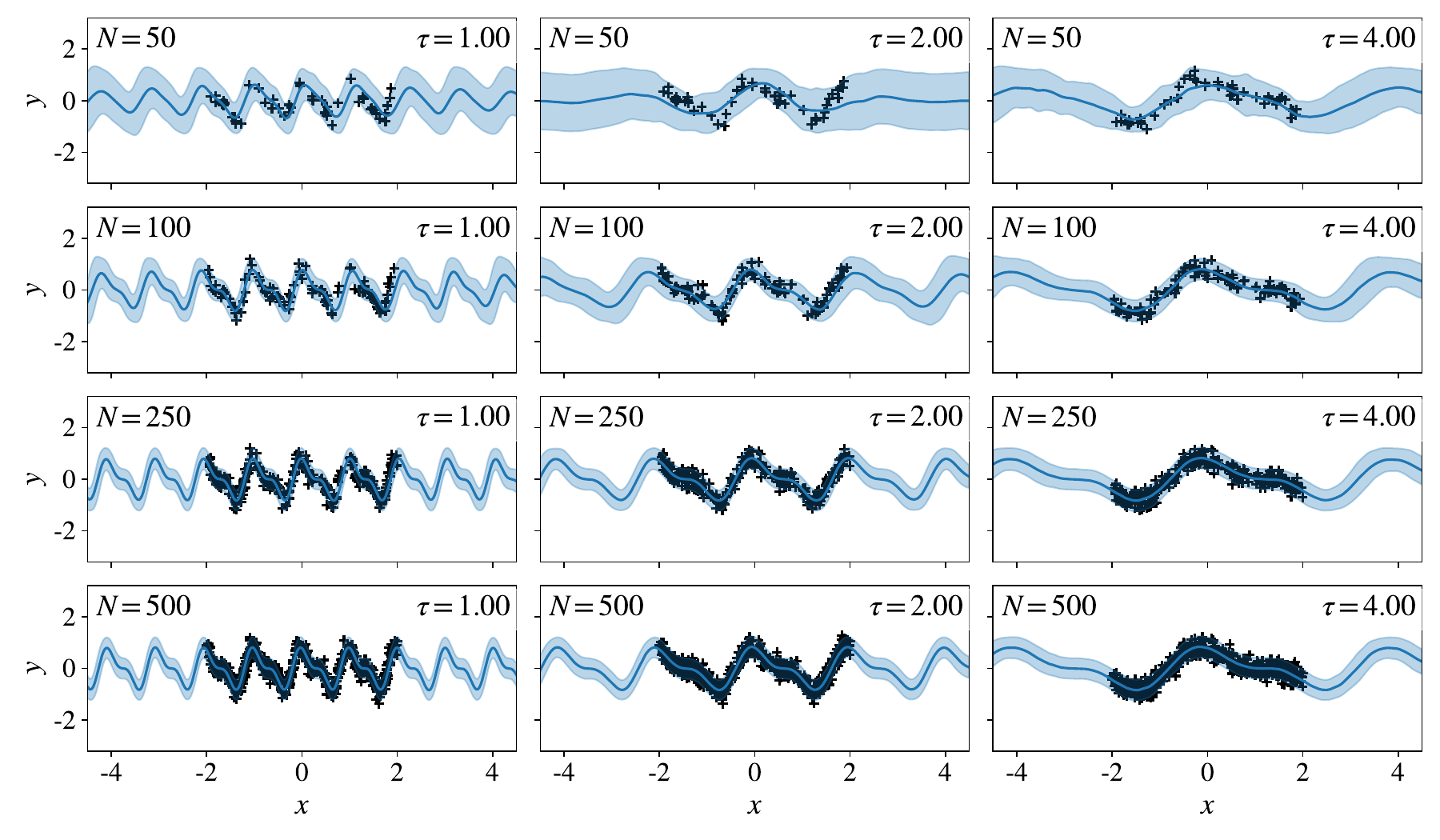}
    \vspace{-5mm}
    \caption{
        Same as \cref{fig:synthetic-fits-sawtooth-1}, but with a different dataset seed.
        Example model fits for the DPConvCNP on the sawtooth task.
        For all the above fits, a single \textit{amortised} DPConvCNP is used, that is a DPConvCNP that has been trained on sawtooth data with randomly chosen periods $\tau^{-1} \sim \mathcal{U}[0.20, 1.25]$ and random privacy budgets, specifically $\epsilon \sim \mathcal{U}[0.90, 4.00]$ and $\delta = 10^{-3}.$
        The first four rows correspond to $\epsilon = 1.00$ and the last four to $\epsilon = 3.00.$
        We have fixed $\delta = 10^{-3}.$
        Note that column-wise the datasets are fixed, and we are varying the context set size $N.$
    }
    \label{fig:synthetic-fits-sawtooth-2}
\end{figure*}

\newpage
\pagenumbering{gobble}

\vspace{80mm}
\begin{figure}[h!]
    \centering
    \hspace{-50mm}
    \begin{subfigure}{0.65\textwidth}
        \centering
        \includegraphics[width=\linewidth]{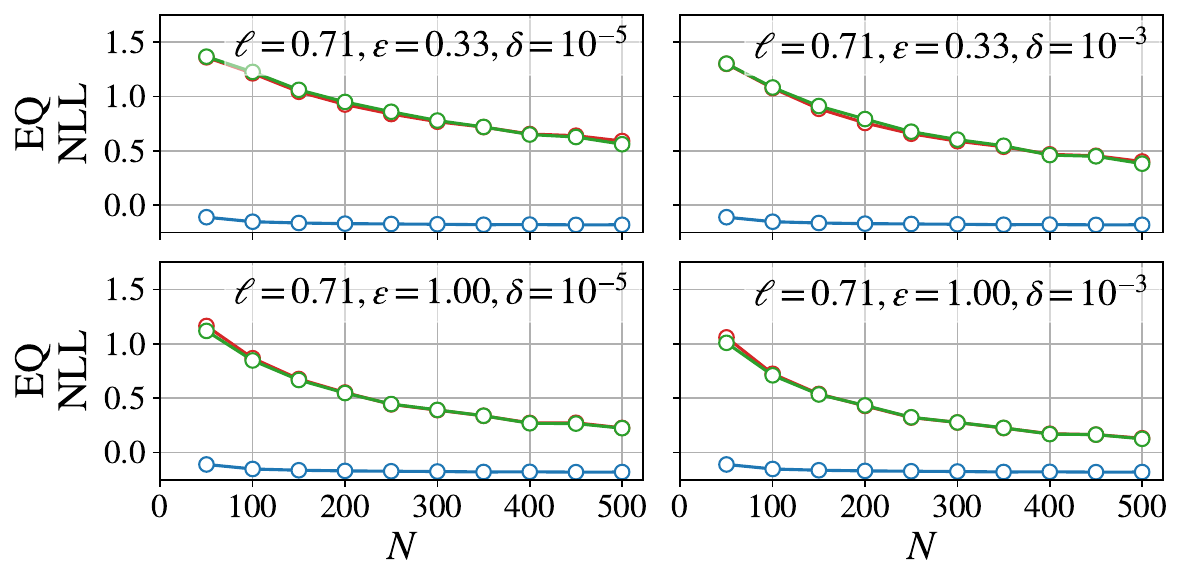}
    \end{subfigure}
    \begin{subfigure}{0.65\textwidth}
        \centering
        \includegraphics[width=\linewidth]{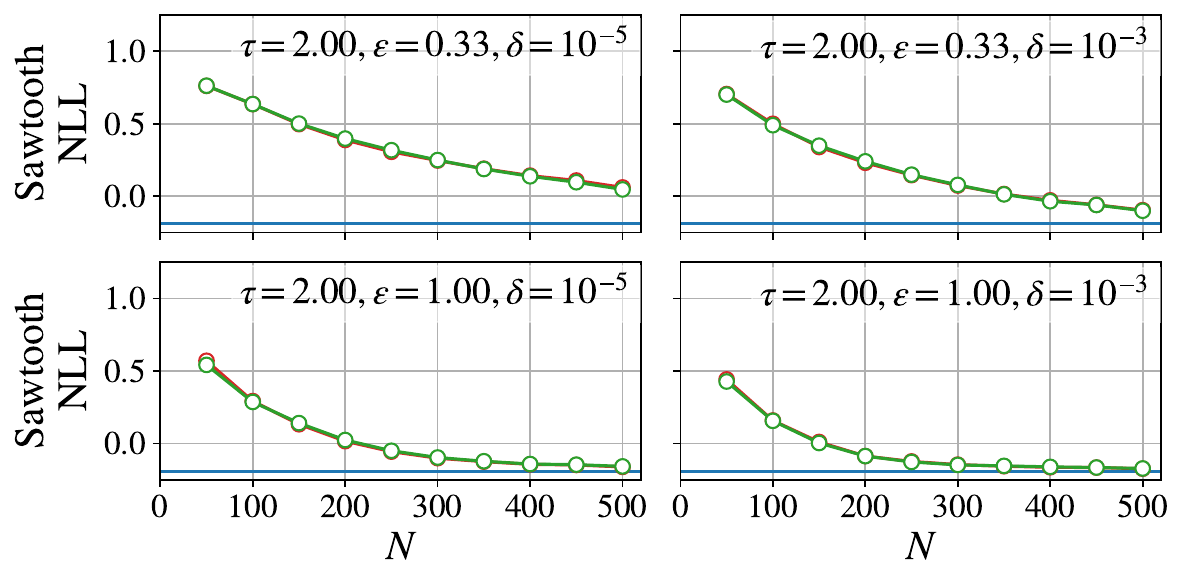}
    \end{subfigure}
    \hspace{-60mm}
    \newline
    \includegraphics[width=\linewidth]{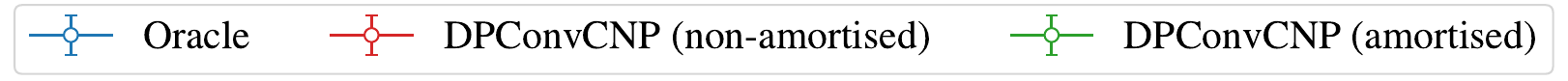}
\caption{
    Additional results using the DPConvCNP on the EQ and sawtooth synthetic tasks with stricter DP parameters, namely all combinations of $\epsilon = \{\sfrac{1}{3}, 1\}$ and $\delta = \{10^{-5}, 10^{-3}\}.$
    The overall setup in this figure is identical to that in \cref{fig:synthetic}, except the amortised DPConvCNP is trained on randomly chosen $\epsilon \sim \mathcal{U}[\sfrac{1}{3}, 1]$ and fixed $\delta = 10^{-5}$ or $10^{-3},$ and the non-amortised DPConvCNP models are trained on $\epsilon$ and $\delta$ values as indicated on the plots.
    Then, both amortised and non-amortised models are evaluated with the parameters shown on the plots.
    The DP-SVGP baseline was not run due to time constraints in the rebuttal period: it is significantly slower and more challenging to optimise than the DPConvCNP.
    We note that the amortisation gap, due to training a model to handle a continuous range of $\epsilon$ values, is negligible.
    We also note that as the number of context points $N$ increases, the performance of the DPConvCNP approaches that of the oracle predictors.
}
\end{figure}

\begin{figure}[h!]
    \centering
    \hspace{-70mm}
    \begin{subfigure}{0.65\textwidth}
        \centering
        \includegraphics[width=\linewidth]{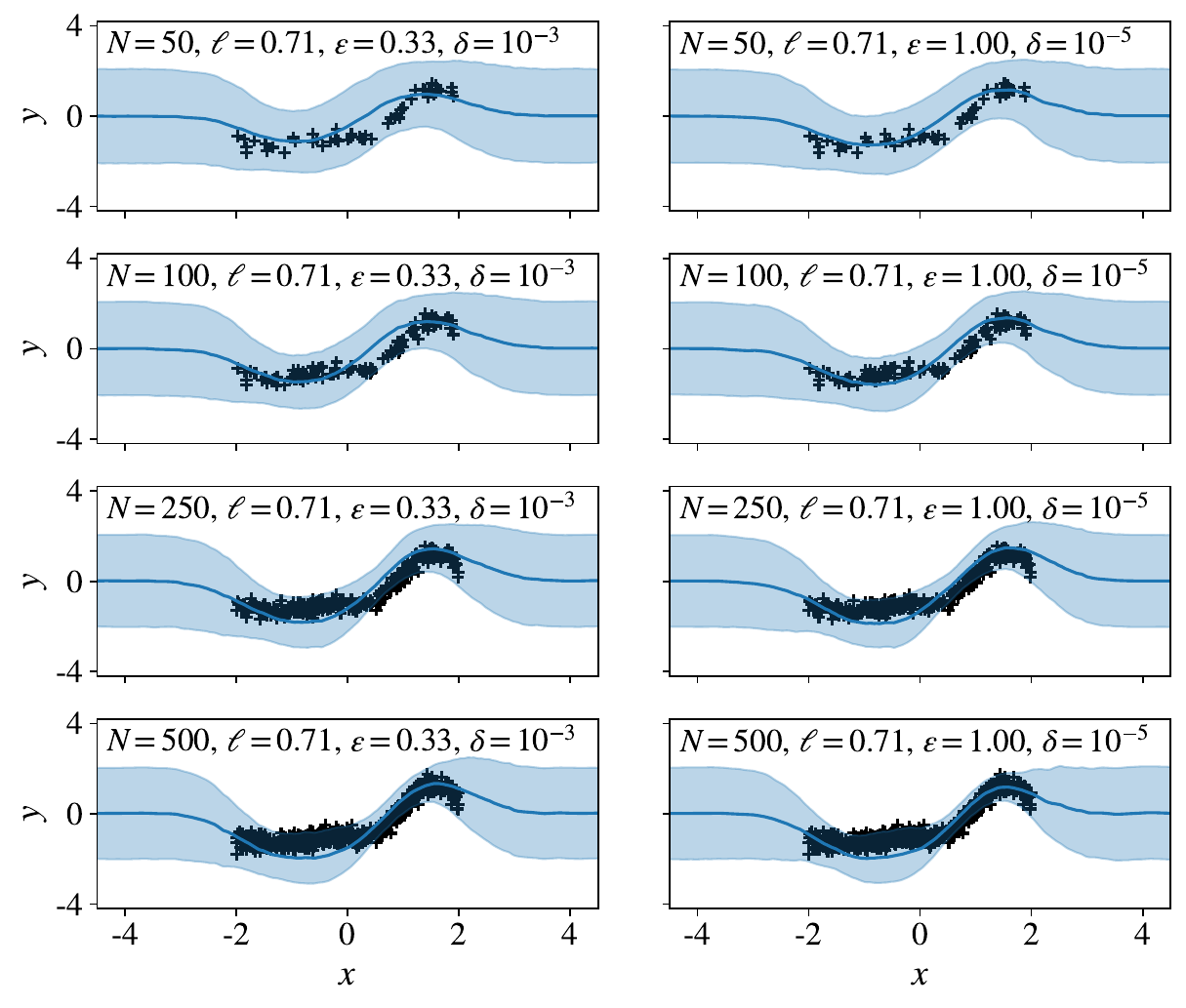}
    \end{subfigure}
    \begin{subfigure}{0.65\textwidth}
        \centering
        \includegraphics[width=\linewidth]{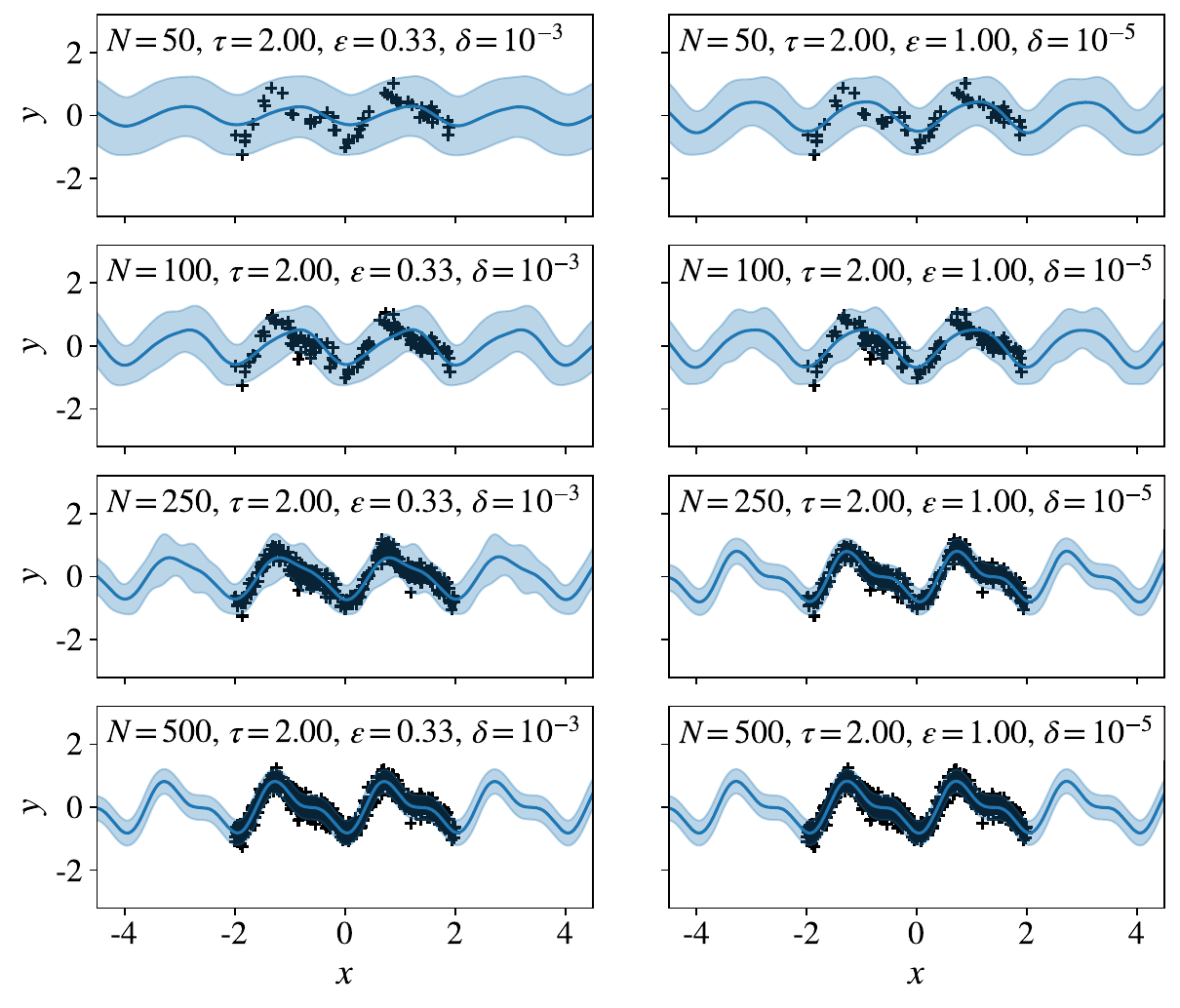}
    \end{subfigure}
    \hspace{-60mm}
\caption{
    Illustrations of model fits on the synthetic EQ and sawtooth tasks, using stricter DP paramters, for different context sizes $N.$
    Left: model fits of amortised DPConvCNPs trained on EQ data using $\epsilon \sim \mathcal{U}[\sfrac{1}{3}, 1]$ and fixed $\delta = 10^{-3}$ (first column) or $\delta = 10^{-5}$ (second column) and evaluated on the DP parameters shown in the plots.
    Right: same as the left plot, except the data generating process is the sawtooth waveform rather than an EQ Gaussian process.
    We observe that the DPConvCNP produces sensible predictions even under strict privacy settings.
}
\end{figure}

\newpage
\section*{NeurIPS Paper Checklist}

\begin{enumerate}

\item {\bf Claims}
    \item[] Question: Do the main claims made in the abstract and introduction accurately reflect the paper's contributions and scope?
    \item[] Answer: \answerYes{} 
    \item[] Justification: The main claims of the paper, summarised at the end of the introduction
    accurately reflect the contributions of the paper made in Sections~\ref{sec:convcnp_privacy}
    and \ref{sec:experiments}.
    \item[] Guidelines:
    \begin{itemize}
        \item The answer NA means that the abstract and introduction do not include the claims made in the paper.
        \item The abstract and/or introduction should clearly state the claims made, including the contributions made in the paper and important assumptions and limitations. A No or NA answer to this question will not be perceived well by the reviewers. 
        \item The claims made should match theoretical and experimental results, and reflect how much the results can be expected to generalize to other settings. 
        \item It is fine to include aspirational goals as motivation as long as it is clear that these goals are not attained by the paper. 
    \end{itemize}

\item {\bf Limitations}
    \item[] Question: Does the paper discuss the limitations of the work performed by the authors?
    \item[] Answer: \answerYes{} 
    \item[] Justification: Limitations are discussed in Section~\ref{sec:limitations-conclusion}.
    \item[] Guidelines:
    \begin{itemize}
        \item The answer NA means that the paper has no limitation while the answer No means that the paper has limitations, but those are not discussed in the paper. 
        \item The authors are encouraged to create a separate "Limitations" section in their paper.
        \item The paper should point out any strong assumptions and how robust the results are to violations of these assumptions (e.g., independence assumptions, noiseless settings, model well-specification, asymptotic approximations only holding locally). The authors should reflect on how these assumptions might be violated in practice and what the implications would be.
        \item The authors should reflect on the scope of the claims made, e.g., if the approach was only tested on a few datasets or with a few runs. In general, empirical results often depend on implicit assumptions, which should be articulated.
        \item The authors should reflect on the factors that influence the performance of the approach. For example, a facial recognition algorithm may perform poorly when image resolution is low or images are taken in low lighting. Or a speech-to-text system might not be used reliably to provide closed captions for online lectures because it fails to handle technical jargon.
        \item The authors should discuss the computational efficiency of the proposed algorithms and how they scale with dataset size.
        \item If applicable, the authors should discuss possible limitations of their approach to address problems of privacy and fairness.
        \item While the authors might fear that complete honesty about limitations might be used by reviewers as grounds for rejection, a worse outcome might be that reviewers discover limitations that aren't acknowledged in the paper. The authors should use their best judgment and recognize that individual actions in favor of transparency play an important role in developing norms that preserve the integrity of the community. Reviewers will be specifically instructed to not penalize honesty concerning limitations.
    \end{itemize}

\item {\bf Theory Assumptions and Proofs}
    \item[] Question: For each theoretical result, does the paper provide the full set of assumptions and a complete (and correct) proof?
    \item[] Answer: \answerYes{} 
    \item[] Justification: All novel theorems include either the full proof, or a proof idea
    with the reference to the full proof in the Appendix.
    \item[] Guidelines:
    \begin{itemize}
        \item The answer NA means that the paper does not include theoretical results. 
        \item All the theorems, formulas, and proofs in the paper should be numbered and cross-referenced.
        \item All assumptions should be clearly stated or referenced in the statement of any theorems.
        \item The proofs can either appear in the main paper or the supplemental material, but if they appear in the supplemental material, the authors are encouraged to provide a short proof sketch to provide intuition. 
        \item Inversely, any informal proof provided in the core of the paper should be complemented by formal proofs provided in appendix or supplemental material.
        \item Theorems and Lemmas that the proof relies upon should be properly referenced. 
    \end{itemize}

    \item {\bf Experimental Result Reproducibility}
    \item[] Question: Does the paper fully disclose all the information needed to reproduce the main experimental results of the paper to the extent that it affects the main claims and/or conclusions of the paper (regardless of whether the code and data are provided or not)?
    \item[] Answer: \answerYes{} 
    \item[] Justification: Yes, we give all necessary details in the main text and appendix \ref{app:experiment-details}, and provide our implementations in the supplementary material.
    \item[] Guidelines:
    \begin{itemize}
        \item The answer NA means that the paper does not include experiments.
        \item If the paper includes experiments, a No answer to this question will not be perceived well by the reviewers: Making the paper reproducible is important, regardless of whether the code and data are provided or not.
        \item If the contribution is a dataset and/or model, the authors should describe the steps taken to make their results reproducible or verifiable. 
        \item Depending on the contribution, reproducibility can be accomplished in various ways. For example, if the contribution is a novel architecture, describing the architecture fully might suffice, or if the contribution is a specific model and empirical evaluation, it may be necessary to either make it possible for others to replicate the model with the same dataset, or provide access to the model. In general. releasing code and data is often one good way to accomplish this, but reproducibility can also be provided via detailed instructions for how to replicate the results, access to a hosted model (e.g., in the case of a large language model), releasing of a model checkpoint, or other means that are appropriate to the research performed.
        \item While NeurIPS does not require releasing code, the conference does require all submissions to provide some reasonable avenue for reproducibility, which may depend on the nature of the contribution. For example
        \begin{enumerate}
            \item If the contribution is primarily a new algorithm, the paper should make it clear how to reproduce that algorithm.
            \item If the contribution is primarily a new model architecture, the paper should describe the architecture clearly and fully.
            \item If the contribution is a new model (e.g., a large language model), then there should either be a way to access this model for reproducing the results or a way to reproduce the model (e.g., with an open-source dataset or instructions for how to construct the dataset).
            \item We recognize that reproducibility may be tricky in some cases, in which case authors are welcome to describe the particular way they provide for reproducibility. In the case of closed-source models, it may be that access to the model is limited in some way (e.g., to registered users), but it should be possible for other researchers to have some path to reproducing or verifying the results.
        \end{enumerate}
    \end{itemize}

\item {\bf Open access to data and code}
    \item[] Question: Does the paper provide open access to the data and code, with sufficient instructions to faithfully reproduce the main experimental results, as described in supplemental material?
    \item[] Answer: \answerYes{} 
    \item[] Justification: The code is included in the supplementary material, and the 
    Dobe !Kung dataset~\citep{howell2009dobe} is freely available, e.g. through TF datasets \cite{Abadi_2016}.
    \item[] Guidelines:
    \begin{itemize}
        \item The answer NA means that paper does not include experiments requiring code.
        \item Please see the NeurIPS code and data submission guidelines (\url{https://nips.cc/public/guides/CodeSubmissionPolicy}) for more details.
        \item While we encourage the release of code and data, we understand that this might not be possible, so “No” is an acceptable answer. Papers cannot be rejected simply for not including code, unless this is central to the contribution (e.g., for a new open-source benchmark).
        \item The instructions should contain the exact command and environment needed to run to reproduce the results. See the NeurIPS code and data submission guidelines (\url{https://nips.cc/public/guides/CodeSubmissionPolicy}) for more details.
        \item The authors should provide instructions on data access and preparation, including how to access the raw data, preprocessed data, intermediate data, and generated data, etc.
        \item The authors should provide scripts to reproduce all experimental results for the new proposed method and baselines. If only a subset of experiments are reproducible, they should state which ones are omitted from the script and why.
        \item At submission time, to preserve anonymity, the authors should release anonymized versions (if applicable).
        \item Providing as much information as possible in supplemental material (appended to the paper) is recommended, but including URLs to data and code is permitted.
    \end{itemize}

\item {\bf Experimental Setting/Details}
    \item[] Question: Does the paper specify all the training and test details (e.g., data splits, hyperparameters, how they were chosen, type of optimizer, etc.) necessary to understand the results?
    \item[] Answer: \answerYes{} 
    \item[] Justification: Yes, we give all necessary details in the main text and appendix \ref{app:experiment-details}.
    \item[] Guidelines:
    \begin{itemize}
        \item The answer NA means that the paper does not include experiments.
        \item The experimental setting should be presented in the core of the paper to a level of detail that is necessary to appreciate the results and make sense of them.
        \item The full details can be provided either with the code, in appendix, or as supplemental material.
    \end{itemize}

\item {\bf Experiment Statistical Significance}
    \item[] Question: Does the paper report error bars suitably and correctly defined or other appropriate information about the statistical significance of the experiments?
    \item[] Answer: \answerYes{} 
    \item[] Justification: Error bars are reported when appropriate, and documented in 
    the figure captions.
    \item[] Guidelines:
    \begin{itemize}
        \item The answer NA means that the paper does not include experiments.
        \item The authors should answer "Yes" if the results are accompanied by error bars, confidence intervals, or statistical significance tests, at least for the experiments that support the main claims of the paper.
        \item The factors of variability that the error bars are capturing should be clearly stated (for example, train/test split, initialization, random drawing of some parameter, or overall run with given experimental conditions).
        \item The method for calculating the error bars should be explained (closed form formula, call to a library function, bootstrap, etc.)
        \item The assumptions made should be given (e.g., Normally distributed errors).
        \item It should be clear whether the error bar is the standard deviation or the standard error of the mean.
        \item It is OK to report 1-sigma error bars, but one should state it. The authors should preferably report a 2-sigma error bar than state that they have a 96\% CI, if the hypothesis of Normality of errors is not verified.
        \item For asymmetric distributions, the authors should be careful not to show in tables or figures symmetric error bars that would yield results that are out of range (e.g. negative error rates).
        \item If error bars are reported in tables or plots, The authors should explain in the text how they were calculated and reference the corresponding figures or tables in the text.
    \end{itemize}

\item {\bf Experiments Compute Resources}
    \item[] Question: For each experiment, does the paper provide sufficient information on the computer resources (type of compute workers, memory, time of execution) needed to reproduce the experiments?
    \item[] Answer: \answerYes{} 
    \item[] Justification: Yes, these details are provided in \cref{app:compute}.
    \item[] Guidelines:
    \begin{itemize}
        \item The answer NA means that the paper does not include experiments.
        \item The paper should indicate the type of compute workers CPU or GPU, internal cluster, or cloud provider, including relevant memory and storage.
        \item The paper should provide the amount of compute required for each of the individual experimental runs as well as estimate the total compute. 
        \item The paper should disclose whether the full research project required more compute than the experiments reported in the paper (e.g., preliminary or failed experiments that didn't make it into the paper). 
    \end{itemize}
    
\item {\bf Code Of Ethics}
    \item[] Question: Does the research conducted in the paper conform, in every respect, with the NeurIPS Code of Ethics \url{https://neurips.cc/public/EthicsGuidelines}?
    \item[] Answer: \answerYes{} 
    \item[] Justification: The paper conforms to the Code of Ethics.
    \item[] Guidelines:
    \begin{itemize}
        \item The answer NA means that the authors have not reviewed the NeurIPS Code of Ethics.
        \item If the authors answer No, they should explain the special circumstances that require a deviation from the Code of Ethics.
        \item The authors should make sure to preserve anonymity (e.g., if there is a special consideration due to laws or regulations in their jurisdiction).
    \end{itemize}

\item {\bf Broader Impacts}
    \item[] Question: Does the paper discuss both potential positive societal impacts and negative societal impacts of the work performed?
    \item[] Answer: \answerYes{} 
    \item[] Justification: Broader impacts are discussed in Section~\ref{sec:limitations-conclusion}.
    \item[] Guidelines:
    \begin{itemize}
        \item The answer NA means that there is no societal impact of the work performed.
        \item If the authors answer NA or No, they should explain why their work has no societal impact or why the paper does not address societal impact.
        \item Examples of negative societal impacts include potential malicious or unintended uses (e.g., disinformation, generating fake profiles, surveillance), fairness considerations (e.g., deployment of technologies that could make decisions that unfairly impact specific groups), privacy considerations, and security considerations.
        \item The conference expects that many papers will be foundational research and not tied to particular applications, let alone deployments. However, if there is a direct path to any negative applications, the authors should point it out. For example, it is legitimate to point out that an improvement in the quality of generative models could be used to generate deepfakes for disinformation. On the other hand, it is not needed to point out that a generic algorithm for optimizing neural networks could enable people to train models that generate Deepfakes faster.
        \item The authors should consider possible harms that could arise when the technology is being used as intended and functioning correctly, harms that could arise when the technology is being used as intended but gives incorrect results, and harms following from (intentional or unintentional) misuse of the technology.
        \item If there are negative societal impacts, the authors could also discuss possible mitigation strategies (e.g., gated release of models, providing defenses in addition to attacks, mechanisms for monitoring misuse, mechanisms to monitor how a system learns from feedback over time, improving the efficiency and accessibility of ML).
    \end{itemize}
    
\item {\bf Safeguards}
    \item[] Question: Does the paper describe safeguards that have been put in place for responsible release of data or models that have a high risk for misuse (e.g., pretrained language models, image generators, or scraped datasets)?
    \item[] Answer: \answerNA{} 
    \item[] Justification: The paper does not pose such risks.
    \item[] Guidelines:
    \begin{itemize}
        \item The answer NA means that the paper poses no such risks.
        \item Released models that have a high risk for misuse or dual-use should be released with necessary safeguards to allow for controlled use of the model, for example by requiring that users adhere to usage guidelines or restrictions to access the model or implementing safety filters. 
        \item Datasets that have been scraped from the Internet could pose safety risks. The authors should describe how they avoided releasing unsafe images.
        \item We recognize that providing effective safeguards is challenging, and many papers do not require this, but we encourage authors to take this into account and make a best faith effort.
    \end{itemize}

\item {\bf Licenses for existing assets}
    \item[] Question: Are the creators or original owners of assets (e.g., code, data, models), used in the paper, properly credited and are the license and terms of use explicitly mentioned and properly respected?
    \item[] Answer: \answerYes{} 
    \item[] Justification: License for the Dobe !Kung dataset~\citep{howell2009dobe} is 
    mentioned in the bibliography.
    \item[] Guidelines:
    \begin{itemize}
        \item The answer NA means that the paper does not use existing assets.
        \item The authors should cite the original paper that produced the code package or dataset.
        \item The authors should state which version of the asset is used and, if possible, include a URL.
        \item The name of the license (e.g., CC-BY 4.0) should be included for each asset.
        \item For scraped data from a particular source (e.g., website), the copyright and terms of service of that source should be provided.
        \item If assets are released, the license, copyright information, and terms of use in the package should be provided. For popular datasets, \url{paperswithcode.com/datasets} has curated licenses for some datasets. Their licensing guide can help determine the license of a dataset.
        \item For existing datasets that are re-packaged, both the original license and the license of the derived asset (if it has changed) should be provided.
        \item If this information is not available online, the authors are encouraged to reach out to the asset's creators.
    \end{itemize}

\item {\bf New Assets}
    \item[] Question: Are new assets introduced in the paper well documented and is the documentation provided alongside the assets?
    \item[] Answer: \answerYes{} 
    \item[] Justification: Documentation is included in the code.
    \item[] Guidelines:
    \begin{itemize}
        \item The answer NA means that the paper does not release new assets.
        \item Researchers should communicate the details of the dataset/code/model as part of their submissions via structured templates. This includes details about training, license, limitations, etc. 
        \item The paper should discuss whether and how consent was obtained from people whose asset is used.
        \item At submission time, remember to anonymize your assets (if applicable). You can either create an anonymized URL or include an anonymized zip file.
    \end{itemize}

\item {\bf Crowdsourcing and Research with Human Subjects}
    \item[] Question: For crowdsourcing experiments and research with human subjects, does the paper include the full text of instructions given to participants and screenshots, if applicable, as well as details about compensation (if any)? 
    \item[] Answer: \answerNA{} 
    \item[] Justification: The paper does not include crowdsourcing experiments or research with 
    human subjects.
    \item[] Guidelines:
    \begin{itemize}
        \item The answer NA means that the paper does not involve crowdsourcing nor research with human subjects.
        \item Including this information in the supplemental material is fine, but if the main contribution of the paper involves human subjects, then as much detail as possible should be included in the main paper. 
        \item According to the NeurIPS Code of Ethics, workers involved in data collection, curation, or other labor should be paid at least the minimum wage in the country of the data collector. 
    \end{itemize}

\item {\bf Institutional Review Board (IRB) Approvals or Equivalent for Research with Human Subjects}
    \item[] Question: Does the paper describe potential risks incurred by study participants, whether such risks were disclosed to the subjects, and whether Institutional Review Board (IRB) approvals (or an equivalent approval/review based on the requirements of your country or institution) were obtained?
    \item[] Answer: \answerNA{} 
    \item[] Justification: The paper does not include crowdsourcing experiments or research with 
    human subjects.
    \item[] Guidelines:
    \begin{itemize}
        \item The answer NA means that the paper does not involve crowdsourcing nor research with human subjects.
        \item Depending on the country in which research is conducted, IRB approval (or equivalent) may be required for any human subjects research. If you obtained IRB approval, you should clearly state this in the paper. 
        \item We recognize that the procedures for this may vary significantly between institutions and locations, and we expect authors to adhere to the NeurIPS Code of Ethics and the guidelines for their institution. 
        \item For initial submissions, do not include any information that would break anonymity (if applicable), such as the institution conducting the review.
    \end{itemize}

\end{enumerate}

\end{document}